\documentclass[acmsmall,natbib,screen=true]{acmart}

\PassOptionsToPackage{dvipsnames}{xcolor}

\usepackage{dsfont}
\usepackage{acronym}
\usepackage{amsmath}
\usepackage[noend]{algorithmic}
\usepackage{algorithm}
\usepackage{bbm}
\usepackage{beramono}
\usepackage{bm}
\usepackage{booktabs}
\usepackage[skip=0pt]{caption}
\usepackage{cleveref}
\usepackage[T1]{fontenc}
\usepackage{graphicx}
\usepackage{hyperref}
\usepackage{listings}
\usepackage{multirow}
\usepackage{natbib}
\usepackage{subcaption}
\usepackage{tabularx}
\usepackage[dvipsnames]{xcolor}
\usepackage{enumerate}
\usepackage[inline]{enumitem}
\usepackage{numprint}
\usepackage[normalem]{ulem}
\usepackage{mleftright}
\usepackage{balance}
\usepackage[all]{nowidow}
\usepackage{amsthm}

\acmJournal{TOIS}
\acmYear{2023}
\acmVolume{0}
\acmNumber{0}
\acmArticle{0}
\acmMonth{0}
\acmPrice{}
\acmDOI{10.1145/3569453}

\setcopyright{CC}

\begin{CCSXML}
<ccs2012>
<concept>
<concept_id>10002951.10003317.10003338.10003343</concept_id>
<concept_desc>Information systems~Learning to rank</concept_desc>
<concept_significance>500</concept_significance>
</concept>
</ccs2012>
\end{CCSXML}

\ccsdesc[500]{Information systems~Learning to rank}

\keywords{Unbiased Learning to Rank; Counterfactual Learning; Position Bias}

\definecolor{rj}{RGB}{0, 150, 0}
\definecolor{mdr}{RGB}{200, 0, 0}
\definecolor{ho}{RGB}{0, 50, 150}

\hyphenation{Lambda-MART}
\hyphenation{Lambda-Rank}
\hyphenation{Lambda-Loss}

\acrodef{IR}{information retrieval}
\acrodef{LTR}{learning-to-rank}
\acrodef{ARP}{average relevance position}
\acrodef{DCG}{discounted cumulative gain}
\acrodef{EM}{expectation-maximization}
\acrodef{CTR}{click-through-rate}
\acrodef{RCTR}{relevant-click-through-rate}
\acrodef{COLTR}{counterfactual online learning to rank}
\acrodef{PDGD}{pairwise differentiable gradient descent}
\acrodef{NRCTR}{normalized RCTR}
\acrodef{NDCG}{normalized DCG}

\acrodef{ECP}{expected number of clicked and preferred items}

\acrodef{PL}{Plackett-Luce}

\acrodef{IPS}{inverse-propensity-scoring}
\acrodef{DR}{doubly-robust}
\acrodef{DM}{direct-method}
\acrodef{RPS}{ratio-propensity-scoring}
\acrodef{CV}{covariate}

\newcommand{\smid}{\,|\,}

\allowdisplaybreaks

\looseness=-1

\author{Harrie Oosterhuis}
\affiliation{%
	\institution{Radboud University}
	\city{Nijmegen}
	\country{The Netherlands}
}
\email{harrie.oosterhuis@ru.nl}

\title[Doubly-Robust Estimation for Correcting Position-Bias in Click Feedback for Unbiased Learning to Rank]{Doubly-Robust Estimation for Correcting Position-Bias\\ in Click Feedback for Unbiased Learning to Rank}

\allowdisplaybreaks

\begin{document}

\begin{abstract}
Clicks on rankings suffer from position-bias: generally items on lower ranks are less likely to be examined -- and thus clicked -- by users, in spite of their actual preferences between items.
The prevalent approach to unbiased click-based \ac{LTR} is based on counterfactual \ac{IPS} estimation.
In contrast with general reinforcement learning, counterfactual \ac{DR} estimation has not been applied to click-based \ac{LTR} in previous literature.

In this paper, we introduce a novel \ac{DR} estimator that is the first \ac{DR} approach specifically designed for position-bias.
The difficulty with position-bias is that the treatment -- user examination -- is not directly observable in click data.
As a solution, our estimator uses the expected treatment per rank, instead of the actual treatment that existing \ac{DR} estimators use.
Our novel \ac{DR} estimator has more robust unbiasedness conditions than the existing \ac{IPS} approach, and in addition, provides enormous decreases in variance: our experimental results indicate it requires several orders of magnitude fewer datapoints to converge at optimal performance.
For the unbiased \ac{LTR} field, our \ac{DR} estimator contributes both increases in state-of-the-art performance and the most robust theoretical guarantees of all known \ac{LTR} estimators.
\end{abstract}

\maketitle

\acresetall

\section{Introduction}

The basis of recommender systems and search engines are ranking models that aim to provide users with rankings that meet their preferences or help in their search task~\citep{liu2009learning}.
The performance of a ranking model is vitally important to the quality of the user experience with a search or recommendation system.
Accordingly, the field of \acf{LTR} concerns methods that optimize ranking models~\citep{liu2009learning}; click-based \ac{LTR} uses logged user interactions to supervise its optimization~\citep{joachims2002optimizing}.
However, clicks are biased indicators of user preference~\citep{joachims2017accurately, radlinski2008does} because there are many factors beside user preference that influence click behavior.
Most importantly, the rank at which an item is displayed is known to have an enormous effect on whether it will be clicked or not~\citep{craswell2008experimental}.
Generally, users do not consider all the items that are presented in a ranking, and instead, are more likely to examine items at the top of the ranking.
Consequently, lower-ranked items are less likely to be clicked by users, regardless of whether users actually prefer these items~\citep{joachims2017unbiased}.
Therefore, clicks can be more reflective of where an item was displayed during the gathering of data than whether  users prefer it.
This form of bias is referred to as \emph{position-bias}~\citep{wang2018position, craswell2008experimental, agarwal2019estimating}; it is extremely prevalent in user clicks on rankings.
Correspondingly, this has lead to the introduction of unbiased \ac{LTR}: methods for click-based optimization that mitigate the effects of position-bias.
\citet{wang2016learning} and \citet{joachims2017unbiased} proposed using \ac{IPS} estimators~\citep{horvitz1952generalization} to correct for position-bias.
By treating the examination probabilities as propensities, \ac{IPS} can estimate ranking metrics unbiasedly w.r.t.\ position-bias.
This has lead to the inception of the unbiased \ac{LTR} field, in which \ac{IPS} estimation has remained the basis for most state-of-the-art methods~\citep{oosterhuis2020topkrankings, oosterhuis2021onlinecounterltr, agarwal2019counterfactual, agarwal2019addressing, vardasbi2020trust}.
However, variance is a large issue with \ac{IPS}-based LTR and remains an obstacle for its adoption in real-world applications~\citep{oosterhuis2021onlinecounterltr}.

Outside of \ac{LTR}, \ac{DR} estimators are a widely used alternative for \ac{IPS} estimation~\citep{kang2007demystifying, robins1994estimation}, for instance, for optimization in contextual bandit problems~\citep{dudik2014doubly}.
The \ac{DR} estimator combines an \ac{IPS} estimate with the predictions of a regression model, such that it is unbiased when per treatment either: the estimated propensity or the regression model is accurate~\citep{kang2007demystifying}.
Additionally, the \ac{DR} estimator can also bring large decreases in variance if the regression model is adequately accurate~\citep{dudik2014doubly}.
Unfortunately, existing \ac{DR} estimators are not directly applicable to the unbiased \ac{LTR} problem, since the treatment variable -- that indicates whether an item was examined or not -- cannot be observed in the data.
This is the characteristic problem of position-biased clicks: when an item is not clicked, we cannot determine whether the user chose not to interact or the user did not examine it in the first place.
Consequently, the unbiased \ac{LTR} field has not progressed beyond the usage of \ac{IPS} estimation.
 
Our main contribution is the first \ac{DR} estimator that is specifically designed to perform unbiased \ac{LTR} from position-biased click data.
Instead of using the actual treatment: user examination, which is unobservable in click data, our novel estimator uses the expectation of treatment per rank to construct a covariate instead.
Similar to \ac{DR} estimators for other tasks, it combines the preference predictions of a regression model with \ac{IPS} estimation.
Unlike \ac{IPS} estimators which are only unbiased with accurate knowledge of the logging policy, our \ac{DR} estimator requires \emph{either} the correct logging policy propensity \emph{or} an accurate regression estimate per item.
As a result, our \ac{DR} estimator has less strict requirements for unbiasedness than \ac{IPS} estimation. 
Moreover, it can also provide enormous decreases in variance compared to \ac{IPS}: our experimental results indicate that the \ac{DR} estimator requires several orders of magnitude fewer datapoints to converge at optimal performance.
In all tested top-5 ranking scenarios, it needs less than $10^6$ logged interactions to reach the performance that \ac{IPS} reaches at $10^9$ logged interactions.
Additionally, when compared to other state-of-the-art methods \ac{DR} also provides better performance across all tested scenarios.
Therefore, the introduction of \ac{DR} estimation for unbiased \ac{LTR} contributes
 the first unbiased \ac{LTR} estimator that is provenly more robust than \ac{IPS}, while also improving state-of-the-art performance on benchmark unbiased \ac{LTR} datasets.

\subsection{Structure of the Paper}
The remained of this work is structured as follows:

Section~\ref{sec:relatedwork} discusses relevant existing work on click-based and unbiased \ac{LTR} and earlier methods that have applied \ac{DR} estimation to clicks.
Then Section~\ref{sec:problemdefinition} explains our \ac{LTR} problem setting by describing our assumptions about user behavior, how click data is logged and our \ac{LTR} goal.
Our background section is divided in two parts:
Section~\ref{background:generic} provides background on counterfactual estimation methods in general, not specific to \ac{LTR}.
These generic methods are introduced so that we can later contrast them with \ac{LTR} specific methods and illustrate the adaptations that are required to deal with position-bias specifically.
Subsequently, Section~\ref{sec:background:ltr} describes the existing \ac{IPS} method that is specifically designed for \ac{LTR} and position-bias.
Furthermore, it discusses existing regression loss estimation and earlier \ac{DR} estimation methods that have been applied to clicks.

Our novel methods, the novel \ac{DR} estimator designed to correct position-bias and a novel cross-entropy loss estimator, are introduced in Section~\ref{sec:method}.
Then Section~\ref{sec:experimentalsetup} details the experiments that were performed to evaluate our novel method, the results of these experiments are presented and discussed in Section~\ref{sec:results}.
Finally, Section~\ref{sec:conclusion} provides a conclusion of this work, followed by the appendices that provide extended proofs for the theoretical claims of the work.

\section{Related Work}
\label{sec:relatedwork}

This section provides a brief overview of the existing literature in the unbiased \ac{LTR} field, in addition, relevant work on dealing with position-biased clicks and existing methods that apply \ac{DR} estimation to click data outside of the \ac{LTR} field are also discussed.

Optimizing ranking models based on click-data is a well-established concept~\citep{joachims2002optimizing}.
Early methods took an online dueling-bandit approach~\citep{yue2009interactively, schuth2016mgd} and later an online pairwise approach~\citep{oosterhuis2018differentiable}.
The first \ac{LTR} method with theoretical guarantees of unbiasedness was introduced by \citet{wang2016learning} and then generalized by \citet{joachims2017unbiased}.
They assume the probability that a user examines an item only depends on the rank at which it is displayed and that clicks only occur on examined items~\citep{wang2018position, craswell2008experimental}.
Then using counterfactual \ac{IPS} estimation they correct for the selection bias imposed by the examination probabilities.
The introduction of this approach launched the unbiased \ac{LTR} field:
\citet{agarwal2019counterfactual} expanded the approach for optimizing neural networks.
\citet{oosterhuis2020topkrankings} generalized the approach to also correct for the \emph{item-selection-bias} in top-$k$ ranking settings by basing the propensities on a stochastic logging policy.
\citet{agarwal2019addressing} showed that user behavior shows an additional \emph{trust-bias}: increased incorrect clicks at higher ranks~\citep{joachims2017accurately}, \citet{vardasbi2020trust} extended the \ac{IPS} estimator with affine corrections to correct for this trust-bias.
\citet{singh2019policy} use \ac{IPS} to optimize for a fair distribution of exposure over items.
\citet{jagerman2020safe} consider safe model deployments by bounding model performance. %
\citet{oosterhuis2021onlinecounterltr} introduced a generalization of the top-$k$ and affine estimators that considers the possibility that the logging policy is updated during the gathering of data.
\citet{wang2021non} proposed a \ac{RPS} estimator that weights pairs of clicked and non-clicked items by their ratio between the propensities.
\ac{RPS} is an extension of \ac{IPS} that introduces bias but also reduces variance.

In contrast with the rest of the field, recent work has proposed some methods that do not rely on \ac{IPS} to the field.
\citet{zhuang2021cross} and \citet{yan2022twotowers} fit predictive models to observed click data that explicitly factorizes relevance and bias factors, while they report promising real-world results, their methods do not provide strong theoretical guarantees w.r.t.\ unbiasedness.
\citet{ovaisi2020correcting} propose an adaptation of Heckman’s two-stage method.
Besides these exceptions and to the best of our knowledge,
all methods in the unbiased \ac{LTR} field are based on \ac{IPS}.

Interestingly, methods for dealing with position-biased clicks have also been developed outside of the unbiased \ac{LTR} field.
\citet{Komiyama2015} and \citet{lagree2016multiple} propose bandit algorithms that use similar \ac{IPS} estimators  for serving ads in multiple on-screen positions at once.
Furthermore, \citet{li2018offline} also propose \ac{IPS} estimators for the unbiased click-based evaluation of ranking models.
This further evidences the widespread usage of \ac{IPS} estimation for correcting position-biased clicks.

Nevertheless, there is previous work that has applied \ac{DR} estimators to position-biased clicks:
\citet{saito2020doubly} proposed a \ac{DR} estimator for post-click conversions that estimates how users treat an item after clicking it.
\citet{kiyohara2022doubly} designed a \ac{DR} estimator for policy-evaluation under cascading click behavior~\citep{vardasbi2020cascade}.
Lastly, \citet{yuan2020unbiased} introduced a \ac{DR} estimator for \ac{CTR} prediction on advertisements placements.
Section~\ref{sec:doublyrobust:prev} will discuss these methods in a bit more depth, and how they differ from the prevalent \ac{IPS} approach to unbiased \ac{LTR} and our proposed \ac{DR} estimator.
Important for our current discussion is that each of these methods tackles a different problem setting than the \ac{LTR} problem setting in this work.
Moreover, the latter two \ac{DR} estimators use corrections based on action propensities, similar to generic counterfactual estimation, and in stark contrast with the examination propensities of unbiased \ac{LTR}.
Finally, these works focus on policy evaluation instead of \ac{LTR}, thus it appears that the effectiveness of \ac{DR} for ranking model optimization is currently still unknown to the field.

\section{Problem Definition}
\label{sec:problemdefinition}
This section describes the assumptions underlying the theory of this paper and details our exact problem setting.
Specifically, it introduces the assumed mathematical model by which clicks are generated, the metric we aim to optimize, and notation to describe logged data.

\subsection{User Behavior Assumptions}
This paper assumes that the probability of a click depends on the user preference w.r.t. item $d$ and the position (also called rank) $k \in \{1,2,\ldots,K\}$ at which $d$ is displayed.
Let $R_d = P(R = 1 \mid d)$ be the probability that a user prefers item $d$ and for each $k$ let $\alpha_k \in [0,1]$ and $\beta_k \in [0,1]$ such that $\alpha_k + \beta_k \in [0, 1]$, the probability of $d$ receiving a click $C \in \{0, 1\}$ when displayed at position $k$ is:
\begin{align}
P(C = 1 \mid d, k) &= \alpha_k P(R = 1 \mid d) + \beta_k = \alpha_k R_d + \beta_k. \label{eq:clickprob}
\end{align}
This assumption has been derived~\citep{vardasbi2020trust} from a more interpretable user-model proposed by \citet{agarwal2019addressing}.
Their model is based on the examination assumption~\citep{richardson2007predicting}: users first examine an item before they interact with it, i.e.\  let $O \in \{0,1\}$ indicate examination then $O = 0 \rightarrow C = 0$.
Additionally, they also incorporate the concept of trust-bias: users are more likely to click against their preferences on higher ranks because of their \emph{trust} in the ranking model. 
This can be modelled by having the probability of a click conditioned on examination vary over $k$:
\begin{equation}
\begin{split}
\epsilon_k^+ &= P(C = 1 \mid R = 1, O = 1, k), \\
\epsilon_k^-  &= P(C = 1 \mid R = 0, O = 1, k).
\end{split}
\label{eq:conditionalclickprob}
\end{equation}
The proposed user model results in the following click probability: 
\begin{equation}
P(C = 1 \mid d, k)
= P(O = 1 \mid k)(\epsilon_k^+R_d  + \epsilon_k^-(1-R_d)),
 \label{eq:longclickprob}
\end{equation}
by comparing Eq.~\ref{eq:clickprob} and~\ref{eq:longclickprob} we see that:
\begin{equation}
\alpha_k = P(O = 1 \mid k)(\epsilon_k^+ -\epsilon_k^-), \qquad
\beta_k = P(O = 1 \mid k)\epsilon_k^-.
\label{eq:alphabeta}
\end{equation}
\citet{agarwal2019addressing} provide empirical results that suggest this user-model is more accurate than the previous model that ignores the trust-bias effect: $\forall k, \;\; \beta_k = 0$~\citep{wang2016learning, joachims2017unbiased, wang2018position}.
Since the assumption in Eq.~\ref{eq:clickprob} is true in both models, our work is applicable to  most settings in earlier unbiased \ac{LTR} work~\citep{wang2016learning, joachims2017unbiased, agarwal2019addressing, vardasbi2020trust, oosterhuis2020topkrankings, singh2019policy, jagerman2020safe, oosterhuis2021onlinecounterltr, ovaisi2020correcting, wang2021non}.

\subsection{Definition of the LTR Goal}
\label{sec:ltrgoal}

The goal of our ranking task is to maximize the probability that a user will click on something they prefer.
Let $\pi$ be the ranking policy to optimize, with $\pi(k \mid d)$ indicating the probability that $\pi$ places $d$ at position $k$ and let $y$ indicate a ranking of size $K$: $y = [ y_1, y_2 , \ldots, y_K]$, lastly, let $D = \{d_1, d_2, \ldots, d_K\}$ be the collection of items to be ranked.
Most ranking metrics are a weighted sum of item relevances, where the weights $\omega_k$ depend on the item positions:
\begin{equation}
\mathcal{R}(\pi)
= \mathbb{E}_{y \sim \pi} \mleft[ \sum^K_{k=1} \omega_k R_{y_k} \mright]
= \sum_{d \in D} R_d \sum^K_{k=1} \pi(k \mid d) \omega_k.
\end{equation}
\Ac{DCG}~\citep{jarvelin2002cumulated} is a very traditional ranking metric: $\omega_{k}^\text{DCG} = \log_2(k+1)^{-1}$, however, \ac{DCG} has no clear interpretation in our assumed user model.
In contrast, we argue that our metric should actually be motivated by our user behavior assumptions, accordingly, this work will use the weights: $\omega_{k} = (\alpha_k + \beta_k)$.
This choice results in an easily interpreted metric; for brevity of notation, we first introduce the expected position weight per item:
\begin{equation}
\omega_{d}
= \mathbb{E}_{y \sim \pi}\mleft[ \omega_{k(d)} \mright]
= \mathbb{E}_{y \sim \pi}\mleft[ \alpha_{k(d)} + \beta_{k(d)}  \mright]
= \sum_{k=1}^K \pi(k \mid d) (\alpha_k + \beta_k),
\label{eq:trueomega}
\end{equation}
using Eq.~\ref{eq:clickprob}, our ranking metric can then be formulated as:
\begin{equation}
\begin{split}
\mathcal{R}(\pi)
&= \sum_{d \in D} \omega_{d} R_d
=  \sum_{d \in D} P(R = 1 \,|\ d) \sum_{k=1}^K \pi(k \,|\, d) (\alpha_k + \beta_k)
\\&
=  \sum_{d \in D} \mathbb{E}_{y \sim \pi} \mleft[ P(C = 1, R = 1 \,|\ d, k) \mright]
=  \sum_{d \in D} P(C = 1, R = 1 \,|\ d)
.
\end{split}
\label{eq:reward}
\end{equation}
This formulation clearly reveals that our chosen metric directly corresponds to the expected number of items that are both clicked and preferred in our assumed user behavior model.
Hence, we will call this metric: the number of expected clicks on preferred items, abbreviated to ECP.

Given a ranking metric, the \ac{LTR} field provides several optimization methods to train ranking models.
A popular approach for deterministic ranking models is to optimize a differentiable lower bound on the ranking metric~\citep{wang2018lambdaloss, burges2010ranknet}.
For probabilistic ranking models, a simple sampled-approximation of the policy-gradient can be applied~\citep{williams1992simple}.
Alternatively, recent methods provide more efficient approximation methods specifically designed for the \ac{LTR} problem~\citep{oosterhuis2021computationally, ustimenko2020stochasticrank}.

Finally, we note that our main contributions work with any choice of weights $\omega_k$ and are therefore equally applicable to most traditional \ac{LTR} metrics.
Furthermore, for the sake of simplicity and brevity and without loss of generalization, our notation and our defined goal are limited to a single query or ranking context.
We refer to previous work by \citet{joachims2017unbiased} and \citet{oosterhuis2021onlinecounterltr} as examples of how straightforward it is to expand these to expectations over multiple queries or contexts.

\subsection{Historically Logged Click Data}
Lastly, in the unbiased \ac{LTR} setting, optimization is performed on historically logged click data.
This means a logging policy $\pi_0$ was used to show rankings to users in order to collect the resulting clicks.
We will assume the data contains $N$ rankings that were sampled from $\pi_0$ and displayed to users, where $y_i$ is the $i$th ranking and $c_i(d) \in \{0,1\}$ indicates whether $d$ was clicked when $y_i$ was displayed.
The bias parameters $\alpha_k$ and $\beta_k$ have to be estimated from user behavior, we will use $\hat{\alpha}_k$ and $\hat{\beta}_k$ to denote the estimated values.
To keep our notation brief, we will use the following to denote that all the estimated bias parameters are accurate:
\begin{equation}
\begin{split}
\hat{\alpha} = \alpha &\longleftrightarrow ( \forall k \in \{1,2,\ldots, K\}, \; \hat{\alpha}_k = \alpha_k ),
\\
\hat{\beta} = \beta &\longleftrightarrow ( \forall k \in \{1,2,\ldots, K\}, \; \hat{\beta}_k = \beta_k ).
\end{split}
\end{equation}
We will investigate both the scenarios where the $\alpha$ and $\beta$ bias parameters are known from previous experiments~\citep{agarwal2019addressing, fang2019intervention, wang2018position} and where they still have to be estimated.
Similarly, the exact distribution of the logging policy $\pi_0$ may also have to be estimated, the following denotes that the estimated distribution $\hat{\pi}_0$ for item $d$ is accurate:
\begin{equation}
\hat{\pi}_0(d) = \pi_0(d) \longleftrightarrow \big( \forall k \in \{1,2,\ldots, K\}, \; \hat{\pi}_0(k \smid d) = \pi_0(k \smid d) \big).
\end{equation}
To summarize, our goal is to maximize $\mathcal{R}(\pi)$ based on click data gathered using $\pi_0$ from the position-biased click model in Eq.~\ref{eq:clickprob}.

\section{Background: Applying Generic Counterfactual Estimation to Click-Through-Rates}
\label{background:generic}

This section will give an overview of counterfactual estimation for generic reinforcement learning~\citep{dudik2014doubly, kang2007demystifying, sutton1998introduction}, its purpose is two-fold:
 \begin{enumerate*}[label=(\roman*)]
\item to illustrate why it is not effective for the \ac{LTR} problem; and
\item to contrast the generic estimators with the existing estimators for \ac{LTR} and our novel estimators.
\end{enumerate*}
While it appears that the field is aware that generic counterfactual estimation is not a practical solution to \ac{LTR}~\citep{saito2021counterfactual, kiyohara2022doubly, joachims2017unbiased}, to the best of our knowledge, previous published research has not gone in depth on the reasons for this ineffectiveness.

\subsection{The Generic Estimation Goal}
The first issue between the \ac{LTR} problem and generic counterfactual estimation is that the latter assumes that the rewards for the actions performed by the logging policy are directly observed, as is the case for standard reinforcement learning tasks~\citep{dudik2014doubly, sutton1998introduction}.
However, in the \ac{LTR} problem, clicks are observed instead of the relevances $R_d$ on which metrics are based.
Consequently, for this section, we will restrict ourselves to estimating \ac{CTR} instead, luckily this task is similar enough to the \ac{LTR} problem for our discussion.
Let $y[k]$ indicate the $k$th item in ranking $y$,
under our assumed click model, the \ac{CTR} of a policy $\pi$ is:
\begin{equation}
\text{CTR}(\pi) =
\sum_{y : \pi(y) > 0} 
\pi(y)\sum_{k=1}^K (\alpha_k R_{y[k]} + \beta_k).
\label{eq:gen:ctr}
\end{equation}
The goal in this section is thus to estimate $\text{CTR}(\pi)$ for any given policy $\pi$ using data collected by a logging policy $\pi_0$.
Importantly, to be effective at optimization, the estimation process should work for any possible $\pi$ in the model space.

\subsection{Generic Inverse-Propensity-Scoring Estimation}
Standard \acf{IPS} estimation corrects for the mismatch between $\pi$ and $\pi_0$ by reweighting the observed action inversely w.r.t.\ their estimated propensity: the probability that $\pi_0$ takes this observation~\citep{dudik2014doubly}.
In our case, an action is a ranking $y$ and its estimated propensity is $\hat{\pi}_0(y)$.
Applying standard \ac{IPS} estimation to the \ac{LTR} task results in the following generic \ac{IPS} estimator: 
\begin{equation}
\widehat{\mathcal{R}}_\text{G-IPS}(\pi) =
\frac{1}{N}\sum_{i=1}^N
\hspace{-0.05cm}
\underbrace{
\frac{\pi(y_i)}{\hat{\pi}_0(y_i)}
}_\text{IPS weight}
\hspace{-0.05cm}
\sum_{k=1}^K
\hspace{-0.12cm}
\underbrace{c_i(y_i[k])}_\text{observed click}
\hspace{-0.18cm}
.
\label{eq:gen:ips}
\end{equation}
The \ac{IPS} weight aims to correct for the difference in action probability between $\pi_0$ and $\pi$, e.g.\  if an action is underrepresented in the data because $\pi_0(y) < \pi(y)$ then \ac{IPS} will compensate by giving more weight to this action as: $\pi(y)/\hat{\pi}_0(y) > 1$.
To understand when this approach can provide unbiased estimation, we first look at its expected value:
\begin{equation}
\mathbb{E}_{c,y\sim\pi_0}\mleft[\widehat{\mathcal{R}}_\text{G-IPS}(\pi)\mright] =
\sum_{y : \pi_0(y) > 0} 
 \pi(y)
 \hspace{-1.85cm}
\underbrace{
\frac{\pi_0(y)}{\hat{\pi}_0(y)}
}_\text{ratio between estimated and real propensity}
\hspace{-1.85cm}
\sum_{k=1}^K (\alpha_k R_{y[k]} + \beta_k).
\end{equation}
The result is very similar to the formula for \ac{CTR} (Eq.~\ref{eq:gen:ips}) except for the ratio between the estimated and real propensity.
Clearly, for unbiasedness this ratio must be equal to one, i.e.\ the estimated propensity needs to be correct, additionally, each action that $\pi$ may take must have a positive propensity from $\pi_0$:
\begin{equation}
\Big( \forall y, \; \pi(y) > 0 \rightarrow
\hspace{-0.31cm}
\underbrace{
\big(\pi_0(y) > 0 \land \hat{\pi}_0(y) = \pi_0(y) \big)
}_\text{estimated propensity is correct and positive}
\hspace{-0.31cm}
\Big) \rightarrow \mathbb{E}_{c,y\sim\pi_0}\mleft[\widehat{\mathcal{R}}_\text{G-IPS}(\pi)\mright] = \text{CTR}(\pi).
\end{equation}
Thus unbiased estimation via generic \ac{IPS} is possible, however, the requirements are practically infeasible for any large ranking problem due to its enormous action space.
Importantly, for unbiasedness, $\pi_0$ has to give a non-zero probability to each action $\pi$ may take, but data is often collected without knowledge of what $\pi$ will be evaluated (or reused for many different policies).
Moreover, when using \ac{IPS} for unbiased optimization it should be unbiased for any possible $\pi$ in the policy space that optimization is performed over.
In practice, this means that the number of possible rankings is enormous and each should get a non-zero probability by $\pi_0$, which translates to $\pi_0$ giving a positive probability to $K! \binom{|D|}{K}$ rankings.
This is quite undesirable since the user experience is likely to suffer under such a random policy, but moreover, it also brings serious variance problems.
In particular, when we consider the variance of the generic \ac{IPS} estimator, we see that small $\hat{\pi}_0(y)$ propensities increase variance by a massive degree:
\begin{equation}
\mathbb{V}\mleft[\widehat{\mathcal{R}}_\text{G-IPS}(\pi)\mright] =
\mathbb{E}_{y\sim\pi_0}
\bigg[
\hspace{-0.87cm}
\overbrace{
\frac{\pi(y)^2}{\hat{\pi}_0(y)^2}
}^\text{multiplier from IPS weight}
\hspace{-0.92cm}
\hspace{-0.55cm}
\underbrace{
\mathbb{V}
\bigg[\sum_{k=1}^K c(y[k])
\bigg]
}_\text{variance from clicks on ranking}
\hspace{-0.55cm}
\bigg].
\end{equation}
Thus, to summarize, to apply generic \ac{IPS} to \ac{LTR} unbiasedly, all rankings require a positive propensity, but due to the enormously large number of rankings this leads to extremely small propensity values which lead to enormous variance.
As a result, generic \ac{IPS} is not a practical solution for unbiased low-variance \ac{LTR} and has been widely avoided in practice~\citep{kiyohara2022doubly}.
Section~\ref{sec:background:ipsltr} will describe the \ac{IPS} approach specifically designed for \ac{LTR} that has much more feasible unbiasedness requirements.

\subsection{The Generic Direct Method}
Before we continue to \ac{LTR} specific methods, we will describe the \acf{DM} and generic \acf{DR} estimation, in order to compare them with our novel estimator later.
First, the direct method uses regression estimates from a regression model to estimate policy performance~\citep{dudik2014doubly}.
Let $\hat{R}_d$ indicate a regression estimate of $R_d$, the generic \ac{DM} estimator is then:
\begin{equation}
\widehat{\mathcal{R}}_\text{G-DM}(\pi) =
\sum_{y : \pi(y) > 0} 
\pi(y) \sum_{k=1}^K
\hspace{-0.15cm}
\underbrace{
\hat{\alpha}_k \hat{R}_{y[k]} + \hat{\beta}_k
}_\text{predicted click prob.}
\hspace{-0.25cm}
.
\label{eq:gen:dm}
\end{equation}
Clearly, \ac{DM} is unbiased when the regression estimates are correct for all rankings that $\pi$ may show:
\begin{equation}
\Big( \forall y, \; \pi(y) > 0 \rightarrow
\underbrace{
\mleft( \hat{\alpha} = \alpha \land \hat{\beta} = \beta \land  \forall k, \; \hat{R}_{y[k]} = R_{y[k]} \mright)
}_\text{predicted click probabilities are correct}
\Big)
\longrightarrow \mathbb{E}_{c,y \sim \pi_0}\mleft[\widehat{\mathcal{R}}_\text{G-DM}(\pi)\mright] = \text{CTR}(\pi)
.
\end{equation}
However, this is not a very useful requirement in practice, since solving the regression problem is arguably just as difficult as the subsequent \ac{LTR} problem.
Nonetheless, its unbiasedness requirements are very different than those for \ac{IPS}, a benefit of \ac{DR} is that it combines both requirements advantageously.

\subsection{Generic Doubly Robust Estimation}
Besides \ac{DM} and \ac{IPS}, \ac{DR} estimation makes use of a \ac{CV}, that aims to have a large covariance with \ac{IPS} estimation but the same expected value as \ac{DM}.
We will call this the generic \ac{CV}:
\begin{equation}
\widehat{\mathcal{R}}_\text{G-CV}(\pi) =
\frac{1}{N}\sum_{i=1}^N
\hspace{-0.05cm}
\underbrace{ 
\frac{\pi(y_i)}{\hat{\pi}_0(y_i)}
}_\text{IPS weight}
\hspace{-0.03cm}
\sum_{k=1}^K
\hspace{-0.1cm}
\underbrace{ 
\hat{\alpha}_k\hat{R}_{y[k]} + \hat{\beta}_k
}_\text{predicted click prob.}
\hspace{-0.25cm}
.
\end{equation}
\Ac{CV} makes use of the actions sampled by $\pi_0$, allowing it to have a high covariance with \ac{IPS}, but uses predicted click probabilities instead of using the actual clicks, enabling it to have the same expected value as \ac{DM}.
Importantly, when \ac{IPS} is unbiased, i.e.\ when the estimated propensities are correct, \ac{CV} is also an unbiased estimate of \ac{DM}:
\begin{equation}
\Big( \forall y, \; \pi(y) > 0 \rightarrow
\hspace{-0.3cm}
\underbrace{
\big(\pi_0(y) > 0 \land \hat{\pi}_0(y) = \pi_0(y) \big)
}_\text{estimated propensity is correct and positive}
\hspace{-0.3cm}
\Big) \rightarrow 
\mathbb{E}_{y \sim \pi_0}\mleft[
\widehat{\mathcal{R}}_\text{G-CV}(\pi)
\mright]
=
\widehat{\mathcal{R}}_\text{G-DM}(\pi).
\end{equation}
The \ac{DR} estimator is a combination of the above three estimators~\citep{dudik2014doubly, kang2007demystifying, robins1994estimation}:
\begin{equation}
\begin{split}
\widehat{\mathcal{R}}_\text{G-DR}(\pi)
&=
\widehat{\mathcal{R}}_\text{G-DM}(\pi)
+
\widehat{\mathcal{R}}_\text{G-IPS}(\pi)
-
\widehat{\mathcal{R}}_\text{G-CV}(\pi)
\\
&=
\hspace{-0.7cm}
\underbrace{
\widehat{\mathcal{R}}_\text{G-DM}(\pi)
}_\text{predicted model performance}
\hspace{-0.7cm}
+
\frac{1}{N}\sum_{i=1}^N 
\underbrace{
\frac{\pi(y_i)}{\pi_0(y_i)}
}_\text{IPS weight}
\sum_{k=1}^K 
\hspace{-0.9cm}
\overbrace{\Big(c_i(y_i[k]) - \hat{\alpha}_k\hat{R}_{y[k]} - \hat{\beta}_k\Big)}^{\text{diff. between observed click and predicted click prob.}} 
\hspace{-1.05cm}
.
\end{split}
\label{eq:gen:dr}
\end{equation}
The \ac{DR} starts with regression-based estimate of \ac{DM} then for each logged ranking it adds the difference between the observed clicks and the predicted clicks.
In other words, \ac{DR} uses \ac{DM} as a baseline and adds an \ac{IPS} estimate of the difference between the regression model and the actual clicks.

The first advantage of \ac{DR} estimation are its unbiasedness requirements.
It has the following bias w.r.t.\ \ac{CTR}:
\begin{align}
\mathbb{E}\mleft[ \widehat{\mathcal{R}}_\text{G-DR}(\pi) \mright] - CTR(\pi)
&=
\hspace{-0.3cm}
\sum_{y \,:\, \pi(y) > 0} 
\pi(y)\sum_{k=1}^K \mleft(\frac{\pi_0(y)}{\hat{\pi}_0(y)} - 1 \mright)\mleft( \alpha_kR_{y[k]} + \beta_k\mright)
+ \mleft(1 - \frac{\pi_0(y)}{\hat{\pi}_0(y)}\mright)\mleft( \hat{\alpha}_k\hat{R}_{y[k]} + \hat{\beta}_k\mright)
\nonumber \\ &=
\hspace{-0.3cm}
\sum_{y \,:\,  \pi(y) > 0} 
\pi(y)
\hspace{-0.1cm}
\underbrace{
\mleft(\frac{\pi_0(y)}{\hat{\pi}_0(y)} - 1 \mright)
}_\text{error in propensity}
\hspace{-0.1cm}
\sum_{k=1}^K
\underbrace{
\mleft( \alpha_kR_{y[k]} + \beta_k - \hat{\alpha}_k\hat{R}_{y[k]} - \hat{\beta}_k\mright)
}_\text{error in click prob. prediction}
\hspace{-0.05cm}
.
\label{eq:gen:biasdr}
\end{align}
As we can see the bias of \ac{DR} is a summation over rankings, where per ranking the error in propensity is multiplied with the error in predicted click probability.
Due to this product, only one of the two errors has to be zero  per ranking for the total bias to be zero.
In other words, \ac{DR} is unbiased when for each ranking either the propensity or the predicted click probabilities are correct:
\begin{equation}
\begin{split}
&\big( \forall y, \; \pi(y) > 0 \rightarrow
\big(
\overbrace{
\big(\pi_0(y) > 0 \land \hat{\pi}_0(y) = \pi_0(y) \big)
}^\text{propensity is correct and positive}
\lor
\overbrace{
\big( \hat{\alpha} = \alpha \land \hat{\beta} = \beta \land  \forall k, \; \hat{R}_{y[k]} = R_{y[k]} \big)
}^\text{predicted click probabilities are correct}
\big)
\big)
\\& \hspace{8cm}
\longrightarrow \mathbb{E}_{c,y \sim \pi_0}\mleft[\widehat{\mathcal{R}}_\text{G-DR}(\pi)\mright] = \text{CTR}(\pi).
\end{split}
\label{eq:gen:biasdrreq}
\end{equation}
As a result, if either \ac{DM} or \ac{IPS} is unbiased then \ac{DR} is unbiased, furthermore, it can potentially be unbiased when neither \ac{DM} or \ac{IPS} are.
In addition, to the beneficial unbiasedness requirements, \ac{DR} can also allow for reduced variance if there is a positive covariance between \ac{IPS} and \ac{CV}:
\begin{equation}
\mathds{V}\mleft[ \widehat{\mathcal{R}}_\text{G-DR}(\pi) \mright]
 =
 \mathds{V}\mleft[ \widehat{\mathcal{R}}_\text{G-IPS}(\pi) \mright]
 + 
 \mathds{V}\mleft[ \widehat{\mathcal{R}}_\text{G-CV}(\pi) \mright]
 - 
 2\mathbb{C}\text{ov}\mleft(\widehat{\mathcal{R}}_\text{G-IPS}(\pi),  \widehat{\mathcal{R}}_\text{G-CV}(\pi)\mright).
 \label{eq:gen:drvar}
\end{equation}
In practice, this means that somewhat accurate regression estimates can provide a decrease in variance over \ac{IPS}.
Nevertheless, this decrease is not enough to overcome the variance problems that stem from the small propensities that are involved in \ac{LTR}.
Consequently, state-of-the-art unbiased \ac{LTR} does not apply standard \ac{IPS} and \ac{DR}~\citep{oosterhuis2021onlinecounterltr, ai2020unbiased}.

\section{Background: Counterfactual Estimation for Unbiased Learning-to-Rank}
\label{sec:background:ltr}

While Section~\ref{background:generic} covers the standard counterfactual estimation techniques and why they are ineffective for the \ac{LTR} problem, this section describes the existing \ac{IPS} estimator that is specifically designed for \ac{LTR} and discuss previous work that has applied \ac{DR} estimation to click data.

\subsection{Inverse-Propensity-Scoring in Unbiased LTR}
\label{sec:background:ipsltr}

As discussed in Section~\ref{sec:relatedwork}, the main approach in state-of-the-art unbiased \ac{LTR} work is based on \acf{IPS} estimation~\citep{horvitz1952generalization}.
Under the affine click model in Eq.~\ref{eq:clickprob}, the propensities are not the probability of observation, as in the earliest unbiased \ac{LTR} work~\citep{wang2016learning, joachims2017unbiased}, but the expected correlation between the click probability and the user preference for an item $d$ under $\pi_0$~\citep{vardasbi2020trust, oosterhuis2021onlinecounterltr}:
\begin{equation}
\rho_{d} = \mathbb{E}_{y \sim \pi_0}\mleft[ \alpha_{k(d)} \mright] = \sum_{k=1}^K \pi_0(k \mid d)  \alpha_k,
\label{eq:truerho}
\end{equation}
where $\pi_0(k \mid d)$ indicates the probability that $\pi_0$ places $d$ at position $k$.
Because $\alpha$ are $\beta$ may be unknown, we use the following estimated values:
\begin{equation}
\begin{split}
\hat{\rho}_d &= \max\mleft(\mathbb{E}_{y \sim \pi_0}\mleft[ \hat{\alpha}_{k(d)} \mright],\, \tau\mright)
= \max\mleft(\sum_{k=1}^K \hat{\pi}_0( k \mid d) \hat{\alpha}_k,\, \tau\mright),
\\
\hat{\omega}_d &=  \sum_{k=1}^K \pi( k \mid d) \mleft( \hat{\alpha}_k + \hat{\beta}_k \mright),
\end{split}
\label{eq:estimatedvalues}
\end{equation}
where the clipping parameter $\tau \in (0,1]$ prevents small $\hat{\rho}_d$ values and is applied for variance reduction~\citep{joachims2017unbiased, strehl2010logged}.
The state-of-the-art \ac{IPS} estimator introduced by \citet{oosterhuis2021onlinecounterltr} 
first corrects each (non-)click with $\hat{\beta}_{k_i(d)}$ -- the $\hat{\beta}$ value for the position where the click took place -- to correct for clicks in spite of preference (trust-bias), and then inversely weights the result by $\hat{\rho}_{d}$ to correct for the correlation between the user preference and the click probability (position-bias and item-selection-bias~\citep{oosterhuis2020topkrankings}):
\begin{equation} 
\widehat{\mathcal{R}}_\text{IPS}(\pi) = \frac{1}{N}\sum_{i=1}^N \sum_{d \in D}
\hspace{-0.3cm}
 \overbrace{
 \frac{\hat{\omega}_{d}}{\hat{\rho}_{d}}
 }^\text{IPS weight}
\hspace{-0.68cm}
 \underbrace{
 \mleft(c_i(d) - \hat{\beta}_{k_i(d)}\mright)
  }_\text{click corrected for trust-bias}
\hspace{-0.38cm}
  ,
\label{eq:ips}
\end{equation}
where $k_i(d)$ indicates the position of $d$ in the $i$th ranking.

The main difference between the generic \ac{IPS} estimation (Eq.~\ref{eq:gen:ips}) and the \ac{LTR} \ac{IPS} estimator (Eq.~\ref{eq:ips}) is that the former bases its corrections on the differences between the action probabilities of $\pi$ and $\pi_0$, while the latter uses the correlation between clicks and relevance under $\pi_0$.
Thus while both use the behavior of the logging policy $\pi_0$,  the \ac{LTR} \ac{IPS} estimator uses the assumed click model of Eq.~\ref{eq:clickprob} as well.
While the reliance on the click model makes the estimator more effective, it also makes the estimator specifically designed for this click behavior.
The remainder of this section discusses that, due to this specific design, the theoretical properties of the \ac{LTR} \ac{IPS} estimator are clearly preferable over those of generic \ac{IPS} estimation applied to the \ac{LTR} problem.

To start, the \ac{IPS} estimator has the following bias:
\begin{equation}
\mathds{E}_{c,y \sim \pi_0}\mleft[ \hat{\mathcal{R}}_\text{\normalfont IPS}(\pi) \mright] - \mathcal{R}(\pi)
=
\sum_{d \in D} \frac{\hat{\omega}_{d}}{\hat{\rho}_{d}}\bigg(
\hspace{-0.23cm}
\underbrace{
\mleft(
\rho_{d} - \hat{\rho}_{d}\frac{\omega_{d}}{\hat{\omega}_{d}}
\mright)
}_\text{error from $\hat{\alpha}$, $\hat{\beta}$ and $\hat{\pi}$}
\hspace{-0.23cm}
R_d +
\underbrace{
\mathds{E}_{y \sim \pi_0}\mleft[ \beta_{k(d)} - \hat{\beta}_{k(d)} \mright]
}_\text{error from $\hat{\beta}$}
\bigg).
\end{equation}
Appendix~\ref{appendix:proofipsbias} provides a derivation of its bias, it also proves that the \ac{IPS} estimator is unbiased when
both the bias parameters and the logging policy distribution are correctly estimated and clipping has no effect~\citep{joachims2017unbiased, vardasbi2020trust}:
\begin{equation}
\hspace{-0.45cm}
\overbrace{
\big(\hat{\alpha} = \alpha \land \hat{\beta} = \beta
}^\text{pos. bias correctly estimated}
\hspace{-0.45cm}
 \land
\hspace{-0.15cm}
\underbrace{
\mleft(\forall d \in D, \;  \hat{\pi}_0(d) = \pi_0(d) \land \rho_d \geq \tau \mright) \big)
}_\text{$\hat{\pi}_0$ is correctly estimated and clipping has no effect}
\hspace{-0.15cm}
\longrightarrow
\mathbb{E}_{c,y \sim \pi_0}\big[ \widehat{\mathcal{R}}_\text{IPS}(\pi) \big] = \mathcal{R}(\pi).
\label{eq:ips:biascondition}
\end{equation}
Conversely,
\ac{IPS} is biased when clipping does have an effect, even if the bias parameters and logging policy distribution are correctly estimated.

Importantly, these unbiasedness requirements are much more feasible than those for the generic \ac{IPS} estimator (Eq.~\ref{eq:gen:biasdrreq});
where the generic \ac{IPS} estimator requires each ranking to have a positive probability ($\forall y, \; \pi_0(y) > 0$) of being displayed during logging, the \ac{LTR} \ac{IPS} estimator requires each item to have a propensity greater than $\tau$ ($\forall d, \; \rho_d \geq \tau$).
In other words, the correlation between clicks and relevances should be greater than $\tau$, this is even feasible under a deterministic logging policy that always displays the same single ranking if all items are displayed at once~\citep{joachims2017unbiased, oosterhuis2020topkrankings}.
However, the \ac{IPS} estimator does need accurate estimate of the bias parameters $\hat{\alpha}$ and $\hat{\beta}$ in addition to an accurate estimate of the logging policy $\hat{\pi}_0$, but previous work indicates that this is actually doable in practice~\citep{joachims2017unbiased, wang2016learning, wang2018position, agarwal2019estimating}.
In summary, compared to generic \ac{IPS} estimation, the \ac{IPS} estimator for \ac{LTR} has replaced infeasible requirements on the logging policy with attainable requirements on bias estimation.

The variance of the \ac{IPS} estimator can be decomposed in the following parts:
\begin{equation}
\mathds{V}\big[ \widehat{\mathcal{R}}_\text{IPS}(\pi) \big]
 = \frac{1}{N}\sum_{d \in D}
 \hspace{-1.1cm}
 \overbrace{
 \frac{\hat{\omega}_{d}^2}{\hat{\rho}_{d}^2}
 }^\text{multiplier from IPS weight}
 \hspace{-1.1cm}
\big(
 \hspace{-0.15cm}
\underbrace{
\mathds{V}\big[ c(d) \big]
}_\text{var. from click}
 \hspace{-0.05cm}
 +
 \hspace{-0.8cm}
\overbrace{
 \mathds{V}\big[ \hat{\beta}_{k(d)} \big]
}^\text{var. from trust-bias correction}
 \hspace{-0.8cm}
  -
 \hspace{-0.42cm}
\underbrace{
  2 
  \mathds{C}\text{ov}\big[c(d), \hat{\beta}_{k(d)}\big]
}_\text{cov. between click and correction}
 \hspace{-0.5cm}
\big).
\label{eq:ipsvariance}
\end{equation}
We see how clipping prevents extremely large values for the variance multiplier from the \ac{IPS} weight by preventing small $\hat{\rho}_{d}$ values and can thereby greatly reduce variance~\citep{joachims2017unbiased, strehl2010logged}.
Importantly, the reduction in variance is often much greater than the increase in bias, making this an attractive trade-off that has been widely adopted by the unbiased \ac{LTR} field~\citep{agarwal2019counterfactual, oosterhuis2020topkrankings}.
There is currently no known method for variance reduction in IPS-based position-bias correction that does not introduce bias, and thus, in practice unbiased LTR methods are actually often applied in a biased manner\footnote{\citet{oosterhuis2021onlinecounterltr} apply unbiased LTR in an online fashion to reduce variance, but this solution does not apply to our problem setting.}.

\subsection{Existing Cross-Entropy Loss Estimation}
As discussed, \ac{DM} requires an accurate regression model to be unbiased or effective.
In the ideal situation, one may optimize a regression model for estimating relevance using the cross-entropy loss:
\begin{equation}
\mathcal{L}(\hat{R}) = - \sum_{d \in D} R_d \log(\hat{R}_d) + (1 - R_d)\log(1 -\hat{R}_d).
\label{eq:trueCEloss}
\end{equation}
However, this loss cannot be computed from the click-data since $R_d$ cannot be observed.
Luckily, \citet{bekker2019beyond} have introduced an estimator that can be applied to position-biased clicks:
\begin{equation}
\widehat{\mathcal{L}}'(\hat{R}) = - \frac{1}{N} \sum_{i = 1}^N \sum_{d \in D} \mleft( \frac{c_i(d)}{\hat{\rho}_d}\log(\hat{R}_d)
 + \mleft(1 - \frac{c_i(d)}{\hat{\rho}_d} \mright)\log(1 -\hat{R}_d)\mright).
 \label{eq:prevCE}
\end{equation}
\citet{saito2020unbiased} showed that this estimator is effective for recommendation tasks on position-biased click data.
$\widehat{\mathcal{L}}'$ is unbiased~\citep{bekker2019beyond, saito2020unbiased} when there is no trust-bias, propensities are accurate and clipping has no effect: 
\begin{equation}
\underbrace{
\big(
(\forall k, \,\beta_k = 0 )
}_\text{no trust-bias}
\land
\hspace{-1.05cm}
\overbrace{
\hat{\alpha} = \alpha
}^\text{pos. bias correctly estimated}
\hspace{-1.05cm}
\land
\hspace{-0.15cm}
\underbrace{
\mleft(\forall d \in D, \;  \hat{\pi}_0(d) = \pi_0(d) \land \rho_d \geq \tau \mright) \big)
}_\text{$\hat{\pi}$ is correctly estimated and clipping has no effect}
\hspace{-0.15cm}
\longrightarrow \mathbb{E}_{c,y\sim \pi_0}\big[\widehat{\mathcal{L}}'(\hat{R})\big] = \mathcal{L}(\hat{R}).
\end{equation}
In Section~\ref{sec:novelcrossentropy}, we propose a novel variation on this estimator that can also correct for trust-bias and that treats non-displayed items in a more intuitive way.

\subsection{Existing Doubly-Robust Estimation for Logged Click Data}
\label{sec:doublyrobust:prev}

Our discussion of generic counterfactual estimation in Section~\ref{background:generic} concluded with \ac{DR} estimation and the advantageous properties it can have over \ac{IPS} and \ac{DM}~\citep{kang2007demystifying, robins1994estimation}.
Given that the current state-of-the-art in \ac{LTR} is based on \ac{IPS}, it seems very promising to apply \ac{DR} to unbiased \ac{LTR}.
Unfortunately, at first glance it seems \ac{DR} is inapplicable to the \ac{LTR} problem, since treatment is the examination of the user and we cannot directly observe whether a user has examined a non-clicked item or not.
Because \ac{DR} estimation balances \ac{IPS} and regression estimates unbiasedly using the knowledge of treatment in the data, e.g.\ which actions where taken~\citep{dudik2014doubly} (cf.\ Eq.~\ref{eq:gen:dr}), it appears this characteristic problem of position-biased clicks makes existing \ac{DR} estimation inapplicable.

However, as discussed in Section~\ref{background:generic}, this is not a problem for generic counterfactual estimators for \ac{CTR} estimation from logged click data.
Accordingly, previous work that has applied \ac{DR} estimation to clicks has taken the generic approach with corrections based on purely based on action propensities.
For instance, \citet{yuan2020unbiased} use \ac{IPS} and \ac{DR} estimators for \ac{CTR} prediction on advertisements that are presented in different display positions.
Their \ac{IPS} weights are based on the difference in action probabilities between the logging policy and the evaluated policy (cf.\ Eq.~\ref{eq:gen:ips} \&~\ref{eq:gen:dr}).
In a similar vain, \citet{kiyohara2022doubly} propose a \ac{DR} estimator for predicting a \ac{CTR}-based slate-metric under cascading user behavior, they also use \ac{IPS} weights based solely on action probabilities.
These method are very different from the \ac{IPS} approach for unbiased \ac{LTR} (Section~\ref{sec:background:ipsltr}) because their corrections are not based on the mismatch between clicks and relevance, but on the mismatch between action probabilities between policies.
As a result, they cannot handle situations where $\pi_0$ is deterministic and position-bias occurs, in contrast with the \ac{LTR} \ac{IPS} estimator.
We thus argue that the approaches of \citet{yuan2020unbiased} and \citet{kiyohara2022doubly} are better understood as methods for correcting policy differences, instead of methods designed for correcting position-bias in clicks directly.

Another \ac{DR} estimator applied to click data was proposed by \citet{saito2020doubly}, who realized that when estimating post-click conversions, the click signal can be seen as the treatment variable.
This avoids the unobservable examination problem as clicks are always directly observable in the data.
The propensities of \citeauthor{saito2020doubly} are thus based on click probabilities, instead of action or examination probabilities.
While being very useful for post-click conversions, their method cannot be applied to predicting click probabilities or our \ac{LTR} problem setting.

In summary, existing \ac{DR} estimation does not seem directly applicable to position-bias since the treatment variable, item examination, is unobservable in click logs.
To the best of our knowledge, \ac{DR} estimators that have been applied to clicks correct for the mismatch between logging policy and the evaluated policy.
Currently, there does not appear to be a \ac{DR} estimator that uses the correlations between clicks and relevances as state-of-the-art \ac{IPS} estimation for \ac{LTR}.

\section{Method: The Direct Method and Doubly Robust Estimation for Learning to Rank}
\label{sec:method}

In Section~\ref{background:generic} the generic \ac{IPS}, \ac{DM} and \ac{DR} estimators were introduced,
subsequently, Section~\ref{sec:background:ltr} showed how \ac{IPS} has been successfully adapted for the \ac{LTR} problem specifically.
This naturally raises the question whether adaptations of \ac{DM} and \ac{DR} estimation for \ac{LTR} could bring additional success to the field.
To answer this question, this section introduces novel \ac{DM} and \ac{DR} estimators for \ac{LTR} and also a novel estimator for the cross-entropy loss.
 
\subsection{The Direct Method for Learning to Rank}
\label{sec:regression}

As discussed in Section~\ref{background:generic}, the \acf{DM} solely relies on regression to estimate performance, in contrast with \ac{IPS} which uses click frequencies and propensities~\citep{dudik2014doubly}.
The generic \ac{DM} estimator in Eq.~\ref{eq:gen:dm} estimates \ac{CTR} with the estimated bias parameters $\hat{\alpha}$ and $\hat{\beta}$ and the relevance estimates $\hat{R}_\pi$.
However, to estimate $\mathcal{R}$ (Eq.~\ref{eq:reward}), we only require the weight estimate $\omega_d$ and $\hat{R}_d$ per item $d$.
The \ac{DM} estimate of $\mathcal{R}(\pi)$ then is:
\begin{equation}
\widehat{\mathcal{R}}_\text{DM}(\pi) = \sum_{d \in D} \hat{\omega}_d \hat{R}_d.
\label{eq:regression}
\end{equation}
While to the best of our knowledge it is novel, our \ac{DM} estimator is extremely straightforward: for each item we multiply its estimated expected position weight $\hat{\omega}_d$ with its relevance estimate: $\hat{R}_d$.
The biggest difference with the generic \ac{DM} is that instead of using the policy probabilities $\hat{\pi}(y)$ or $\hat{\alpha}$ and $\hat{\beta}$ directly, it uses $\hat{\omega}_d$ which is based on their values (Eq.~\ref{eq:estimatedvalues}).

By considering Eq.~\ref{eq:reward},~\ref{eq:estimatedvalues} and \ref{eq:regression}, we can clearly see $\widehat{\mathcal{R}}_\text{DM}$ has the following condition for unbiasedness: 
\begin{equation}
\big(
\hspace{-0.52cm}
\overbrace{
\hat{\alpha} = \alpha \land \hat{\beta} = \beta
}^\text{pos. bias correctly estimated}
\hspace{-0.45cm}
 \land
\hspace{-0.48cm}
 \underbrace{
\big(\forall d \in D, \;  \hat{R}_d = R_d \big)
}_\text{all regression estimates are correct}
\hspace{-0.55cm}
\big) \longrightarrow
\widehat{\mathcal{R}}_\text{DM}(\pi) = \mathcal{R}(\pi).
\label{eq:regressionbiascondition}
\end{equation}
In other words, both the bias parameters and the regression model have to be accurate for $\widehat{\mathcal{R}}_\text{DM}(\pi)$ to be unbiased.
The first part of the condition is required because accurate $\hat{\alpha}$ and $\hat{\beta}$ are needed for an accurate estimate of the $\omega$ weights.
The second part of the condition: that all regression estimates $\hat{R}_d$ need to be correct, show that it is practically infeasible for \ac{DM} to be unbiased since finding an accurate $\hat{R}_d$ values appears to be as difficult as the ranking task itself.
This reasoning could explain why -- to the best of our knowledge -- no existing work has applied \ac{DM} to unbiased \ac{LTR}.
However, the experimental findings in this paper cast doubt on this reasoning, since they show that \ac{DM} can be more effective than \ac{IPS}, especially when the number of displayed rankings $N$ is not very large.

An advantage of \ac{DM} over \ac{IPS} is how non-clicked items are treated:
The \ac{IPS} estimator (Eq.~\ref{eq:ips}) treats items that are not clicked in the logged data as completely irrelevant items that should be placed at the bottom of a ranking.
As pointed out in previous work~\citep{wang2021non}, this seems very unfair to items that were never displayed during logging, and this \emph{winner-takes-all} behavior could potentially explain the high variance of \ac{IPS}.
In contrast, because \ac{DM} relies on regression estimates it can provide non-zero values to all items, even those never displayed.
Additionally, \ac{DM} does not require any estimate of the logging policy $\hat{\pi}_0$ whereas \ac{IPS} heavily relies on $\hat{\pi}_0$.
But \ac{DM} does not utilize any of the click-data, and thus, \ac{DM} cannot correct for inaccuracies in the regression estimates.
Furthermore, the unbiasedness criteria for \ac{DM} are much less feasible than those of \ac{IPS}.
Ideally, the advantageous properties of both \ac{IPS} and \ac{DM} should be combined in a single estimator, while avoiding the downsides of each approach.
The following subsection considers whether \ac{DR} estimation could result in such a combination.

\subsection{A Novel Doubly-Robust Estimator for Relevance Estimation under Position-Bias}
\label{sec:doublyrobust}

Now that we have \ac{IPS} and \ac{DM} estimators for \ac{LTR}, we only require a \acf{CV} to construct a \acf{DR} estimator~\citep{dudik2014doubly}.
As discussed in Section~\ref{background:generic}, \ac{CV} should have the same expected value as \ac{DM} when \ac{IPS} is unbiased, while simultaneously having a high covariance with \ac{IPS}.
With these requirements in mind, we propose the following \ac{CV}:
\begin{equation}
\widehat{\mathcal{R}}_\text{CV}(\pi) = \frac{1}{N}\sum_{i=1}^N \sum_{d \in D}
\hspace{-0.35cm}
\overbrace{
\frac{\hat{\omega}_{d}}{\hat{\rho}_{d}}
}^\text{IPS weight}
\hspace{-0.35cm}
\hspace{-1.4cm}
\underbrace{
\hat{\alpha}_{k_i(d)} \hat{R}_d
}_\text{increase in click prob. due to relevance}
\hspace{-1.5cm}
.
\label{eq:cv}
\end{equation}
Interestingly, \ac{CV} does not use the observed clicks but utilizes at what ranks an item was displayed in the logged data.
The last part of the estimator represents the increase in click probability an item receives by being displayed at a position, the hope is that this correlates with the actual observed clicks.
Importantly, \ac{CV} has the same expected value as \ac{DM} when $\hat{\pi}$ is correct and clipping has no effect (see Theorem~\ref{theorem:cvunbiasreq} for proof):
\begin{equation}
\underbrace{
\big(
\forall d, \; \hat{\pi}_0(d) = \pi_0(d) \land \hat{\rho}_d \geq \tau
\big)
}_\text{$\hat{\pi}$ is correct and clipping has no effect}
\longrightarrow
\mathds{E}_{y \sim \pi_0}\mleft[
\widehat{\mathcal{R}}_\text{CV}(\pi)
\mright]
 = \widehat{\mathcal{R}}_\text{DM}(\pi)
 .
\end{equation}
If we compare the above condition with the unbiasedness condition of \ac{IPS} in Eq.~\ref{eq:ips:biascondition}, we see that the latter encapsulates the former.
In other words, \ac{CV} is an unbiased estimate of \ac{DM} when \ac{IPS} is an unbiased estimate of $\mathcal{R}$.

Since our \ac{CV} has the required properties, we can straightforwardly propose our novel \ac{DR} estimator (cf.\ Eq.~\ref{eq:gen:dr}):
\begin{equation}
\begin{split}
\widehat{\mathcal{R}}_\text{DR}(\pi)
&=
\widehat{\mathcal{R}}_\text{DM}(\pi) + \widehat{\mathcal{R}}_\text{IPS}(\pi) - \widehat{\mathcal{R}}_\text{CV}(\pi)
\\&=
\widehat{\mathcal{R}}_\text{DM}(\pi)
+ \frac{1}{N}\sum_{i=1}^N \sum_{d \in D}
\hspace{-0.32cm}
\overbrace{
\frac{\hat{\omega}_{d}}{\hat{\rho}_{d}}
}^\text{IPS weight}
\hspace{-1.22cm}
\underbrace{
\mleft(c_i(d)  - 
\hat{\alpha}_{k_i(d)} \hat{R}_d  - \hat{\beta}_{k_i(d)}
 \mright)}_\text{diff. between observed click and predicted click prob.}
 \hspace{-0.92cm}.
\end{split}
\label{eq:dr}
\end{equation}
We see that our \ac{DR} estimator follows a similar structure as generic \ac{DR} estimation (Section~\ref{background:generic}):
it starts with \ac{DM} as a baseline then adds an \ac{IPS} estimate of the difference between \ac{DM} and the true reward $\mathcal{R}$.
Concretely, the difference between each observed click signal $c_i(d)$ and the predicted click probability $\hat{\alpha}_{k_i(d)} \hat{R}_d  + \hat{\beta}_{k_i(d)}$ is taken and reweighted with an \ac{IPS} estimate.
Effectively, the observed clicks are thus used to estimate and correct the error of \ac{DM}.

An intuitive advantage of \ac{DR} is that for items that were never displayed during logging (i.e.\ $\forall 0 < i \leq N, \; \hat{\alpha}_{k_i(d)} = 0$), \ac{DR} relies solely on regression to estimate their relevance, similar to \ac{DM}.
Yet for items that have been displayed many times, \ac{DR} will estimate relevance more similar to \ac{IPS} for those items, thereby it is able to correct for regression mistakes with clicks.
The combination of these properties, means that \ac{DR} can avoid the \emph{winner-takes-all} behavior of \ac{IPS} where all non-displayed or non-clicked items are seen as completely non-relevant and pushed to the bottom of the ranking.
We expect this to mean that \ac{DR} does not have the same variance problems as \ac{IPS}.
At the same time, \ac{DR} still relies on clicks and thus does not require perfectly accurate regression estimates.
Our theoretical analysis shows that this enables \ac{DR} to have more reasonable unbiasedness requirements than \ac{DM}.

The main difference with standard \ac{DR} estimation for contextual bandits, i.e.\ as described by \citet{dudik2014doubly}, and our \ac{DR} estimator is that our \ac{CV} uses a soft expected-treatment variable $\hat{\alpha}_{k_i(d)}$.
This difference is necessary because relevances $R_d$ cannot be observed directly and have to be inferred from click signals $c_i(d)$.
Thus, while standard \ac{CV} would use the observed reward signal, our \ac{CV} infers the relevance from the observed click.
We call $\hat{\alpha}_{k_i(d)}$ a soft expected-treatment variable because it can be seen as the expected effect that relevance had on the click probability.
To the best of our knowledge, our \ac{DR} estimator is the first to use such a soft-treatment variable.

Moreover, in contrast with the existing methods described in Section~\ref{background:generic} and~\ref{sec:doublyrobust:prev} that use propensities based on the mismatch between $\pi_0$ and $\pi$~\citep{yuan2020unbiased, kiyohara2022doubly, saito2020doubly}.
Our \ac{DR} estimator uses the correlation between clicks and relevance to correct for position-bias, it is thus also applicable when the logging policy is deterministic and inherents the advantages that the \ac{LTR} \ac{IPS} estimator has over generic \ac{IPS} estimation.
We thus argue that our \ac{DR} estimator is the first that is designed to directly correct for position-bias, and therefore provides a very significant contribution to the unbiased \ac{LTR} field.

\subsection{Theoretical Properties of the Novel Doubly-Robust Estimator}
\label{sec:drtheory}
Sections~\ref{sec:experimentalsetup} and~\ref{sec:results} experimentally investigate the performance improvements our contribution brings, whereas Appendix~\ref{appendix:drbias} proves several theoretical advantages \ac{DR} has over both \ac{IPS} and \ac{DM} in terms of bias and variance.
We summarize our main theoretical findings in the remainder of this section.

Theorem~\ref{theorem:drlongbias} shows that our \ac{DR} has the following bias:
\begin{equation}
\begin{split}
&
\mathds{E}_{c,y \sim \pi_0}\mleft[
\widehat{\mathcal{R}}_\text{DR}(\pi)
\mright]
- \mathcal{R}(\pi)
\\
&\hspace{0.6cm}
=
\sum_{d \in D}
\frac{\hat{\omega}_{d}}{\hat{\rho}_{d}}
\bigg(
\hspace{-0.1cm}
\underbrace{
\mleft(\mathds{E}_{y \sim \pi_0}\mleft[\alpha_{k(d)}\mright] - \hat{\rho}_{d} \frac{\omega_{d}}{\hat{\omega}_{d}}\mright)
}_\text{error from $\hat{\pi}$, $\hat{\alpha}$, $\hat{\beta}$ and clipping}
\hspace{-0.03cm}
R_d +
\underbrace{
\mleft(\hat{\rho}_{d}-\mathds{E}_{y \sim \pi_0}\mleft[\hat{\alpha}_{k(d)}\mright] \mright)
}_\text{error from $\hat{\pi}$ and clipping}
\hspace{-0.04cm}
 \hat{R}_d + 
 \underbrace{
 \underset{y \sim \pi_0}{\mathds{E}}\mleft[\beta_{k(d)} - \hat{\beta}_{k(d)}\mright]
 }_\text{error from $\hat{\beta}$}
 \bigg)
.
\end{split}
\end{equation}
Furthermore, Corollary~\ref{theorem:drsimplebias} shows that if the bias parameters are correctly estimated, this can be simplified to:
\begin{equation}
\begin{split}
\hspace{-0.6cm}
\underbrace{
\mleft(
\hat{\alpha} = \alpha \land \hat{\beta} = \beta
\mright)
}_\text{pos. bias is correctly estimated}
\hspace{-0.48cm}
\longrightarrow
\mathds{E}_{c,y \sim \pi_0}\mleft[
\widehat{\mathcal{R}}_\text{DR}(\pi)
\mright]
- \mathcal{R}(\pi)
&=
\sum_{d \in D}
\frac{\omega_{d}}{\hat{\rho}_{d}}
\bigg(
\hspace{-0.05cm}
\underbrace{
\mleft(\mathds{E}_{y \sim \pi_0}\mleft[\alpha_{k(d)}\mright] - \hat{\rho}_{d} \mright)
}_\text{error from $\hat{\pi}$ and clipping}
\hspace{-0.05cm}
\hspace{-0.4cm}
\overbrace{
\mleft(R_d - \hat{R}_d \mright)
}^\text{error from regression}
\hspace{-0.4cm}
 \bigg)
.
\end{split}
\end{equation}
The multiplication of errors in the bias can be beneficial for more robustness (cf.\ Eq.~\ref{eq:gen:biasdr}), however, it only occurs when the bias parameters are correct.
From the simplified bias, Theorem~\ref{theorem:drbias} derives the following unbiasedness conditions:
 \begin{equation}
 \hspace{0.25cm}
\big(
\hspace{-0.65cm}
\underbrace{
\hat{\alpha} = \alpha \land \hat{\beta} = \beta
}_\text{pos. bias is correctly estimated}
\hspace{-0.58cm}
\land \,
\big(\forall d \in D, \, 
\hspace{-0.23cm}
\underbrace{
(\hat{\pi}_0(d) = \pi_0(d)  \land \rho_d \geq \tau )
}_\text{$\hat{\pi}$ is correct and clipping has no effect}
\hspace{-0.18cm}
 \lor
\hspace{-0.5cm}
 \overbrace{
 \hat{R}_d = R_d
 }^\text{regression is correct}
\hspace{-0.55cm}
 \big) \big)
 \longrightarrow
\mathds{E}_{c,y \sim \pi_0}\big[ \widehat{\mathcal{R}}_\text{DR}(\pi) \big] = \mathcal{R}_\pi.
\end{equation}
In other words, \ac{DR} is unbiased when the bias parameters are correctly estimated and per item \emph{either} the logging policy distribution is correctly estimated and clipping has no effect \emph{or} the regression estimate is correct.
In contrast, remember that \ac{IPS} needs an accurate $\hat{\pi}_0(d)$ and clipping to have no effects for \emph{all} items, and \ac{DM} needs accurate regression estimates for \emph{all} items.
Therefore, \ac{DR} is unbiased when either \ac{IPS} or \ac{DM} is but can also be unbiased in situations where neither is.
Clearly, our \ac{DR} is more robust than \ac{IPS} and \ac{DM}, yet all of the \ac{LTR} estimators still require accurate bias parameters.
This seems inescapable since our reward $\mathcal{R}$ is also based on user behavior, i.e.\ due to its $\omega$ weights accurate $\alpha$ and $\beta$ estimates are needed.

In addition to the better unbiasedness conditions, Theorem~\ref{theorem:dripsbiascomp} proves our \ac{DR} estimator has less or equal bias than \ac{IPS} when $\hat{\alpha}$ and $\hat{\beta}$ are accurate and each $\hat{R}_d$ estimate is less than twice the true $R_d$ value:
\begin{equation}
\begin{split}
&\hspace{0.25cm}
\big(
\hspace{-0.65cm}
\underbrace{
\hat{\alpha} = \alpha \land \hat{\beta} = \beta
}_\text{pos. bias is correctly estimated}
\hspace{-0.58cm}
\land
\hspace{-1.42cm}
\overbrace{
\big(\forall d \in D, \; 0 \leq \hat{R}_d \leq 2 R_d \big)
}^\text{regression estimates between zero and twice true relevances}
\hspace{-1.5cm}
\big)
\\[-4.5ex]
&\hspace{4.2cm}
\longrightarrow
\underbrace{
| \mathbb{E}_{c,y \sim \pi_0}\mleft[ \mathcal{R}(\pi) \mright] - \hat{\mathcal{R}}_\text{DR}(\pi) |
\leq
| \mathbb{E}_{c,y \sim \pi_0}\mleft[  \mathcal{R}(\pi) \mright] - \hat{\mathcal{R}}_\text{IPS}(\pi) |
}_\text{bias of DR is less or equal than bias of IPS}
\hspace{-0.02cm}
.
\end{split}
\label{eq:dr:lessbiascond}
\end{equation}
We see that our \ac{DR} estimator is able to reduce bias with somewhat accurate regression estimates.
In particular, it appears that it mitigates some of the bias introduced to \ac{IPS} by clipping.
Overall, it appears that our \ac{DR} estimator has better unbiasedness criteria than \ac{IPS} or \ac{DM} and has lower bias than \ac{IPS} given adequate regression estimates.

Besides bias, we should also consider the variance of our \ac{DR} estimator, from Eq.~\ref{eq:dr} it follows that (cf.\ Eq.~\ref{eq:gen:drvar}):
\begin{equation}
\mathds{V}\mleft[ \widehat{\mathcal{R}}_\text{DR}(\pi) \mright]
 =
 \mathds{V}\mleft[ \widehat{\mathcal{R}}_\text{IPS}(\pi) \mright]
 + 
 \mathds{V}\mleft[ \widehat{\mathcal{R}}_\text{CV}(\pi) \mright]
 - 
 2\mathbb{C}\text{ov}\mleft(\widehat{\mathcal{R}}_\text{IPS}(\pi),  \widehat{\mathcal{R}}_\text{CV}(\pi)\mright).
\end{equation}
Thus, a large covariance between \ac{IPS} and \ac{CV} allows for a reduction in the variance of our \ac{DR} estimator.
To better understand when this may be the case, Theorem~\ref{theorem:drvariance} proves the following condition for improved variance over \ac{IPS}:
\begin{equation}
\hspace{0.25cm}
\big(
\hspace{-0.65cm}
\underbrace{
\hat{\alpha} = \alpha \land \hat{\beta} = \beta
}_\text{pos. bias is correctly estimated}
\hspace{-0.58cm}
\land
\hspace{-1.43cm}
\overbrace{
\big(\forall d \in D, \; 0 \leq \hat{R}_d \leq 2 R_d \big)
}^\text{regression estimates between zero and twice true relevances}
\hspace{-1.5cm}
\big)
\longrightarrow
\hspace{-0.51cm}
\underbrace{
\mathds{V}\big[ \widehat{\mathcal{R}}_\text{DR}(\pi) \big] \leq \mathds{V}\big[ \widehat{\mathcal{R}}_\text{IPS}(\pi) \big]
}_\text{variance of DR is less or equal than that of IPS}
\hspace{-0.68cm}
.
\end{equation}
We note that this is the same condition as in Eq.~\ref{eq:dr:lessbiascond}: correct $\hat{\alpha}$ and $\hat{\beta}$ estimates and regression estimates that are somewhat correct.
Interestingly, this shows that under this condition \ac{DR} can improve over \ac{IPS} both in terms of bias and variance.
In contrast, while the practice of clipping reduces variance but introduces bias~\citep{joachims2017unbiased, strehl2010logged}, it appears that under certain conditions \ac{DR} can avoid this tradeoff altogether.

Finally, we note that there is also an important exception; our \ac{DR} estimator is equivalent to \ac{IPS} when all $\hat{\alpha}_{k_i(d)}$ are equal to their corresponding $\rho_d$:
\begin{equation}
\big( \forall\, 0 < i \leq N, \; \forall d \in D, \; \hat{\alpha}_{k_i(d)}  = \hat{\rho}_d\big) \longrightarrow \widehat{\mathcal{R}}_\text{DR}(\pi) = \widehat{\mathcal{R}}_\text{IPS}(\pi).
\end{equation}
There are only two non-trivial situations where this can occur:
\begin{enumerate*}[label=(\roman*)]
\item when the logging policy $\pi_0$ is deterministic \emph{and} clipping has no effect: $\forall d \in D,\, \rho_d \geq \tau$; and
\item when all regression estimates are zero: $\forall d \in D, \, \hat{R}_d =0$.
\end{enumerate*}
In all other scenarios, our \ac{DR} estimator does not reduce to \ac{IPS} estimation.
This means that even when $\pi_0$ is deterministic, \ac{DR} can have benefits over \ac{IPS} when clipping is applied.

To summarize, we have introduced a novel \ac{DR} estimator that is specifically designed for the \ac{LTR} problem.
In terms of bias and variance, our \ac{DR} estimator is more robust than both the \ac{IPS} and the \ac{DM} estimators: 
when either of \ac{IPS} or \ac{DM} is unbiased the \ac{DR} estimator is also unbiased, and in addition, there exist cases where \ac{DR} is unbiased and neither \ac{IPS} nor \ac{DM} are.
Moreover, when the bias parameters are accurate and all regression estimates are between zero and twice the true preferences, we can prove that both the bias and variance of \ac{DR} are less or equal to those of \ac{IPS}.

In terms of theory, our novel \ac{DR} estimator is a breakthrough for the unbiased \ac{LTR} field:
it is the first unbiased \ac{LTR} method that uses \ac{DR} estimation to directly correct for position-bias,
importantly, this makes it provenly more robust than \ac{IPS} estimation in terms of both bias and variance.
Our \ac{DR} estimator is applicable in any unbiased \ac{LTR} setting where \ac{IPS} can be applied and with any regression estimates, allowing for widespread adoption across the entire  field.

\subsection{Applying \ac{LTR} to Doubly-Robust Ranking Metric Estimates}

It might not be directly obvious how \ac{LTR} can be performed with the \ac{DR} estimator, while it is actually very straightforward.
To begin, we consider the common approaches for \ac{LTR} when relevances are known: bounding~\citep{wang2018lambdaloss, burges2010ranknet} and sample-based approximation~\citep{williams1992simple, oosterhuis2021computationally, ustimenko2020stochasticrank}.
Bounding has a long tradition in \ac{LTR} for optimizing deterministic models~\citep{burges2010ranknet}; \citet{wang2018lambdaloss} introduced the LambdaLoss method and proved that it can bound ranking metrics, let $R$ be a vector of all true item relevances then:
$
\text{LambdaLoss}(\pi, R)
\leq
\mathcal{R}(\pi)
$.
For probabilistic policies, the policy gradient can be approximated based on sampled rankings~\citep{williams1992simple}, recently \citet{oosterhuis2021computationally} proposed the PL-Rank method:
$
\text{PL-Rank}(\pi, R)
\approx
\frac{\delta}{\delta \pi}\mathcal{R}(\pi)
$.

To apply \ac{LTR} methods like these to estimated metrics, we follow \citet{oosterhuis2020topkrankings} and reformulate the $\widehat{\mathcal{R}}_\text{DR}(\pi)$ to a sum over items that with expected-rank weights $\hat{\omega}_{d}$ and relevance estimates $\hat{\mu}_d$:
\begin{equation}
\widehat{\mathcal{R}}_\text{DR}(\pi)
=
\sum_{d \in D}
\hat{\omega}_{d}
\bigg(
\hat{R}_d
+
 \frac{1}{N \cdot \hat{\rho}_d}\sum_{i=1}^N
 \mleft(c_i(d)  - 
\hat{\alpha}_{k_i(d)} \hat{R}_d  - \hat{\beta}_{k_i(d)}
 \mright)
\bigg)
=
\sum_{d \in D}
\hat{\omega}_{d} \hat{\mu}_d
.
\end{equation}
Let $\mu$ indicate a vector of all relevance estimates, 
\citet{oosterhuis2020topkrankings} prove that LambaLoss can be used as a bound on the estimated metric:
$
\text{LambdaLoss}(\pi, \hat{\mu})
\leq
\widehat{\mathcal{R}}_\text{DR}(\pi)
$.
Similarly, the derivation of \citet{oosterhuis2021computationally} is equally applicable to $\widehat{\mathcal{R}}_\text{DR}(\pi)$ and can thus approximate the policy gradient:
$
\text{PL-Rank}(\pi, \hat{\mu})
\approx
\frac{\delta}{\delta \pi}\widehat{\mathcal{R}}_\text{DR}(\pi)
$.
As such, existing \ac{LTR} methods are straightforwardly applied to our \ac{DR} estimator in order to optimize ranking models w.r.t.\ unbiased click-based estimates of performance.
 
\subsection{Novel Cross-Entropy Loss Estimation}
\label{sec:novelcrossentropy}
While our \ac{DR} estimator can be applied with any regression model, we will propose a novel estimator for the cross-entropy loss to optimize an accurate regression model.
There are two issues with the existing $\widehat{\mathcal{L}}'$ estimator (Eq.~\ref{eq:prevCE}) we wish to avoid:
 \begin{enumerate*}[label=(\roman*)]
\item $\widehat{\mathcal{L}}'$ does not correct for trust-bias, and
\item for any never-displayed item $d$ the $\widehat{\mathcal{L}}'$ estimate contains the $\log(1 -\hat{R}_d)$ loss that pushes $\hat{R}_d$ towards zero.
\end{enumerate*}
In other words, $\widehat{\mathcal{L}}'$ penalizes positive $\hat{R}_d$ values for items that were never displayed during logging, while
it seems more intuitive that a loss estimate should be indifferent to the $\hat{R}_d$ values of never-displayed items.
We propose the following estimator:
\begin{equation}
\widehat{\mathcal{L}}(\hat{R}) = - \frac{1}{N} \sum_{i = 1}^N \sum_{d \in D}
\hspace{-0.34cm}
\overbrace{
\frac{1}{\hat{\rho}_d}
}^\text{IPS weight}
\hspace{-0.32cm}
\Big(
\hspace{-2.2cm} 
\underbrace{
\big(c_i(d) - \hat{\beta}_{k_i(d)}\big)
}_\text{diff. between observed click and predicted click prob. if $R_d =0$}
\hspace{-2.22cm} 
\log\big(\hat{R}_d\big)
 +
 \hspace{-1.58cm} 
 \overbrace{
 \big(\hat{\alpha}_{k_i(d)} + \hat{\beta}_{k_i(d)} - c_i(d)\big)
 }^\text{diff. between predicted click prob. if $R_d =1$ and observed click}
 \hspace{-1.63cm} 
 \log\big(1 -\hat{R}_d\big)\Big).
\label{eq:CEestimator}
\end{equation}
Our novel estimator has $\hat{\beta}$ corrections to deal with trust-bias and utilizes the $\hat{\alpha}_{k_i(d)}$ to weight the negative part of the loss: $\log(1 -\hat{R}_d)$.
One possible interpretation is that $\hat{\alpha}_{k_i(d)}$ replaces the $1$ in Eq.~\ref{eq:prevCE} using the fact that $\mathds{E}_{y\sim \pi_0}[\alpha_{k_i(d)}/\rho_d] = 1$.
Another interpretation is that the second weight looks at the difference between the expected click probability if the item was maximally relevant ($R_d =1$) and the observed click frequency, the expected difference reveals how much relevance the item \emph{lacks}.
Regardless of interpretation, the important property is that $ \mathds{E}_{c,y\sim \pi_0}\mleft[\frac{1}{\hat{\rho}_d}\mleft(\hat{\alpha}_{k_i(d)} + \hat{\beta}_{k_i(d)} - c_i(d)\mright)\mright] = (1 - R_d)$.
Furthermore, when an item $d$ is never displayed, the corresponding $\hat{R}_d$ does not affect the estimate since in that case: $\forall \, 0 < i \leq N, \; \hat{\alpha}_{k_i(d)} = 0 \land  \hat{\beta}_{k_i(d)} = 0 \land c_i(d) = 0$.
Appendix~\ref{appendix:loglikelihoodbias} proves $\widehat{\mathcal{L}}$ is unbiased in the following circumstances:
\begin{equation}
\hspace{-0.45cm}
\overbrace{
\big(\hat{\alpha} = \alpha \land \hat{\beta} = \beta
}^\text{pos. bias correctly estimated}
\hspace{-0.45cm}
 \land
\hspace{-0.15cm}
\underbrace{
\mleft(\forall d \in D, \;  \hat{\pi}_0(d) = \pi_0(d) \land \rho_d \geq \tau \mright) \big)
}_\text{$\hat{\pi}$ is correctly estimated and clipping has no effect}
\hspace{-0.15cm}
\longrightarrow
\mathds{E}_{c,y\sim \pi_0}\big[\widehat{\mathcal{L}}(d)\big] = \mathcal{L}(d).
\end{equation}
These are the same conditions as those we proved for the \ac{IPS} estimator (Eq.~\ref{eq:ips:biascondition}): the bias parameters and the logging policy need to be accurately estimated and clipping should have no effect.
Thus our novel cross-entropy loss estimator can correct for position-bias, even when trust-bias is present, and is indifferent to predictions on never-displayed items.

\section{Experimental Setup}
\label{sec:experimentalsetup}
In order to evaluate our novel \ac{DR} estimator, we apply the semi-synthetic setup that is common in unbiased \ac{LTR}~\citep{oosterhuis2018differentiable, oosterhuis2019optimizing, hofmann2013reusing, zhuang2020counterfactual, vardasbi2020trust, ovaisi2020correcting, joachims2017unbiased}.
This simulates a web-search scenario by sampling queries and documents from commercial search datasets, while user interactions and rankings are simulated using probabilistic click models.
We use the three largest publicly-available \ac{LTR} industry datasets: \emph{Yahoo!\ Webscope}~\citep{Chapelle2011}, \emph{MSLR-WEB30k}~\citep{qin2013introducing} and \emph{Istella}~\citep{dato2016fast}.
Each dataset contains queries, preselected documents per query and for the query-document pairs: feature representations and labels indicating expert-judged relevance, with $\text{label}(d) \in \{0,1,2,3,4\}$ we use $P(R = 1 \mid d) = 0.25 \cdot \text{label}(d)$.
The queries in the datasets are divided into training, validation and test partitions.
Our logging policy is obtained by supervised training on 1\% of the training partition~\citep{joachims2017unbiased}.
At each interaction $i$, a query is sampled uniformly over the training and validation partitions and a corresponding ranking is sampled from the logging policy.
Clicks are simulated using the click model in Eq.~\ref{eq:clickprob}.
We simulate both a top-$5$ setting, where only five items can be displayed at once, and a full-ranking setting where all items are displayed simultaneously.
The parameters for the top-5 setting are based on empirical work by \citet{agarwal2019addressing}:
$\alpha^\text{top-5} = [0.35, 0.53, 0.55, 0.54, 0.52, 0, 0, \ldots]$ and $\beta^\text{top-5} = [0.65, 0.26, 0.15, 0.11, 0.08, 0, 0, \ldots]$;
for the full-ranking setting we use Eq.~\ref{eq:alphabeta} with: $P(O=1 \,|\, k) = (1 + (k-1)/5)^{-2}$,
$\epsilon^+_k = 1$, and $\epsilon^-_k = 0.1 + 0.6/(1+ k/20)$, because these closely match the top-5 parameters while being applicable to longer rankings.
We simulate both top-$5$ settings where $\alpha$ and $\beta$ are known and where they are estimated with \ac{EM}~\citep{vardasbi2020trust, agarwal2019addressing}.
All models are neural networks with two 32-unit hidden layers, applied in Plackett-Luce ranking models optimized using policy gradients estimated with PL-Rank-2~\citep{oosterhuis2021computationally}.
The only exception is the logging policy in the full-ranking settings which is a deterministic ranker to better match earlier work~\citep{joachims2017unbiased, vardasbi2020trust, agarwal2019counterfactual}.
Propensities $\rho_d$ use frequentist estimates of the logging policy: $\hat{\pi}_0(k \,|\, d) = \frac{1}{N}\sum_{i=1}^N \mathds{1}[k_i(d) = k]$, we clip with $\tau^\text{top-5} = 10/\sqrt{N}$ in the top-5 setting and $\tau^\text{full} = 100/\sqrt{N}$ in the full-ranking setting.
Early stopping is applied using counterfactual estimates based on clicks on the validation set.

Our main performance metric is the expected number of clicks on preferred items (ECP), as introduced in Section~\ref{sec:ltrgoal}.
In addition, Appendix~\ref{appendix:ndcg} also provides our main results measured with the NDCG metric.

Our results evaluate our \ac{DM} estimator (Eq.~\ref{eq:regression}) and our \ac{DR} estimator (Eq.~\ref{eq:regression}) both using the estimates of a regression model optimized by our $\widehat{\mathcal{L}}$ loss (Eq.~\ref{eq:CEestimator}).
Their performance is compared with the following baselines:
 \begin{enumerate*}[label=(\roman*)]
\item a naive estimator that ignores bias (Eq.~\ref{eq:ips} with $\tau = 1$);
\item \ac{IPS} (Eq.~\ref{eq:ips});
\item \acf{RPS}~\citep{wang2021non}; and
\item \ac{DM} optimized with $\widehat{\mathcal{L}}'$ (Eq.~\ref{eq:prevCE} \&~\ref{eq:regression}) from previous work~\citep{bekker2019beyond, saito2020unbiased}.
\end{enumerate*}
None of the estimators receive any information about queries that were not sampled in the training data.
To compare the differences with the optimal performance possible, we also optimize a model based on the true labels (full-information).

As an example for clarity, the following procedure is used to evaluate the performance of our \ac{DR} estimator at $N$ displayed rankings in the top-5 setting with estimated bias parameters:
 \begin{enumerate*}[label=(\arabic*)]
\item $N$ queries are sampled with replacement from the training and validation partitions, a displayed ranking is generated for each sampled query using the stochastic logging policy.
\item Clicks on each ranking are simulated using the click model in Eq.~\ref{eq:clickprob} and the true $\alpha$ and $\beta$ parameters.
\item \ac{EM} is applied to the simulated click data to obtain estimated $\hat{\alpha}$ and $\hat{\beta}$  parameters.
\item A regression model is optimized using $\mathcal{L}$, $\hat{\alpha}$, $\hat{\beta}$ and the click data simulated on the training set, $\hat{R}_d$ is computed for each item.
\item A ranking model is optimized to maximize the \ac{DR} estimate of its ECP, using $\hat{\alpha}$, $\hat{\beta}$, $\hat{R}_d$ and the training click data, early stopping criteria are estimated with the validation click data.
\item Finally, the true ECP ($\mathcal{R}_\pi$, Eq.~\ref{eq:reward}) of the resulting ranking model is computed on the test-set and added to our results.
\end{enumerate*}
We repeat each procedure twenty times independently and report the mean results in addition to standard deviation and 90\% confidence intervals.
Statistical differences with the performance of our \ac{DR} estimator were measured via a two-sided student's t-test~\citep{student1908probable}.

{\renewcommand{\arraystretch}{0.2}
\begin{figure*}[tp]
\centering
\begin{tabular}{@{}r @{}l @{}l @{}l}
&
 \multicolumn{1}{c}{\hspace{0.24cm} \footnotesize Yahoo! Webscope}
&
 \multicolumn{1}{c}{\hspace{0.13cm} \footnotesize MSLR-WEB30k}
&
 \multicolumn{1}{c}{\hspace{-0.08cm} \footnotesize Istella}
\\
\multirow{2}{5mm}{\raisebox{.2\normalbaselineskip}[0pt][0pt]{\rotatebox[origin=c]{90}{
\footnotesize Top-5 Setting with Known Bias Parameters
}}} &
\includegraphics[scale=0.338]{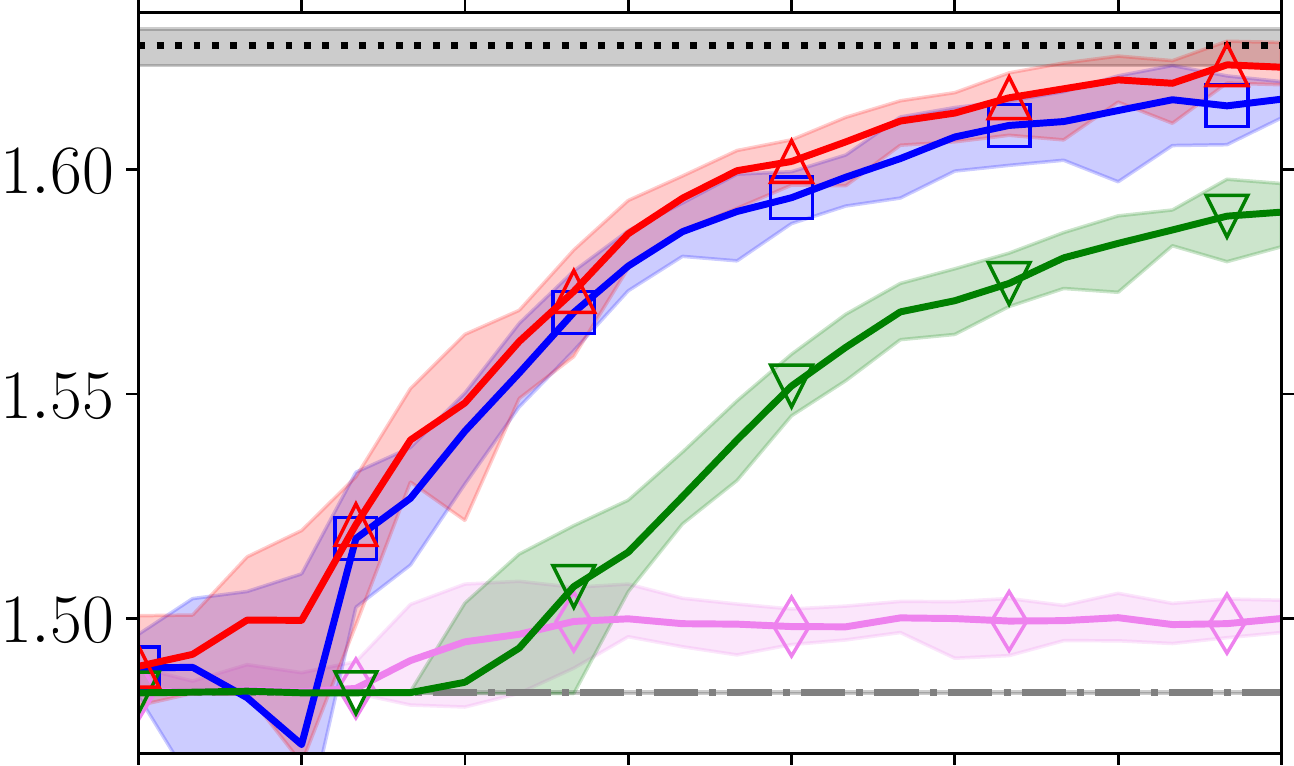}\hspace{1.28mm} &
\includegraphics[scale=0.338]{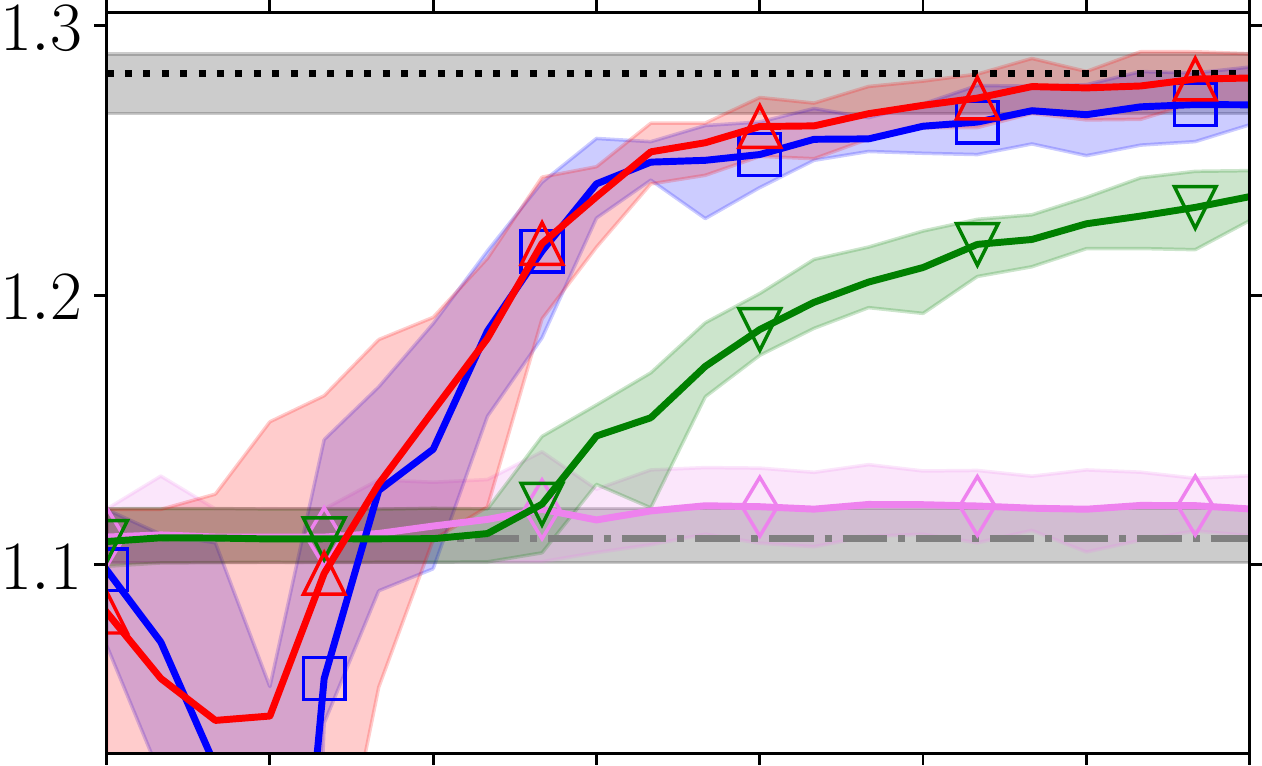}\hspace{1.28mm} &
\includegraphics[scale=0.338]{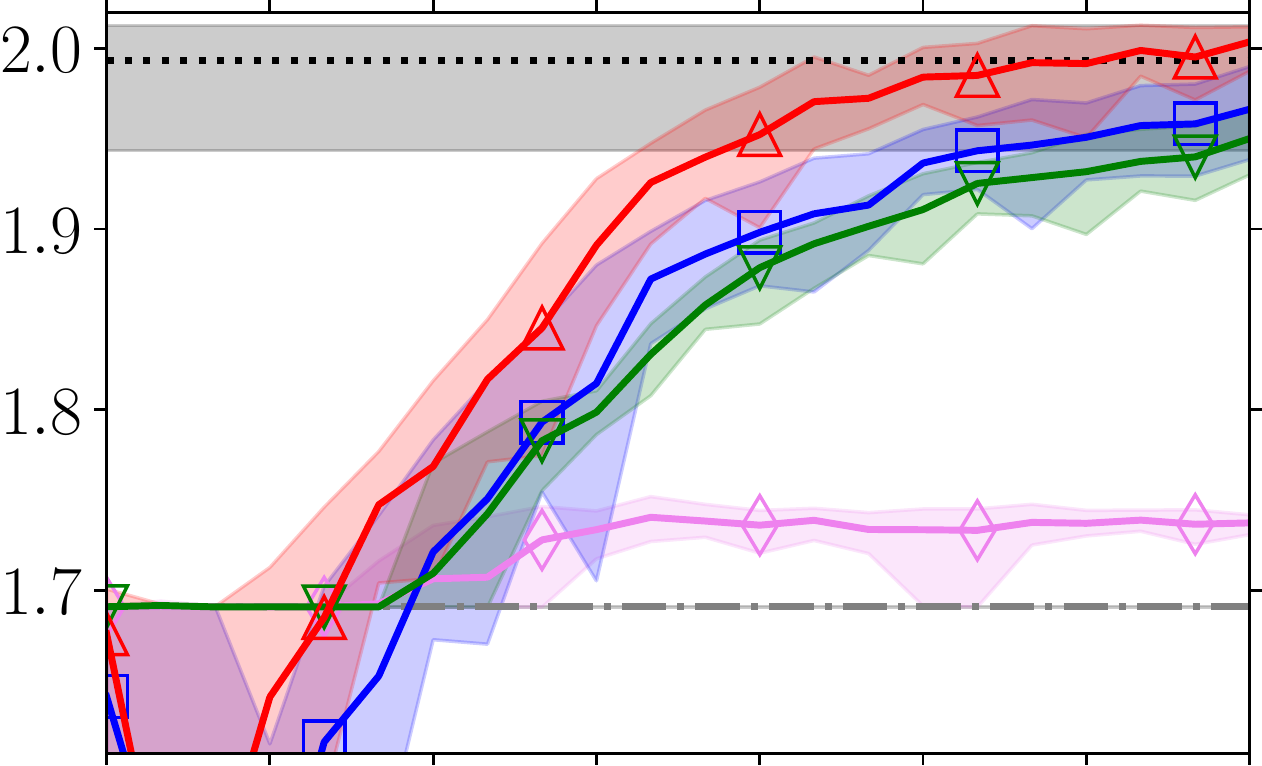}\hspace{1.28mm}
\\&
\includegraphics[scale=0.338]{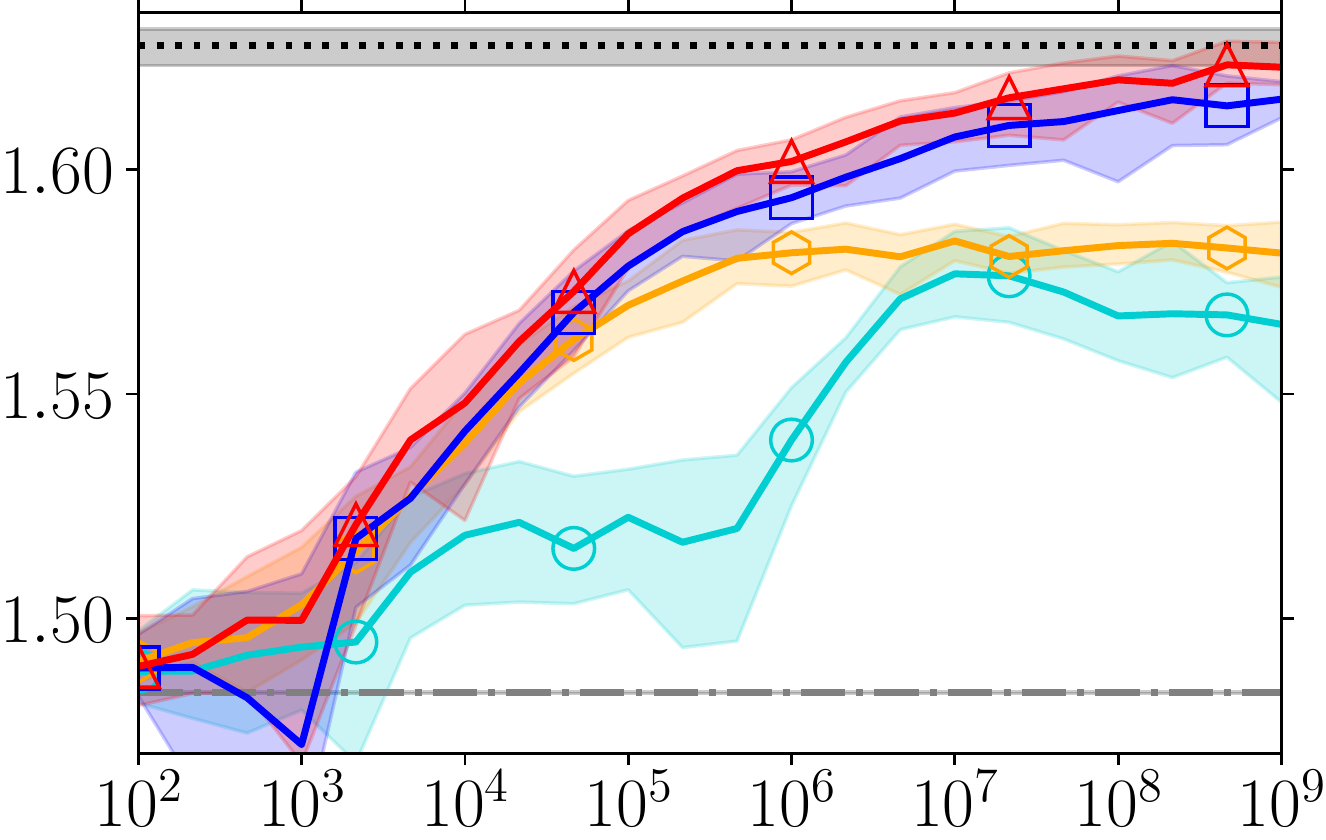} &
\includegraphics[scale=0.338]{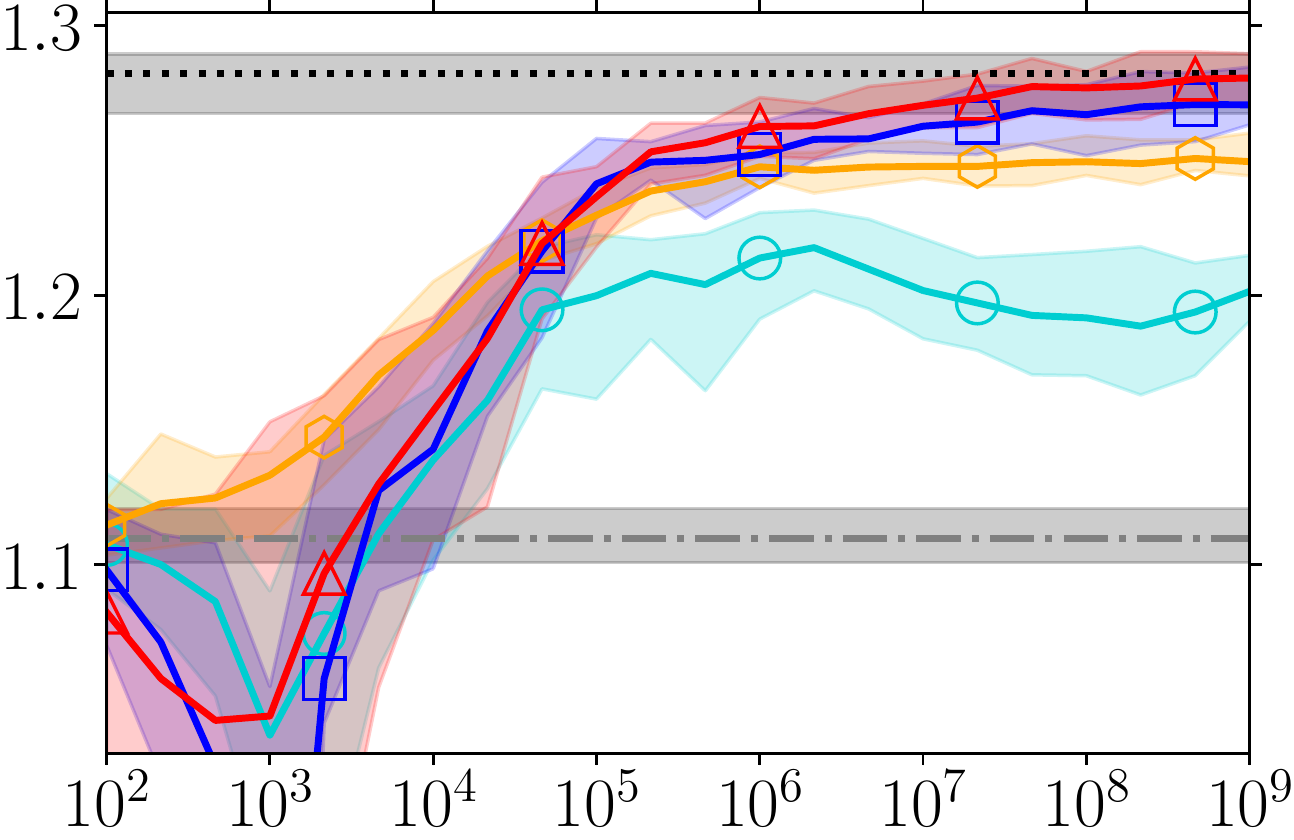} &
\includegraphics[scale=0.338]{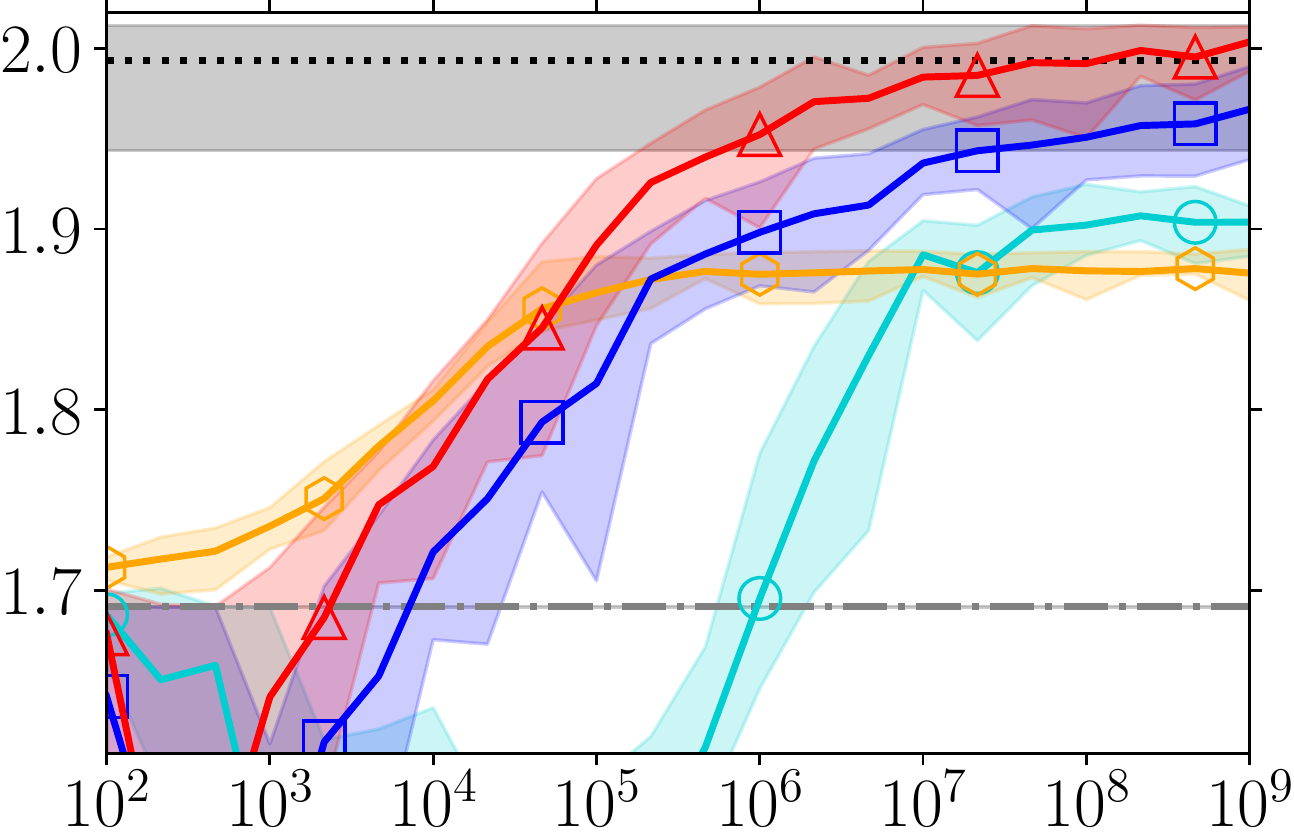}
\\
\midrule
\multirow{2}{5mm}{\raisebox{.2\normalbaselineskip}[0pt][0pt]{\rotatebox[origin=c]{90}{
\footnotesize Top-5 Setting with Estimated Bias Parameters
}}} &
\includegraphics[scale=0.338]{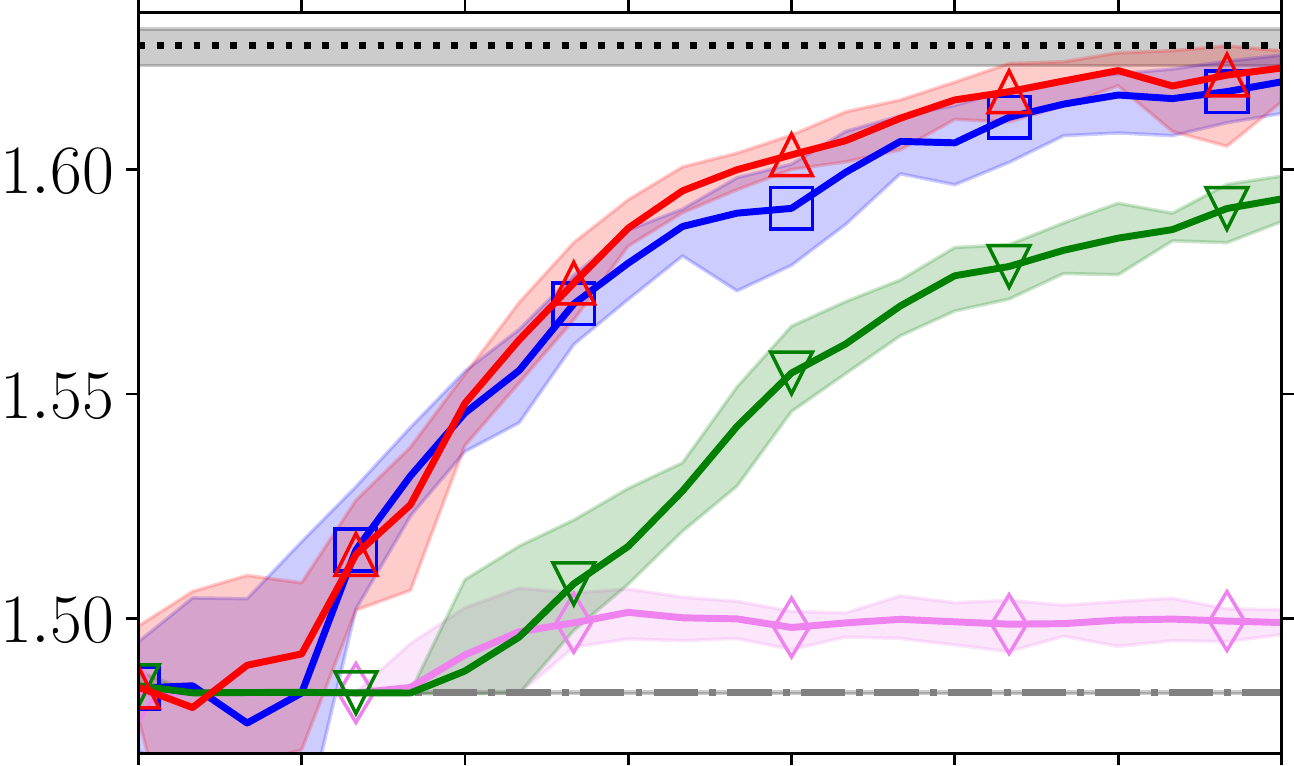}\hspace{1.28mm} &
\includegraphics[scale=0.338]{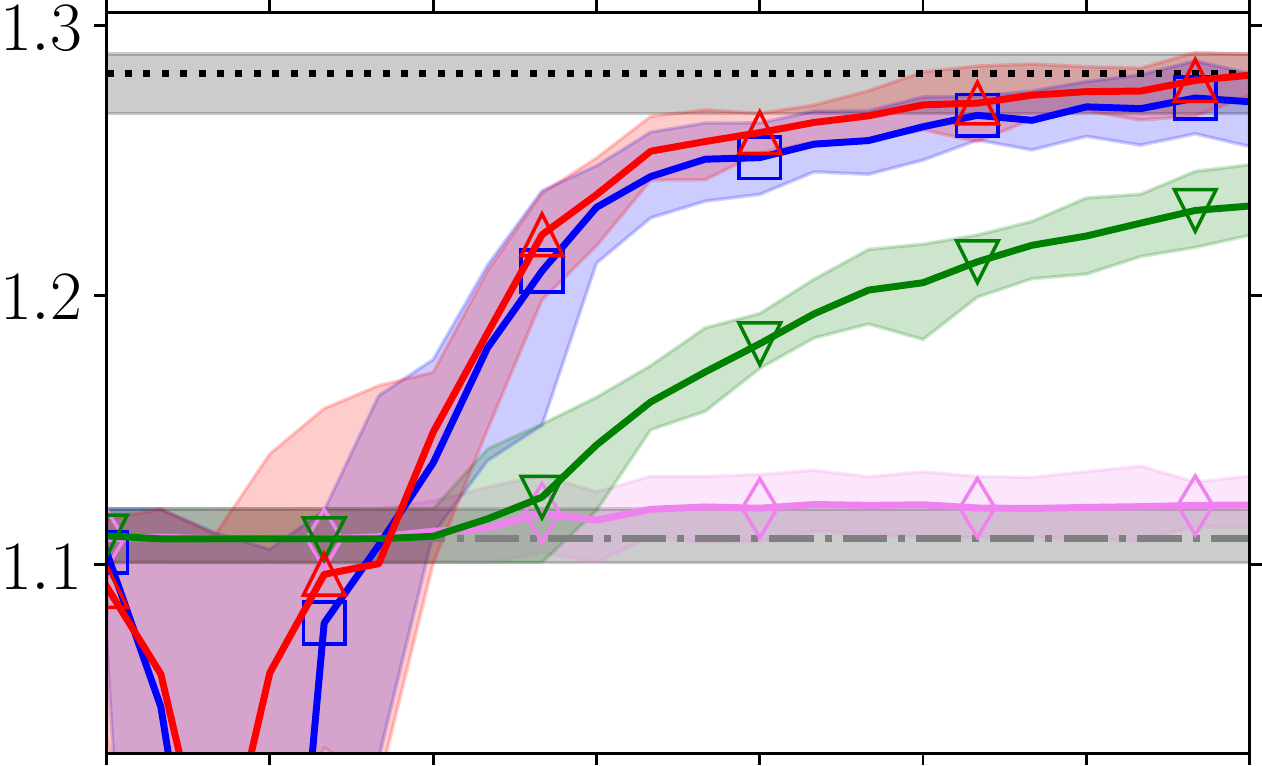}\hspace{1.28mm} &
\includegraphics[scale=0.338]{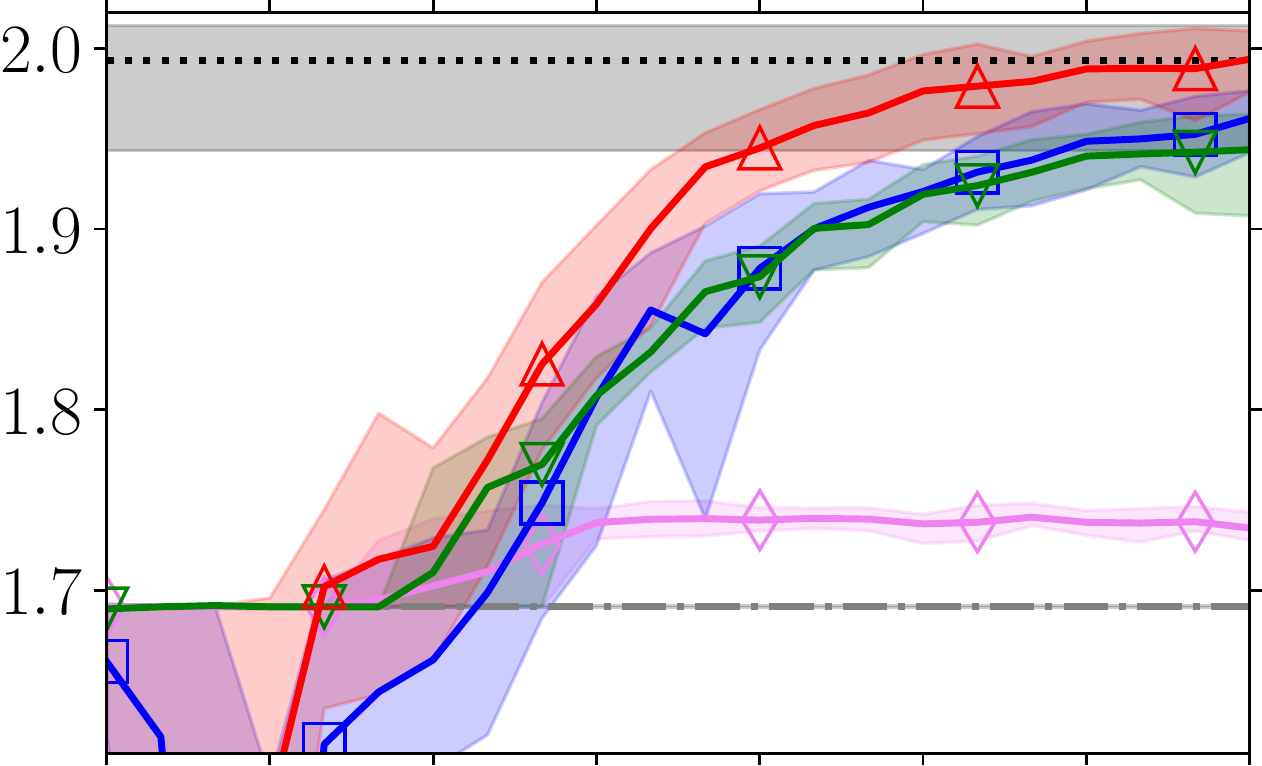}\hspace{1.28mm}
\\&
\includegraphics[scale=0.338]{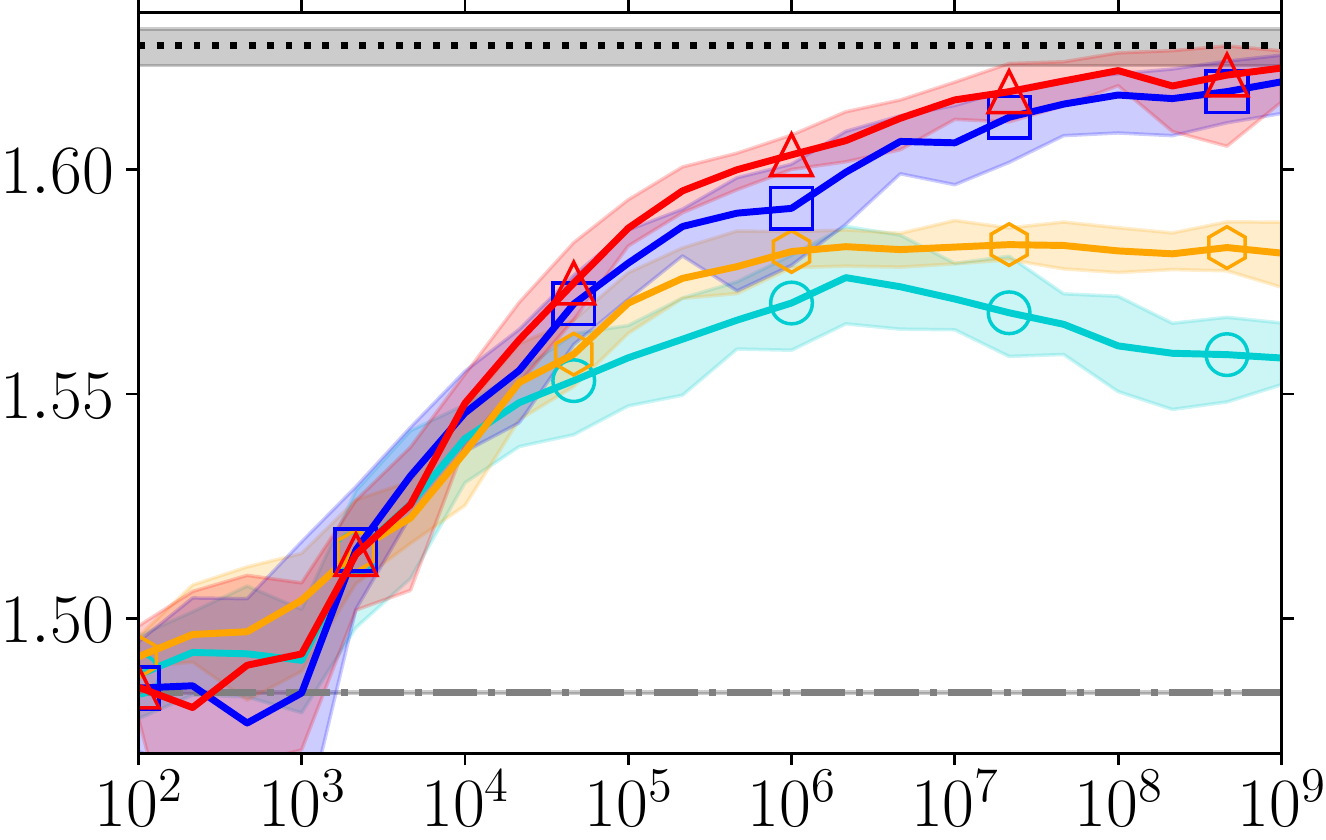} &
\includegraphics[scale=0.338]{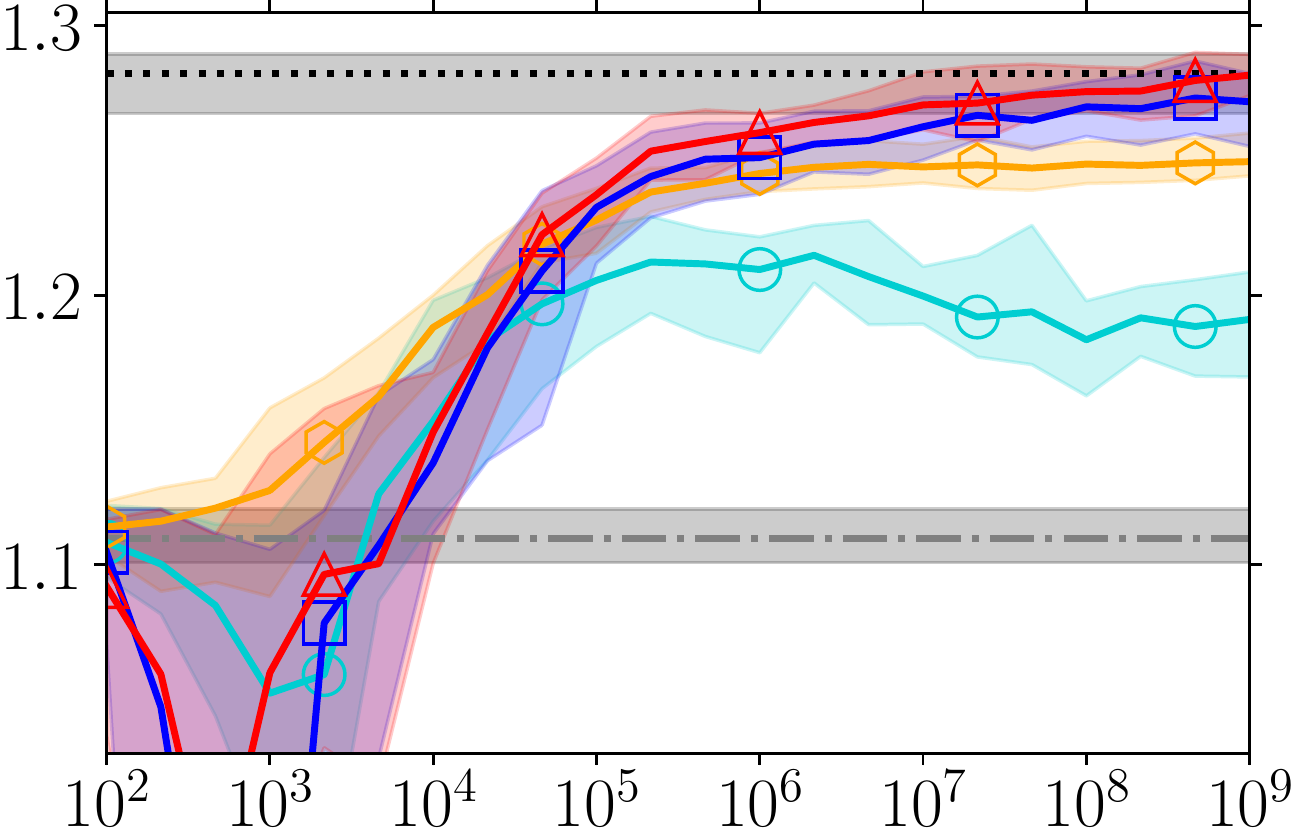} &
\includegraphics[scale=0.338]{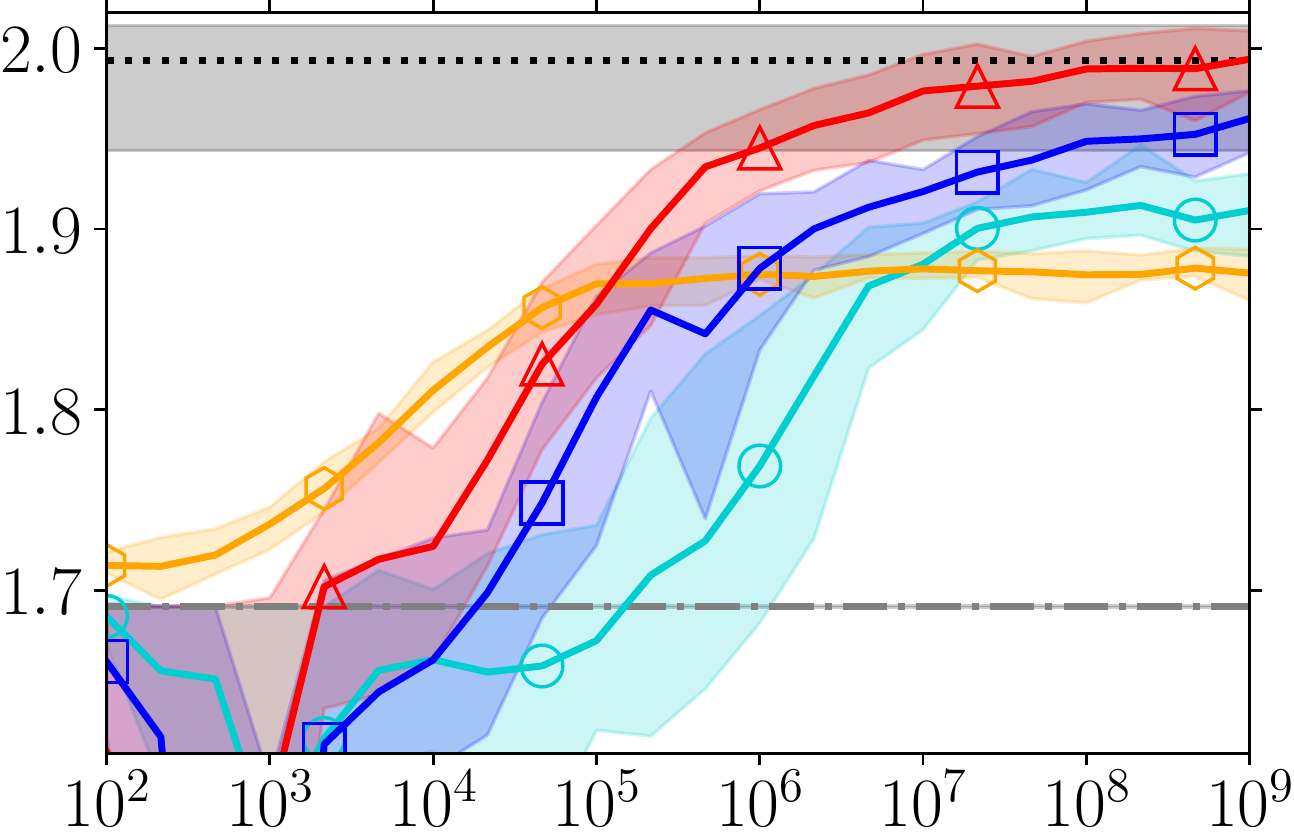}
\\
\midrule
\multirow{2}{5mm}{\raisebox{.2\normalbaselineskip}[0pt][0pt]{\rotatebox[origin=c]{90}{
\footnotesize Full-Ranking with Known Bias Parameters
}}} &
\includegraphics[scale=0.338]{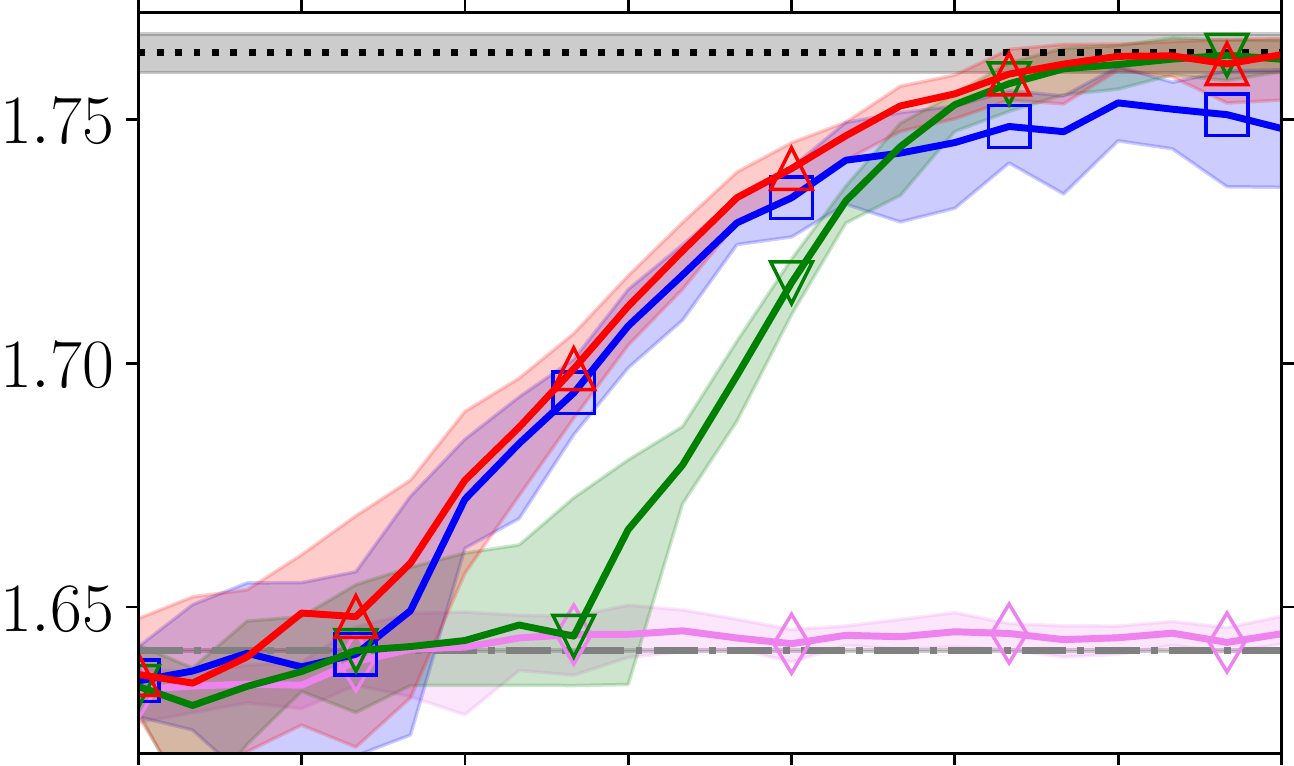}\hspace{1.28mm} &
\includegraphics[scale=0.338]{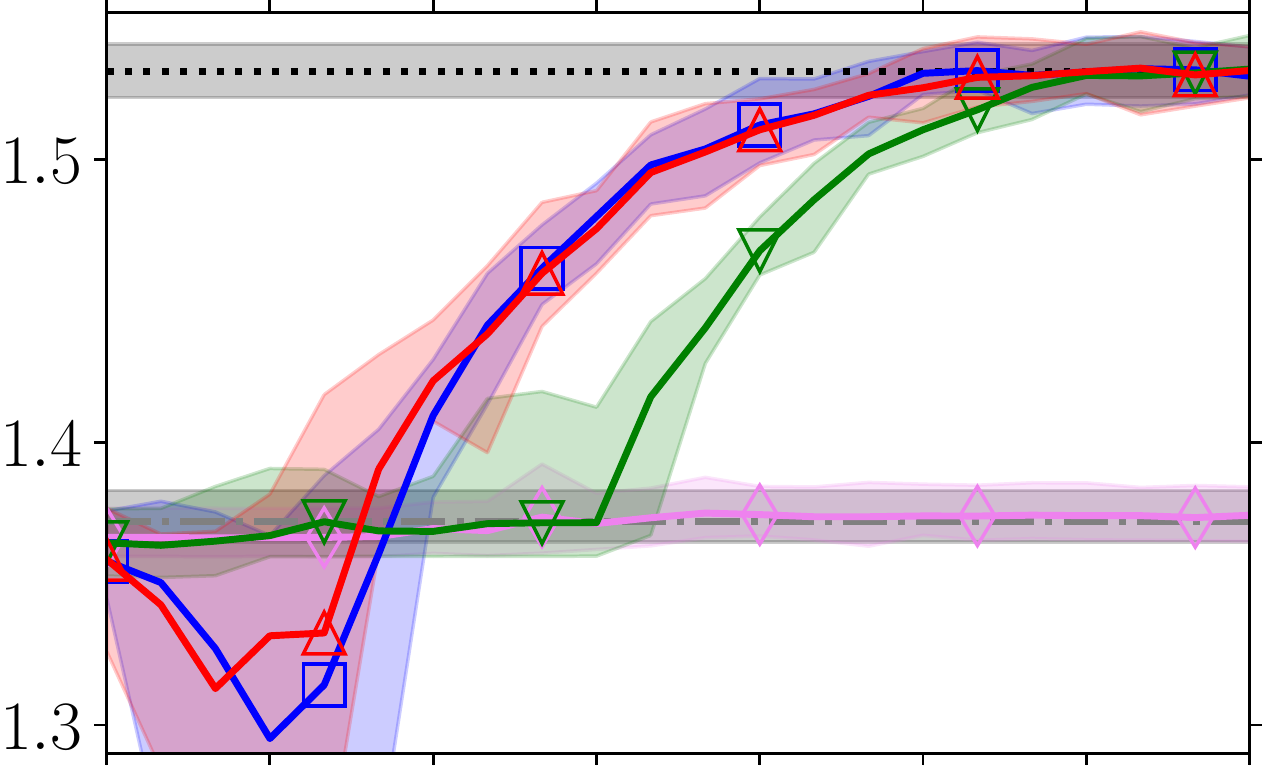}\hspace{1.28mm} &
\includegraphics[scale=0.338]{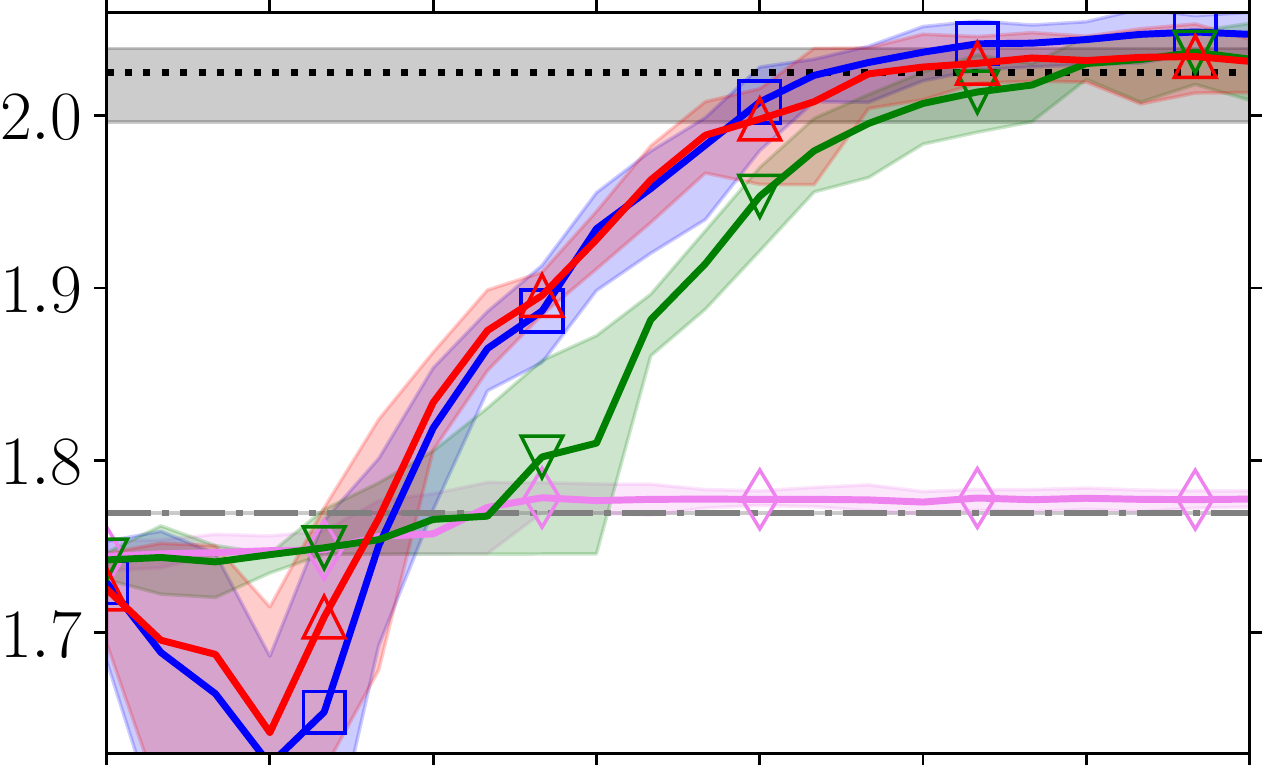}\hspace{1.28mm}
\\&
\includegraphics[scale=0.338]{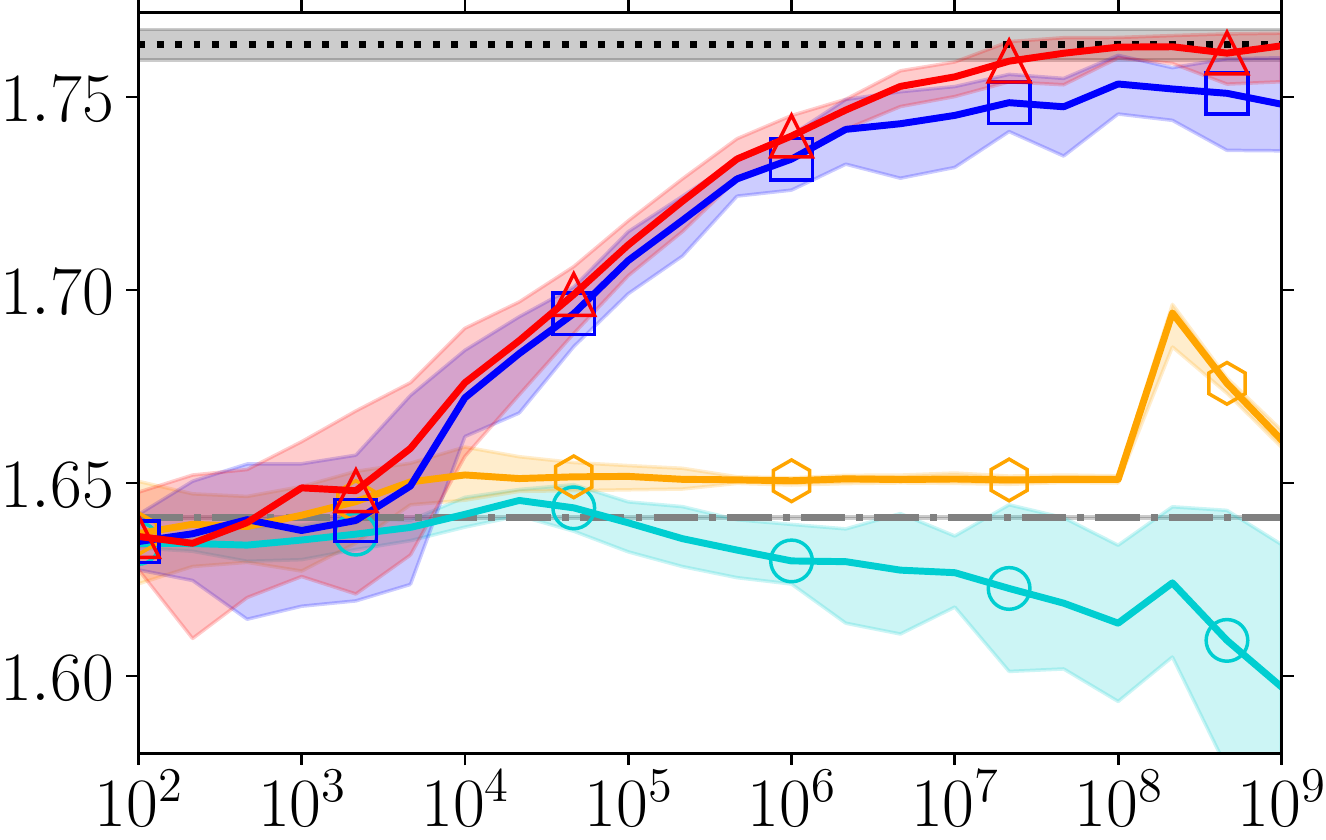} &
\includegraphics[scale=0.338]{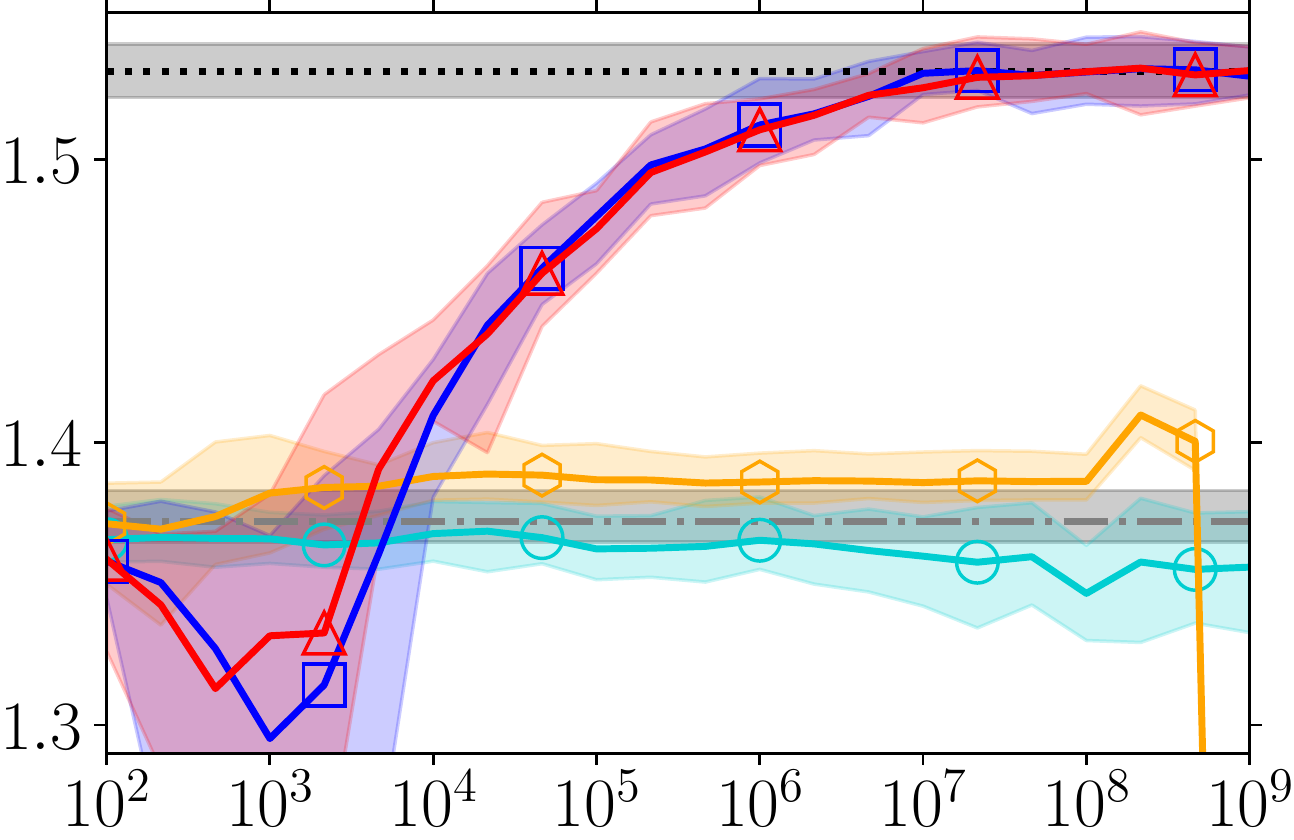} &
\includegraphics[scale=0.338]{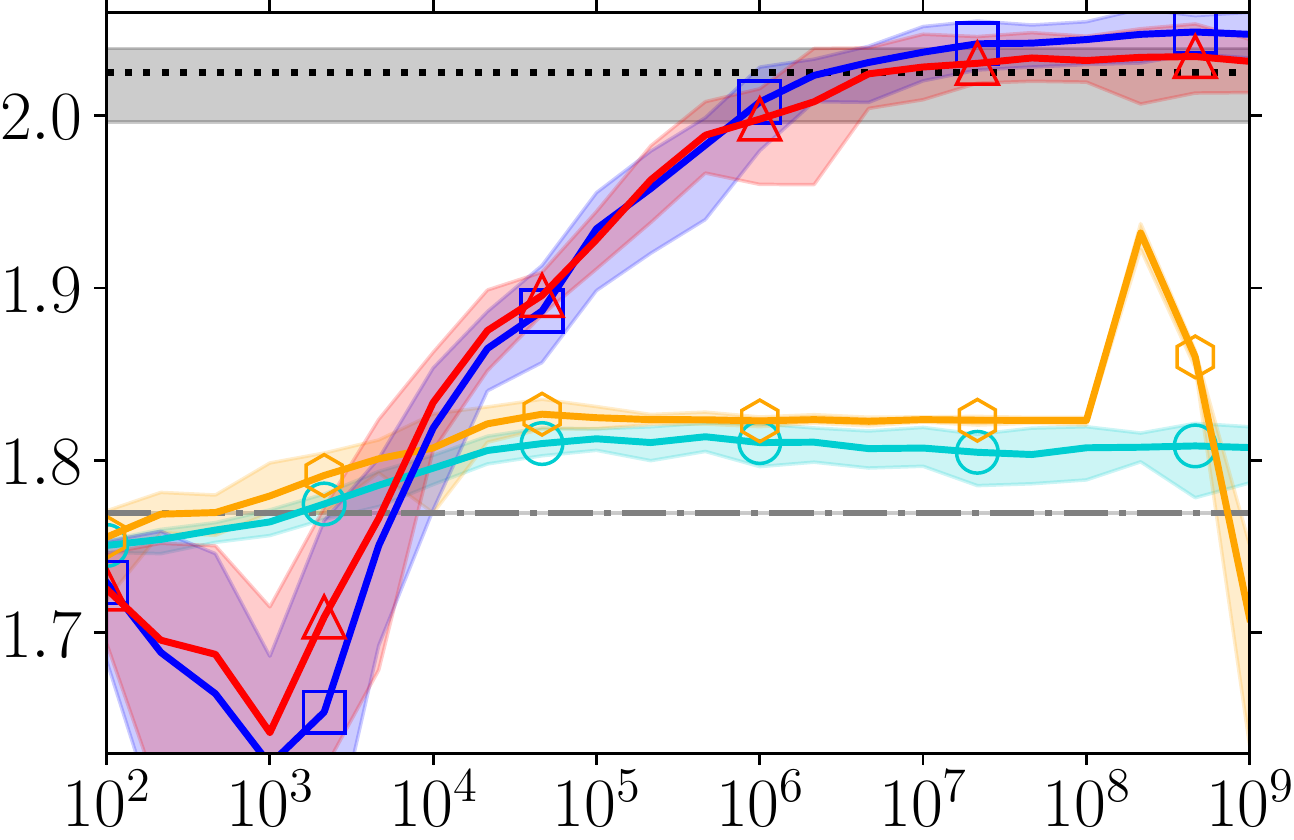}
\\
\multicolumn{4}{c}{
\includegraphics[scale=0.43]{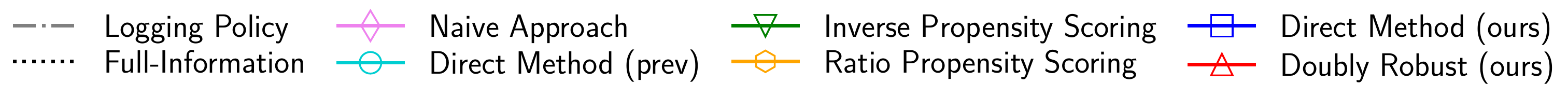}
} 
\end{tabular}
\caption{
Policy performance in terms of ECP (Eq.~\ref{eq:reward}) reached on three datasets and several settings.
Top row: top-5 setting with known $\alpha$ and $\beta$ bias parameters; middle row:
top-5 setting with estimated $\hat{\alpha}$ and $\hat{\beta}$;
bottom row: full-ranking setting (no cutoff) with known $\alpha$ and $\beta$ bias parameters.
Results are means over 20 independent runs, shaded areas indicate the 90\% confidence intervals; y-axis: ECP on the held-out test-set; x-axis: $N$ the number of displayed rankings in the simulated training set.
}
\label{fig:results}
\end{figure*}
}

{
\setlength{\tabcolsep}{0.000cm}
\begin{table*}[tp]
\centering
\caption{
Policy performance in terms of ECP (Eq.~\ref{eq:reward}) reached using different estimators in three different settings and three datasets for several $N$ values: the number of displayed rankings in the simulated training set.
In addition, the performance of the logging policy and a model trained on the ground-truth (Full-Information) are included to indicate estimated lower and upper bounds on possible performance respectively.
Top part: top-5 setting with known $\alpha$ and $\beta$ bias parameters; middle part:
top-5 setting with estimated $\hat{\alpha}$ and $\hat{\beta}$;
bottom part: full-ranking setting (no cutoff) with known $\alpha$ and $\beta$ bias parameters.
Reported numbers are averages over 20 independent runs evaluated on held-out test-sets, brackets display the standard deviation (logging policy deviation is ommitted since it did not vary between runs).
Bold numbers indicate the highest performance per setting, dataset and $N$ combination.
Statistical differences with our \ac{DR} estimator are measured via a two-sided student-t test,
$^{\tiny \blacktriangledown}$ and $^{\tiny \blacktriangle}$ indicate methods with significantly lower or higher ECP with $p<0.01$ respectively; additionally, $^{\tiny \triangledown}$ and $^{\tiny \triangle}$ indicate significant differences with $p<0.05$.
}
\label{tab:results}
\vspace{3\baselineskip}
\resizebox{\textwidth}{!}{
\begin{tabular}{ l c c c | c c c | c c c}
&\multicolumn{3}{c}{\footnotesize Yahoo! Webscope}
&\multicolumn{3}{c}{\footnotesize MSLR-WEB30k}
&\multicolumn{3}{c}{\footnotesize Istella}
\\ 
&\footnotesize $N = 10^4$& \footnotesize $N = 10^6$ & \footnotesize $N = 10^9$& \footnotesize $N = 10^4$ & \footnotesize $N = 10^6$ & \footnotesize $N = 10^9$& \footnotesize $N = 10^4$ & \footnotesize $N = 10^6$ & \footnotesize $N = 10^9$\\
 \toprule 
&\multicolumn{9}{c}{ \footnotesize \emph{Top-5 Setting with Known Bias Parameters} } \\ \midrule
\multicolumn{1}{l}{\footnotesize Logging } & \footnotesize 1.483 \phantom{\tiny(0.000)}\phantom{$^{\tiny -}$} & \footnotesize 1.483 \phantom{\tiny(0.000)}\phantom{$^{\tiny -}$} & \footnotesize 1.483 \phantom{\tiny(0.000)}\phantom{$^{\tiny -}$} & \footnotesize 1.110 \phantom{\tiny(0.007)}\phantom{$^{\tiny -}$} & \footnotesize 1.110 \phantom{\tiny(0.007)}\phantom{$^{\tiny -}$} & \footnotesize 1.110 \phantom{\tiny(0.007)}\phantom{$^{\tiny -}$} & \footnotesize 1.691 \phantom{\tiny(0.000)}\phantom{$^{\tiny -}$} & \footnotesize 1.691 \phantom{\tiny(0.000)}\phantom{$^{\tiny -}$} & \footnotesize 1.691 \phantom{\tiny(0.000)}\phantom{$^{\tiny -}$} \\ 
\multicolumn{1}{l}{\footnotesize Full-Info. } & \footnotesize 1.628 {\tiny(0.005)}$^{\tiny \blacktriangle}$ & \footnotesize 1.628 {\tiny(0.005)}$^{\tiny \blacktriangle}$ & \footnotesize 1.628 {\tiny(0.005)}$^{\tiny \blacktriangle}$ & \footnotesize 1.282 {\tiny(0.008)}$^{\tiny \blacktriangle}$ & \footnotesize 1.282 {\tiny(0.008)}$^{\tiny \blacktriangle}$ & \footnotesize 1.282 {\tiny(0.008)}\phantom{$^{\tiny -}$} & \footnotesize 1.994 {\tiny(0.021)}$^{\tiny \blacktriangle}$ & \footnotesize 1.994 {\tiny(0.021)}$^{\tiny \blacktriangle}$ & \footnotesize 1.994 {\tiny(0.021)}\phantom{$^{\tiny -}$} \\  \cline{2-10} 
\multicolumn{1}{l}{\footnotesize Naive } &\footnotesize 1.495 {\tiny(0.009)}$^{\tiny \blacktriangledown}$ &\footnotesize 1.498 {\tiny(0.002)}$^{\tiny \blacktriangledown}$ &\footnotesize 1.500 {\tiny(0.002)}$^{\tiny \blacktriangledown}$ &\footnotesize 1.114 {\tiny(0.009)}$^{\tiny \blacktriangledown}$ &\footnotesize 1.121 {\tiny(0.009)}$^{\tiny \blacktriangledown}$ &\footnotesize 1.121 {\tiny(0.007)}$^{\tiny \blacktriangledown}$ &\footnotesize 1.707 {\tiny(0.019)}$^{\tiny \blacktriangledown}$ &\footnotesize 1.736 {\tiny(0.006)}$^{\tiny \blacktriangledown}$ &\footnotesize 1.737 {\tiny(0.003)}$^{\tiny \blacktriangledown}$ \\
\multicolumn{1}{l}{\footnotesize DM (prev) } &\footnotesize 1.519 {\tiny(0.010)}$^{\tiny \blacktriangledown}$ &\footnotesize 1.540 {\tiny(0.009)}$^{\tiny \blacktriangledown}$ &\footnotesize 1.565 {\tiny(0.009)}$^{\tiny \blacktriangledown}$ &\footnotesize 1.139 {\tiny(0.022)}$^{\tiny \triangledown}$ &\footnotesize 1.214 {\tiny(0.016)}$^{\tiny \blacktriangledown}$ &\footnotesize 1.201 {\tiny(0.011)}$^{\tiny \blacktriangledown}$ &\footnotesize 1.539 {\tiny(0.072)}$^{\tiny \blacktriangledown}$ &\footnotesize 1.696 {\tiny(0.042)}$^{\tiny \blacktriangledown}$ &\footnotesize 1.904 {\tiny(0.009)}$^{\tiny \blacktriangledown}$ \\
\multicolumn{1}{l}{\footnotesize RPS } &\footnotesize 1.539 {\tiny(0.007)}$^{\tiny \blacktriangledown}$ &\footnotesize 1.581 {\tiny(0.004)}$^{\tiny \blacktriangledown}$ &\footnotesize 1.581 {\tiny(0.005)}$^{\tiny \blacktriangledown}$ &\footnotesize \textbf{1.187 {\tiny(0.009)}}$^{\tiny \blacktriangle}$ &\footnotesize 1.248 {\tiny(0.004)}$^{\tiny \blacktriangledown}$ &\footnotesize 1.250 {\tiny(0.006)}$^{\tiny \blacktriangledown}$ &\footnotesize \textbf{1.805 {\tiny(0.007)}}$^{\tiny \blacktriangle}$ &\footnotesize 1.875 {\tiny(0.008)}$^{\tiny \blacktriangledown}$ &\footnotesize 1.876 {\tiny(0.007)}$^{\tiny \blacktriangledown}$ \\
\multicolumn{1}{l}{\footnotesize IPS } &\footnotesize 1.486 {\tiny(0.007)}$^{\tiny \blacktriangledown}$ &\footnotesize 1.552 {\tiny(0.006)}$^{\tiny \blacktriangledown}$ &\footnotesize 1.590 {\tiny(0.004)}$^{\tiny \blacktriangledown}$ &\footnotesize 1.110 {\tiny(0.007)}$^{\tiny \blacktriangledown}$ &\footnotesize 1.187 {\tiny(0.008)}$^{\tiny \blacktriangledown}$ &\footnotesize 1.237 {\tiny(0.007)}$^{\tiny \blacktriangledown}$ &\footnotesize 1.710 {\tiny(0.032)}$^{\tiny \blacktriangledown}$ &\footnotesize 1.879 {\tiny(0.014)}$^{\tiny \blacktriangledown}$ &\footnotesize 1.950 {\tiny(0.014)}$^{\tiny \blacktriangledown}$ \\
\multicolumn{1}{l}{\footnotesize DM (ours) } &\footnotesize 1.542 {\tiny(0.006)}$^{\tiny \triangledown}$ &\footnotesize 1.594 {\tiny(0.004)}$^{\tiny \blacktriangledown}$ &\footnotesize 1.616 {\tiny(0.003)}$^{\tiny \blacktriangledown}$ &\footnotesize 1.143 {\tiny(0.030)}\phantom{$^{\tiny -}$} &\footnotesize 1.252 {\tiny(0.008)}$^{\tiny \blacktriangledown}$ &\footnotesize 1.271 {\tiny(0.007)}$^{\tiny \blacktriangledown}$ &\footnotesize 1.721 {\tiny(0.035)}$^{\tiny \blacktriangledown}$ &\footnotesize 1.898 {\tiny(0.023)}$^{\tiny \blacktriangledown}$ &\footnotesize 1.966 {\tiny(0.017)}$^{\tiny \blacktriangledown}$ \\
\multicolumn{1}{l}{\footnotesize DR (ours) } &\footnotesize \textbf{1.548 {\tiny(0.011)}}\phantom{$^{\tiny -}$} &\footnotesize \textbf{1.602 {\tiny(0.004)}}\phantom{$^{\tiny -}$} &\footnotesize \textbf{1.623 {\tiny(0.003)}}\phantom{$^{\tiny -}$} &\footnotesize 1.157 {\tiny(0.026)}\phantom{$^{\tiny -}$} &\footnotesize \textbf{1.263 {\tiny(0.008)}}\phantom{$^{\tiny -}$} &\footnotesize \textbf{1.281 {\tiny(0.006)}}\phantom{$^{\tiny -}$} &\footnotesize 1.769 {\tiny(0.037)}\phantom{$^{\tiny -}$} &\footnotesize \textbf{1.952 {\tiny(0.024)}}\phantom{$^{\tiny -}$} &\footnotesize \textbf{2.004 {\tiny(0.009)}}\phantom{$^{\tiny -}$} \\
\midrule
&\multicolumn{9}{c}{ \footnotesize \emph{Top-5 Setting with Estimated Bias Parameters} } \\ \midrule
\multicolumn{1}{l}{\footnotesize Logging } & \footnotesize 1.483 \phantom{\tiny(0.000)}\phantom{$^{\tiny -}$} & \footnotesize 1.483 \phantom{\tiny(0.000)}\phantom{$^{\tiny -}$} & \footnotesize 1.483 \phantom{\tiny(0.000)}\phantom{$^{\tiny -}$} & \footnotesize 1.110 \phantom{\tiny(0.007)}\phantom{$^{\tiny -}$} & \footnotesize 1.110 \phantom{\tiny(0.007)}\phantom{$^{\tiny -}$} & \footnotesize 1.110 \phantom{\tiny(0.007)}\phantom{$^{\tiny -}$} & \footnotesize 1.691 \phantom{\tiny(0.000)}\phantom{$^{\tiny -}$} & \footnotesize 1.691 \phantom{\tiny(0.000)}\phantom{$^{\tiny -}$} & \footnotesize 1.691 \phantom{\tiny(0.000)}\phantom{$^{\tiny -}$} \\ 
\multicolumn{1}{l}{\footnotesize Full-Info. } & \footnotesize 1.628 {\tiny(0.005)}$^{\tiny \blacktriangle}$ & \footnotesize 1.628 {\tiny(0.005)}$^{\tiny \blacktriangle}$ & \footnotesize 1.628 {\tiny(0.005)}$^{\tiny \blacktriangle}$ & \footnotesize 1.282 {\tiny(0.008)}$^{\tiny \blacktriangle}$ & \footnotesize 1.282 {\tiny(0.008)}$^{\tiny \blacktriangle}$ & \footnotesize 1.282 {\tiny(0.008)}\phantom{$^{\tiny -}$} & \footnotesize 1.994 {\tiny(0.021)}$^{\tiny \blacktriangle}$ & \footnotesize 1.994 {\tiny(0.021)}$^{\tiny \blacktriangle}$ & \footnotesize 1.994 {\tiny(0.021)}\phantom{$^{\tiny -}$} \\  \cline{2-10} 
\multicolumn{1}{l}{\footnotesize Naive } &\footnotesize 1.492 {\tiny(0.006)}$^{\tiny \blacktriangledown}$ &\footnotesize 1.498 {\tiny(0.003)}$^{\tiny \blacktriangledown}$ &\footnotesize 1.499 {\tiny(0.003)}$^{\tiny \blacktriangledown}$ &\footnotesize 1.112 {\tiny(0.008)}$^{\tiny \blacktriangledown}$ &\footnotesize 1.121 {\tiny(0.008)}$^{\tiny \blacktriangledown}$ &\footnotesize 1.122 {\tiny(0.006)}$^{\tiny \blacktriangledown}$ &\footnotesize 1.703 {\tiny(0.018)}\phantom{$^{\tiny -}$} &\footnotesize 1.739 {\tiny(0.005)}$^{\tiny \blacktriangledown}$ &\footnotesize 1.735 {\tiny(0.011)}$^{\tiny \blacktriangledown}$ \\
\multicolumn{1}{l}{\footnotesize DM (prev) } &\footnotesize 1.540 {\tiny(0.007)}$^{\tiny \blacktriangledown}$ &\footnotesize 1.570 {\tiny(0.009)}$^{\tiny \blacktriangledown}$ &\footnotesize 1.558 {\tiny(0.005)}$^{\tiny \blacktriangledown}$ &\footnotesize 1.153 {\tiny(0.025)}\phantom{$^{\tiny -}$} &\footnotesize 1.209 {\tiny(0.016)}$^{\tiny \blacktriangledown}$ &\footnotesize 1.191 {\tiny(0.011)}$^{\tiny \blacktriangledown}$ &\footnotesize 1.662 {\tiny(0.039)}$^{\tiny \blacktriangledown}$ &\footnotesize 1.769 {\tiny(0.049)}$^{\tiny \blacktriangledown}$ &\footnotesize 1.910 {\tiny(0.014)}$^{\tiny \blacktriangledown}$ \\
\multicolumn{1}{l}{\footnotesize RPS } &\footnotesize 1.537 {\tiny(0.007)}$^{\tiny \blacktriangledown}$ &\footnotesize 1.582 {\tiny(0.003)}$^{\tiny \blacktriangledown}$ &\footnotesize 1.581 {\tiny(0.005)}$^{\tiny \blacktriangledown}$ &\footnotesize \textbf{1.188 {\tiny(0.011)}}$^{\tiny \blacktriangle}$ &\footnotesize 1.245 {\tiny(0.006)}$^{\tiny \blacktriangledown}$ &\footnotesize 1.250 {\tiny(0.006)}$^{\tiny \blacktriangledown}$ &\footnotesize \textbf{1.811 {\tiny(0.009)}}$^{\tiny \blacktriangle}$ &\footnotesize 1.875 {\tiny(0.005)}$^{\tiny \blacktriangledown}$ &\footnotesize 1.876 {\tiny(0.007)}$^{\tiny \blacktriangledown}$ \\
\multicolumn{1}{l}{\footnotesize IPS } &\footnotesize 1.488 {\tiny(0.009)}$^{\tiny \blacktriangledown}$ &\footnotesize 1.555 {\tiny(0.006)}$^{\tiny \blacktriangledown}$ &\footnotesize 1.593 {\tiny(0.003)}$^{\tiny \blacktriangledown}$ &\footnotesize 1.111 {\tiny(0.007)}$^{\tiny \blacktriangledown}$ &\footnotesize 1.182 {\tiny(0.007)}$^{\tiny \blacktriangledown}$ &\footnotesize 1.233 {\tiny(0.009)}$^{\tiny \blacktriangledown}$ &\footnotesize 1.710 {\tiny(0.033)}\phantom{$^{\tiny -}$} &\footnotesize 1.874 {\tiny(0.013)}$^{\tiny \blacktriangledown}$ &\footnotesize 1.944 {\tiny(0.019)}$^{\tiny \blacktriangledown}$ \\
\multicolumn{1}{l}{\footnotesize DM (ours) } &\footnotesize 1.546 {\tiny(0.007)}\phantom{$^{\tiny -}$} &\footnotesize 1.591 {\tiny(0.010)}$^{\tiny \blacktriangledown}$ &\footnotesize 1.619 {\tiny(0.005)}$^{\tiny \triangledown}$ &\footnotesize 1.138 {\tiny(0.021)}\phantom{$^{\tiny -}$} &\footnotesize 1.251 {\tiny(0.011)}$^{\tiny \blacktriangledown}$ &\footnotesize 1.272 {\tiny(0.008)}$^{\tiny \blacktriangledown}$ &\footnotesize 1.662 {\tiny(0.047)}$^{\tiny \blacktriangledown}$ &\footnotesize 1.878 {\tiny(0.043)}$^{\tiny \blacktriangledown}$ &\footnotesize 1.961 {\tiny(0.010)}$^{\tiny \blacktriangledown}$ \\
\multicolumn{1}{l}{\footnotesize DR (ours) } &\footnotesize \textbf{1.548 {\tiny(0.006)}}\phantom{$^{\tiny -}$} &\footnotesize \textbf{1.603 {\tiny(0.003)}}\phantom{$^{\tiny -}$} &\footnotesize \textbf{1.623 {\tiny(0.004)}}\phantom{$^{\tiny -}$} &\footnotesize 1.149 {\tiny(0.028)}\phantom{$^{\tiny -}$} &\footnotesize \textbf{1.260 {\tiny(0.005)}}\phantom{$^{\tiny -}$} &\footnotesize \textbf{1.282 {\tiny(0.005)}}\phantom{$^{\tiny -}$} &\footnotesize 1.724 {\tiny(0.057)}\phantom{$^{\tiny -}$} &\footnotesize \textbf{1.945 {\tiny(0.014)}}\phantom{$^{\tiny -}$} &\footnotesize \textbf{1.994 {\tiny(0.012)}}\phantom{$^{\tiny -}$} \\
\midrule
&\multicolumn{9}{c}{ \footnotesize \emph{Full-Ranking Setting with Known Bias Parameters} } \\ \midrule
\multicolumn{1}{l}{\footnotesize Logging } & \footnotesize 1.641 \phantom{\tiny(0.000)}\phantom{$^{\tiny -}$} & \footnotesize 1.641 \phantom{\tiny(0.000)}\phantom{$^{\tiny -}$} & \footnotesize 1.641 \phantom{\tiny(0.000)}\phantom{$^{\tiny -}$} & \footnotesize 1.372 \phantom{\tiny(0.006)}\phantom{$^{\tiny -}$} & \footnotesize 1.372 \phantom{\tiny(0.006)}\phantom{$^{\tiny -}$} & \footnotesize 1.372 \phantom{\tiny(0.006)}\phantom{$^{\tiny -}$} & \footnotesize 1.769 \phantom{\tiny(0.000)}\phantom{$^{\tiny -}$} & \footnotesize 1.769 \phantom{\tiny(0.000)}\phantom{$^{\tiny -}$} & \footnotesize 1.769 \phantom{\tiny(0.000)}\phantom{$^{\tiny -}$} \\ 
\multicolumn{1}{l}{\footnotesize Full-Info. } & \footnotesize 1.764 {\tiny(0.004)}$^{\tiny \blacktriangle}$ & \footnotesize 1.764 {\tiny(0.004)}$^{\tiny \blacktriangle}$ & \footnotesize 1.764 {\tiny(0.004)}\phantom{$^{\tiny -}$} & \footnotesize 1.531 {\tiny(0.007)}$^{\tiny \blacktriangle}$ & \footnotesize 1.531 {\tiny(0.007)}$^{\tiny \blacktriangle}$ & \footnotesize 1.531 {\tiny(0.007)}\phantom{$^{\tiny -}$} & \footnotesize 2.025 {\tiny(0.013)}$^{\tiny \blacktriangle}$ & \footnotesize 2.025 {\tiny(0.013)}$^{\tiny \blacktriangle}$ & \footnotesize 2.025 {\tiny(0.013)}\phantom{$^{\tiny -}$} \\  \cline{2-10} 
\multicolumn{1}{l}{\footnotesize Naive } &\footnotesize 1.642 {\tiny(0.007)}$^{\tiny \blacktriangledown}$ &\footnotesize 1.642 {\tiny(0.002)}$^{\tiny \blacktriangledown}$ &\footnotesize 1.645 {\tiny(0.002)}$^{\tiny \blacktriangledown}$ &\footnotesize 1.369 {\tiny(0.007)}$^{\tiny \blacktriangledown}$ &\footnotesize 1.374 {\tiny(0.006)}$^{\tiny \blacktriangledown}$ &\footnotesize 1.374 {\tiny(0.006)}$^{\tiny \blacktriangledown}$ &\footnotesize 1.757 {\tiny(0.014)}$^{\tiny \blacktriangledown}$ &\footnotesize 1.777 {\tiny(0.003)}$^{\tiny \blacktriangledown}$ &\footnotesize 1.777 {\tiny(0.003)}$^{\tiny \blacktriangledown}$ \\
\multicolumn{1}{l}{\footnotesize DM (prev) } &\footnotesize 1.642 {\tiny(0.003)}$^{\tiny \blacktriangledown}$ &\footnotesize 1.630 {\tiny(0.005)}$^{\tiny \blacktriangledown}$ &\footnotesize 1.597 {\tiny(0.036)}$^{\tiny \blacktriangledown}$ &\footnotesize 1.368 {\tiny(0.007)}$^{\tiny \blacktriangledown}$ &\footnotesize 1.365 {\tiny(0.010)}$^{\tiny \blacktriangledown}$ &\footnotesize 1.356 {\tiny(0.015)}$^{\tiny \blacktriangledown}$ &\footnotesize 1.795 {\tiny(0.006)}$^{\tiny \blacktriangledown}$ &\footnotesize 1.810 {\tiny(0.006)}$^{\tiny \blacktriangledown}$ &\footnotesize 1.807 {\tiny(0.011)}$^{\tiny \blacktriangledown}$ \\
\multicolumn{1}{l}{\footnotesize RPS } &\footnotesize 1.652 {\tiny(0.004)}$^{\tiny \blacktriangledown}$ &\footnotesize 1.651 {\tiny(0.001)}$^{\tiny \blacktriangledown}$ &\footnotesize 1.661 {\tiny(0.002)}$^{\tiny \blacktriangledown}$ &\footnotesize 1.388 {\tiny(0.006)}$^{\tiny \blacktriangledown}$ &\footnotesize 1.386 {\tiny(0.006)}$^{\tiny \blacktriangledown}$ &\footnotesize 0.587 {\tiny(0.004)}$^{\tiny \blacktriangledown}$ &\footnotesize 1.807 {\tiny(0.018)}$^{\tiny \blacktriangledown}$ &\footnotesize 1.823 {\tiny(0.002)}$^{\tiny \blacktriangledown}$ &\footnotesize 1.708 {\tiny(0.044)}$^{\tiny \blacktriangledown}$ \\
\multicolumn{1}{l}{\footnotesize IPS } &\footnotesize 1.643 {\tiny(0.011)}$^{\tiny \blacktriangledown}$ &\footnotesize 1.716 {\tiny(0.003)}$^{\tiny \blacktriangledown}$ &\footnotesize 1.762 {\tiny(0.002)}\phantom{$^{\tiny -}$} &\footnotesize 1.368 {\tiny(0.009)}$^{\tiny \blacktriangledown}$ &\footnotesize 1.468 {\tiny(0.006)}$^{\tiny \blacktriangledown}$ &\footnotesize \textbf{1.532 {\tiny(0.007)}}\phantom{$^{\tiny -}$} &\footnotesize 1.766 {\tiny(0.022)}$^{\tiny \blacktriangledown}$ &\footnotesize 1.953 {\tiny(0.015)}$^{\tiny \blacktriangledown}$ &\footnotesize 2.033 {\tiny(0.013)}\phantom{$^{\tiny -}$} \\
\multicolumn{1}{l}{\footnotesize DM (ours) } &\footnotesize 1.672 {\tiny(0.009)}\phantom{$^{\tiny -}$} &\footnotesize 1.734 {\tiny(0.005)}$^{\tiny \blacktriangledown}$ &\footnotesize 1.748 {\tiny(0.009)}$^{\tiny \blacktriangledown}$ &\footnotesize 1.409 {\tiny(0.021)}\phantom{$^{\tiny -}$} &\footnotesize \textbf{1.512 {\tiny(0.010)}}\phantom{$^{\tiny -}$} &\footnotesize 1.529 {\tiny(0.007)}\phantom{$^{\tiny -}$} &\footnotesize 1.819 {\tiny(0.028)}\phantom{$^{\tiny -}$} &\footnotesize \textbf{2.008 {\tiny(0.013)}}\phantom{$^{\tiny -}$} &\footnotesize \textbf{2.047 {\tiny(0.009)}}$^{\tiny \blacktriangle}$ \\
\multicolumn{1}{l}{\footnotesize DR (ours) } &\footnotesize \textbf{1.676 {\tiny(0.010)}}\phantom{$^{\tiny -}$} &\footnotesize \textbf{1.740 {\tiny(0.004)}}\phantom{$^{\tiny -}$} &\footnotesize \textbf{1.763 {\tiny(0.004)}}\phantom{$^{\tiny -}$} &\footnotesize \textbf{1.422 {\tiny(0.017)}}\phantom{$^{\tiny -}$} &\footnotesize 1.510 {\tiny(0.009)}\phantom{$^{\tiny -}$} &\footnotesize 1.531 {\tiny(0.006)}\phantom{$^{\tiny -}$} &\footnotesize \textbf{1.834 {\tiny(0.019)}}\phantom{$^{\tiny -}$} &\footnotesize 1.998 {\tiny(0.019)}\phantom{$^{\tiny -}$} &\footnotesize 2.031 {\tiny(0.012)}\phantom{$^{\tiny -}$} \\
\bottomrule

\end{tabular}
}
\vspace{\baselineskip}
\end{table*}
}

\section{Results}
\label{sec:results}

Our main experimental results are displayed in Figure~\ref{fig:results} and Table~\ref{tab:results}.
Both display performance reached, in terms of the ECP metric (Eq.~\ref{eq:reward}), for different estimators on varying amounts of simulated interaction data; and
both are split in three rows, each indicating the results of one of the three simulated settings.
The displayed results are means over twenty independent runs,  90\% confidence intervals are visualized in Figure~\ref{fig:results} so that meaningful differences can be recognized.
Furthermore, Table~\ref{tab:results} displays standard deviations and statistical significant performance differences with \ac{DR} using a two-sided student's t-test~\citep{student1908probable}.

\subsection{Performance of Inverse Propensity Scoring}
To begin our analysis, we consider the performance of \ac{IPS};
In Figure~\ref{fig:results}, we see that in both top-5 settings \ac{IPS} is unable to reach optimal ECP when $N \leq 10^9$ on any of the datasets, an observation also made in previous work~\cite{oosterhuis2021onlinecounterltr}.
In the top-5 setting with known bias parameters, \ac{IPS} is theoretically proven to be unbiased and will converge at optimal ECP as $N \rightarrow \infty$.
Consequently, we can conclude that it is high variance which prevents us from observing \ac{IPS}'s convergence in the top-row of Figure~\ref{fig:results}
\footnote{The effect of clipping can be excluded since our clipping strategy has no effect in our experimental setting when $N = 10^9$.}.
This observation illustrates the importance of variance reduction: it is not bias but high variance that prevents \ac{IPS} from reaching optimal performance with feasible amounts of interaction data.
In contrast with the two top-rows of Figure~\ref{fig:results}, the bottom row shows that \ac{IPS} can reach optimal ECP on all datasets in the full-ranking setting.
A plausible explanation is that interactions on a complete ranking provide much more information than when only the top-5 can be interacted with.
Possibly, the item-selection-bias in the top-5 settings greatly increase the variance of \ac{IPS} due to the \emph{winner-takes-all} behavior described in Section~\ref{sec:regression}.
Overall, we see that while \ac{IPS} can approximate optimal ECP in the full-ranking setting with reasonable amounts of data, its variance prevents it from reaching good ECP in top-5 settings even when given an enormous number of interactions.

\subsection{Performance of Novel Direct Method and Doubly-Robust Estimators}
Next we consider whether our \ac{DR} estimator provides an improvement over the performance of \ac{IPS}.
Figure~\ref{fig:results} reveals that this is clearly the case: in all settings it outperforms \ac{IPS} when $N \approx 10^4$ and only in the full-ranking setting does \ac{IPS} catch up around $N \approx 10^7$.
Moreover, \ac{DR} always has a higher or comparable mean ECP than \ac{IPS} when $N \geq 10^4$, across all datasets and settings.
Table~\ref{tab:results} does not report a single instance of \ac{IPS} significantly outperforming \ac{DR} but many instances where \ac{DR} significantly outperforms \ac{IPS}.
In all of the top-5 settings, the ECP of \ac{DR} when $N=10^6$ is not reached by \ac{IPS} when $N=10^9$, regardless of whether bias parameters are known or estimated.
Therefore the \ac{DR} appears to provide an increase in data-efficiency over \ac{IPS} of a factor greater than \numprint{1,000} in all of the top-5 settings.
We thus confidently conclude that \ac{DR} provides significantly and considerably higher performance than state-of-the-art \ac{IPS}, given that $N \geq 10^4$, which in top-5 settings leads to an enormous increase in data-efficiency.

Subsequently, we compare the performance of our \ac{DM} estimator with \ac{IPS}.
In Figure~\ref{fig:results}, we see that in some cases \ac{DM} has substantially better ECP than \ac{IPS}, particularly in the top-5 settings on  Yahoo!\ and MSLR.
Yet, we also see that, on all settings on the Istella dataset, the ECP differences between \ac{DM} and \ac{IPS} are much smaller; and on the full-ranking setting on Yahoo!, \ac{DM} appears to converge at noticeably worse ECP than \ac{IPS}.
These results are quite surprising as it shows that this simple but previously-unconsidered approach is actually a competitive baseline to \ac{IPS}.
We conclude that \ac{DM} appears preferable over \ac{IPS} in top-5 settings where not all items can be displayed at once, but not necessarily in full-ranking settings.

Lastly, our comparison considers both our novel \ac{DM} and \ac{DR} estimators.
In Figure~\ref{fig:results}, we see that overall \ac{DM} has lower ECP than \ac{DR}, but in some cases it has comparable or not significantly different ECP.
Table~\ref{tab:results} reveals that in both the top-5 settings and the full-ranking setting on Yahoo!, \ac{DR} has a significantly higher ECP than \ac{DM}, even though these differences are smaller than compared with \ac{IPS}.
This result confirms that the \ac{DR} approach can indeed effectively use click data to correct for the regression mistakes of \ac{DM}.
It appears that this is especially the case on the Istella dataset, where in both top-5 settings there is a considerable performance difference between our \ac{DR} and \ac{DM} estimators; here Figure~\ref{fig:results} shows the ECP reached by our \ac{DM} when $N=10^9$ is reached by our \ac{DR} when $N\approx10^7$.
Notably, \ac{DM} appears to converge on suboptimal ECP in the full-ranking setting on Yahoo!\ and in both top-5 settings on MSLR, indicating that its unbiasedness criteria (Eq.~\ref{eq:regressionbiascondition}) are not met in these situations.
In stark contrast, our \ac{DR} estimator reaches near-optimal ECP in all tested scenarios, which corresponds with its much more robust unbiasedness criteria.
Overall, our results indicate that \ac{DR} in the majority of cases significantly outperforms \ac{DM}.

Our observations seem to confirm several of our expectations from our theoretical analysis:
The inability of \ac{IPS} to reach optimal ECP when $N=10^9$ in top-5 settings confirms that variance is its biggest obstacle.
The large increase of \ac{DM} over \ac{IPS} seems to indicate that the usage of regression estimates provides a large reduction in variance.
The cases of suboptimal convergence of \ac{DM} show that its impractical unbiasedness criteria are infeasible in some of our experimental settings.
Finally, in all settings and datasets, our \ac{DR} estimator has significantly better or comparable ECP to \ac{DM} and \ac{IPS}, while always converging near optimal ECP.
This observation shows that \ac{DR} effectively combines the variance reduction of \ac{DM} with the more feasible unbiasedness criteria of \ac{IPS}, and clearly provides the most robust and highest performance across our tested settings and datasets.

Despite the large aforementioned advantages, we note that a downside of our \ac{DM} and \ac{DR} estimators is that they provide low ECP in some settings when $N \leq 10^4$.
It appears that our early-stopping strategy, which is very effective for \ac{IPS}, is not able to handle incorrect regression estimates very well.
Future work could investigate whether this could be remedied with safe deployment strategies~\citep{jagerman2020safe, oosterhuis2021robust}, that prevent deploying models with uncertain performance.
However, it seems doubtful to us that in a real-world setting such little data is available that $N \leq 10^4$.
Nonetheless, our results show that \ac{DM} and \ac{DR} are less resilient to tiny amounts of training data than \ac{IPS}.

\subsection{Comparison with other Baselines}

Finally, our comparison also includes other baseline methods: the naive estimator, \ac{RPS} and \ac{DM} from previous work.
We note that all of these methods are biased in our setting: the naive estimator explicitly ignores position-bias, \ac{RPS} trades bias for less variance and the \ac{DM} from previous work ignores trust-bias.
Unsurprisingly, Figure~\ref{fig:results} and Table~\ref{tab:results} show that they are all unable to converge at optimal ECP in any of the settings.
The effect of trust-bias appears particularly large in the full-ranking setting, where none of these baselines are able to substantially improve ECP over the logging policy.
The ECP of \ac{RPS} appears very sensitive to propensity clipping: when $N\approx10^9$ and clipping no longer has effect its performance completely drops.
Nevertheless, these baselines show that often a decrease in variance can be favourable over unbiasedness, as some of them provide higher ECP than the unbiased \ac{IPS} estimator in the top-5 settings on Yahoo!\ and MSLR, especially when $N$ is small.
Regardless, due their bias, they are unable to combine optimal convergence with low variance.
It appears that our \ac{DR} estimator is the only method that effectively combines these properties.

\begin{figure*}[t]
\centering
\begin{tabular}{@{}l @{}l @{}l}
 \multicolumn{1}{c}{\hspace{0.12cm} \footnotesize Yahoo! Webscope}
&
 \multicolumn{1}{c}{\hspace{0.0cm} \footnotesize MSLR-WEB30k}
&
 \multicolumn{1}{c}{\hspace{-0.15cm} \footnotesize Istella}
\\
\includegraphics[scale=0.34]{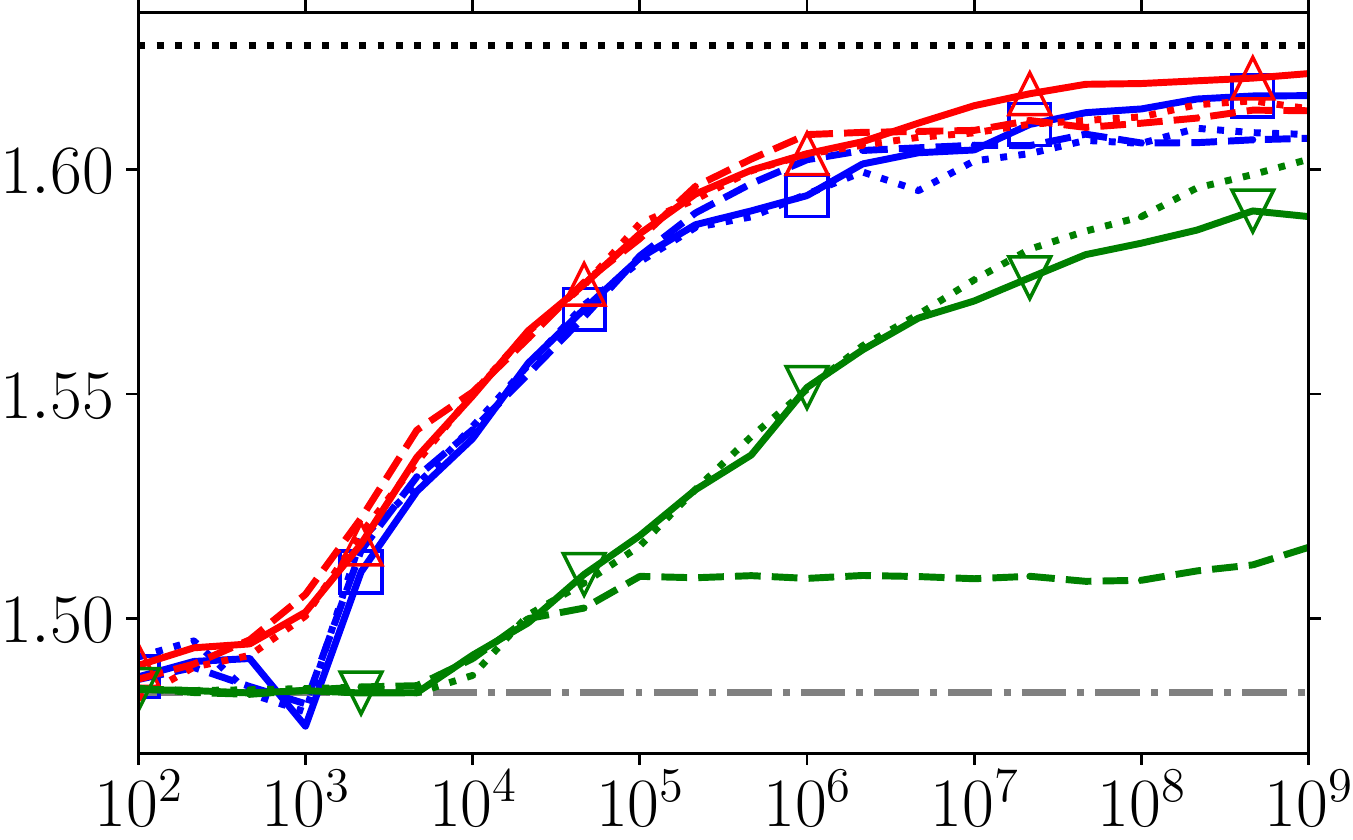}\hspace{1.28mm} &
\includegraphics[scale=0.34]{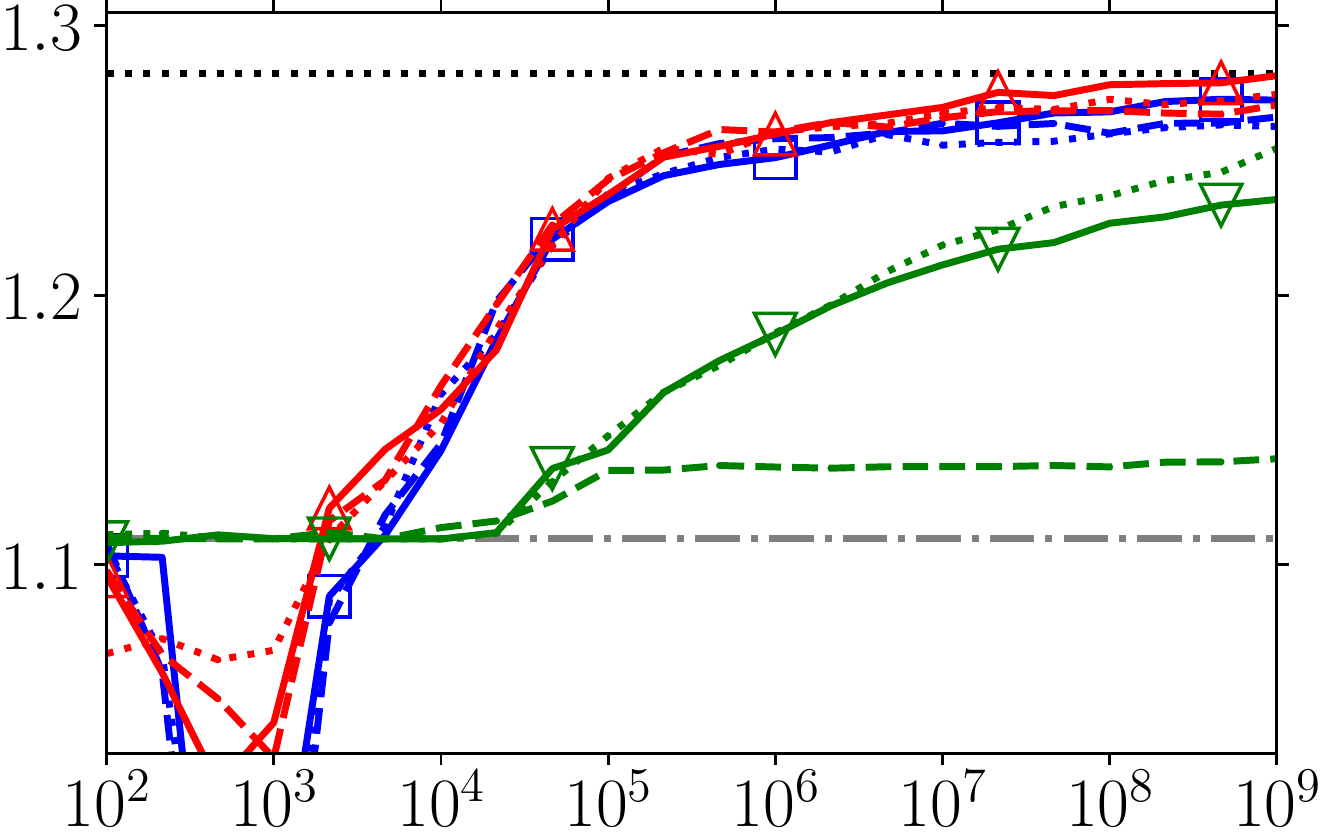}\hspace{1.28mm} &
\includegraphics[scale=0.34]{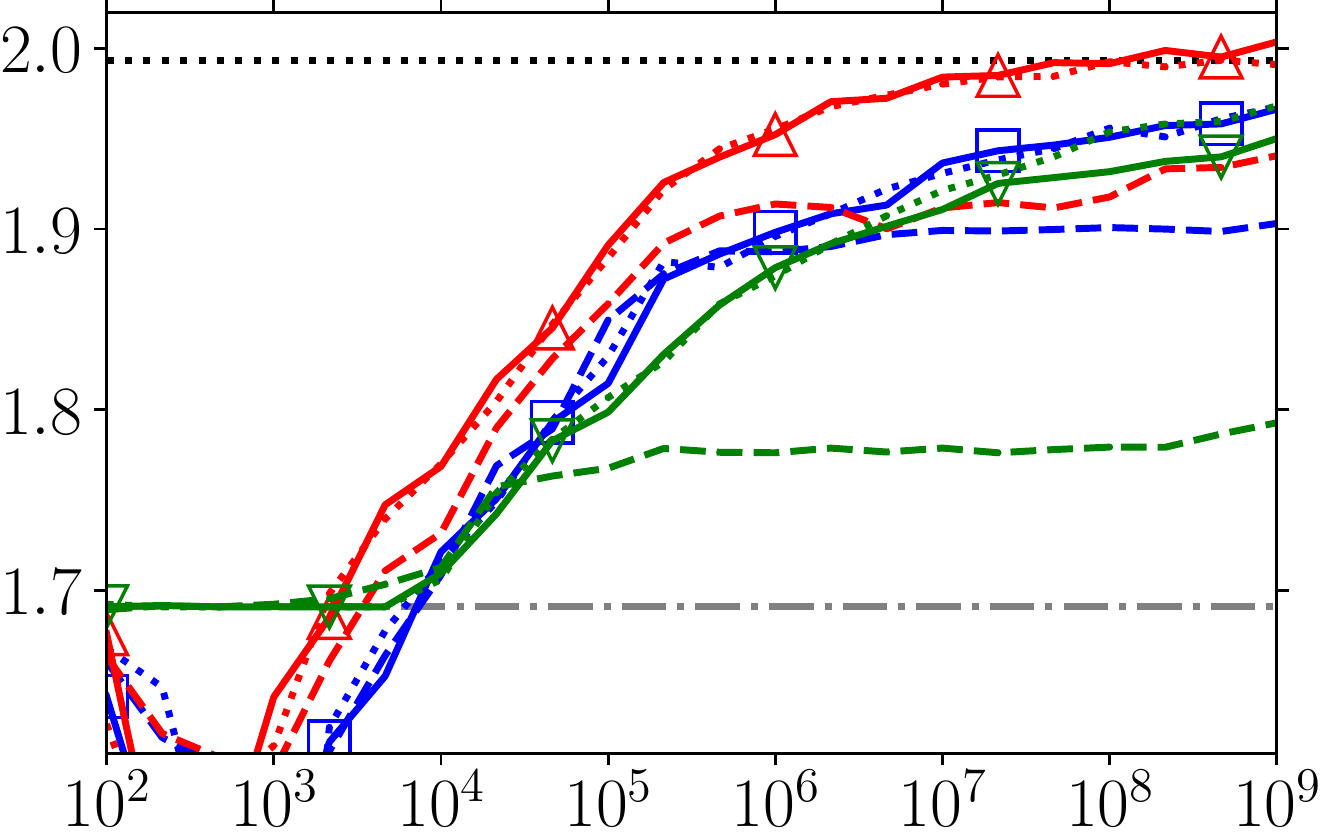}\hspace{1.28mm}
\\
 \multicolumn{1}{c}{\footnotesize Number of Displayed Queries ($N$)}
 &
 \multicolumn{1}{c}{\footnotesize Number of Displayed Queries ($N$)}
 &
 \multicolumn{1}{c}{\footnotesize Number of Displayed Queries ($N$)}
\\
\multicolumn{3}{c}{
\includegraphics[width=0.98\textwidth]{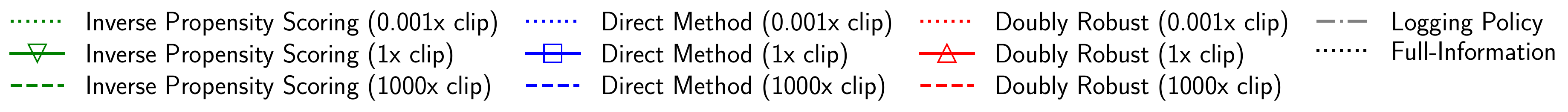}
} 
\end{tabular}
\caption{
The effect of different clipping strategies on the ECP (Eq.~\ref{eq:reward}) of three estimators in the top-5 known-bias setting.
Clipping strategies applied are standard:
$\tau^\text{1x} = 10/\sqrt{N}$, little: $\tau^\text{0.001x} = 10^{-2}/\sqrt{N}$, and heavy: $\tau^\text{1000x} = 10^4/\sqrt{N}$ (cf.~Eq.~\ref{eq:estimatedvalues}).
Results are means over 20 independent runs; y-axis: policy performance in terms of ECP (Eq.~\ref{eq:reward}) on the held-out test-set; x-axis: $N$ the number of displayed rankings in the simulated training set.
}
\label{fig:clipN}
\vspace{\baselineskip}
\centering
\begin{tabular}{@{}l @{}l @{}l}
 \multicolumn{1}{c}{\hspace{0.15cm} \footnotesize Yahoo! Webscope}
&
 \multicolumn{1}{c}{\hspace{0.15cm} \footnotesize MSLR-WEB30k}
&
 \multicolumn{1}{c}{\hspace{-0.15cm} \footnotesize Istella}
\\
\includegraphics[scale=0.34]{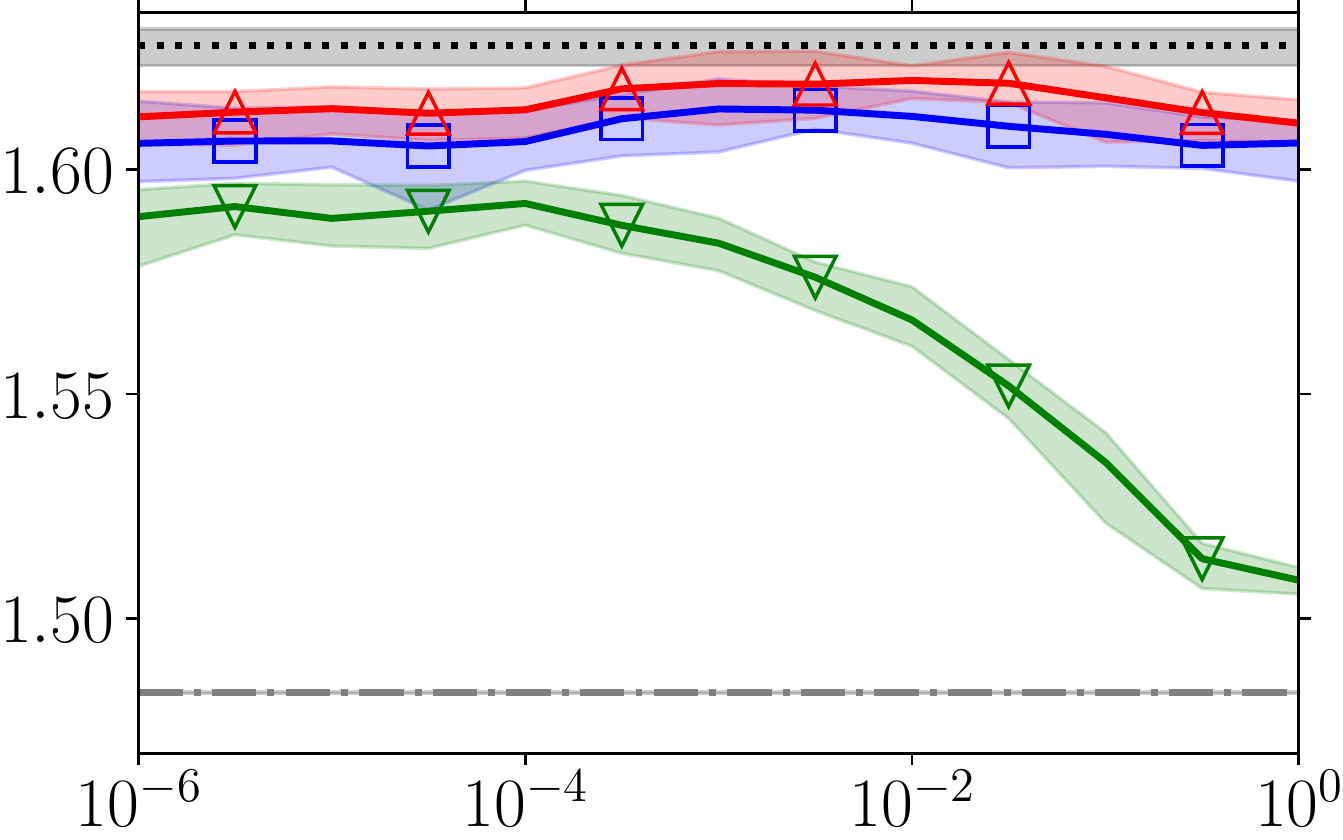}\hspace{1.28mm} &
\includegraphics[scale=0.34]{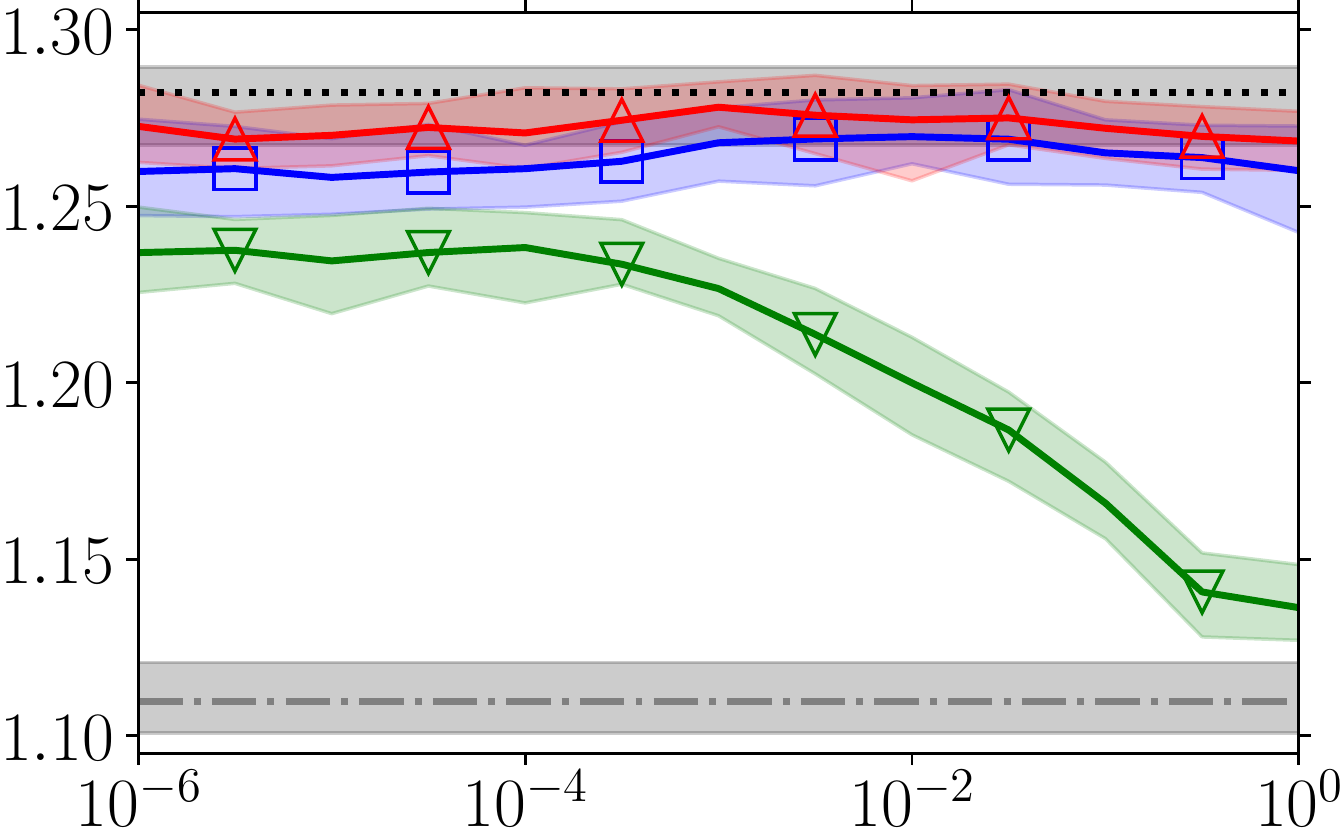}\hspace{1.28mm} &
\includegraphics[scale=0.34]{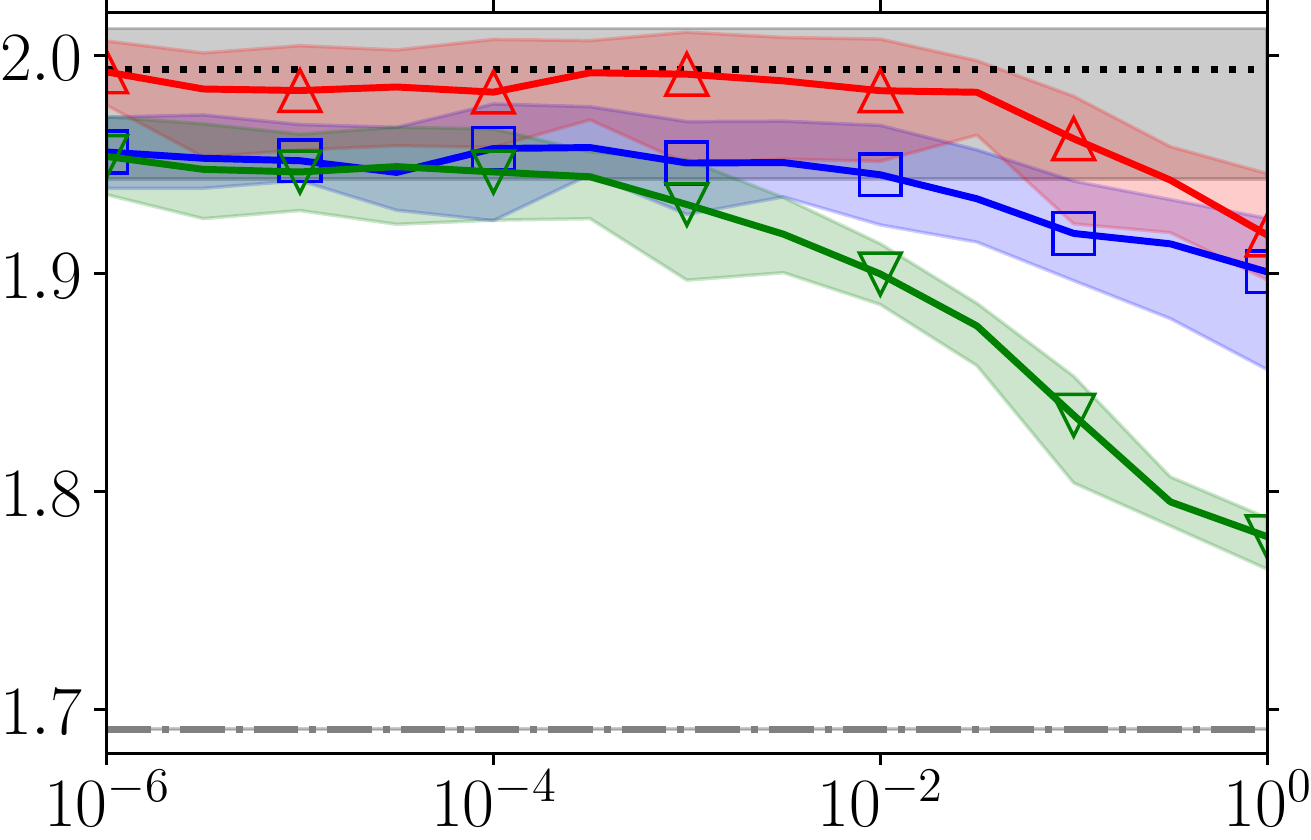}\hspace{1.28mm}
\\
 \multicolumn{1}{c}{\footnotesize Clipping Threshold  ($\tau$)}
 &
 \multicolumn{1}{c}{\footnotesize Clipping Threshold  ($\tau$)}
 &
 \multicolumn{1}{c}{\footnotesize Clipping Threshold  ($\tau$)}
\\
\multicolumn{3}{c}{
\includegraphics[width=0.98\textwidth]{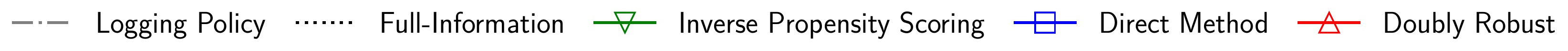}
} 
\end{tabular}
\caption{
The effect of the clipping parameter $\tau$ (Eq.~\ref{eq:estimatedvalues}) on the ECP (Eq.~\ref{eq:reward}) of three estimators in the top-5 known-bias setting when the number of impressions $N=10^8$.
Results are means over 20 independent runs, shaded areas indicate the 90\% confidence intervals; y-axis: policy performance in terms of ECP (Eq.~\ref{eq:reward}) on the held-out test-set; x-axis: $\tau$ the clipping threshold.
}
\label{fig:cliptau}
\end{figure*}

\begin{figure*}[t]
\centering
\begin{tabular}{@{}l @{}l @{}l}
 \multicolumn{1}{c}{\hspace{0.12cm} \footnotesize Yahoo! Webscope}
&
 \multicolumn{1}{c}{\hspace{0.0cm} \footnotesize MSLR-WEB30k}
&
 \multicolumn{1}{c}{\hspace{-0.15cm} \footnotesize Istella}
\\
\includegraphics[scale=0.34]{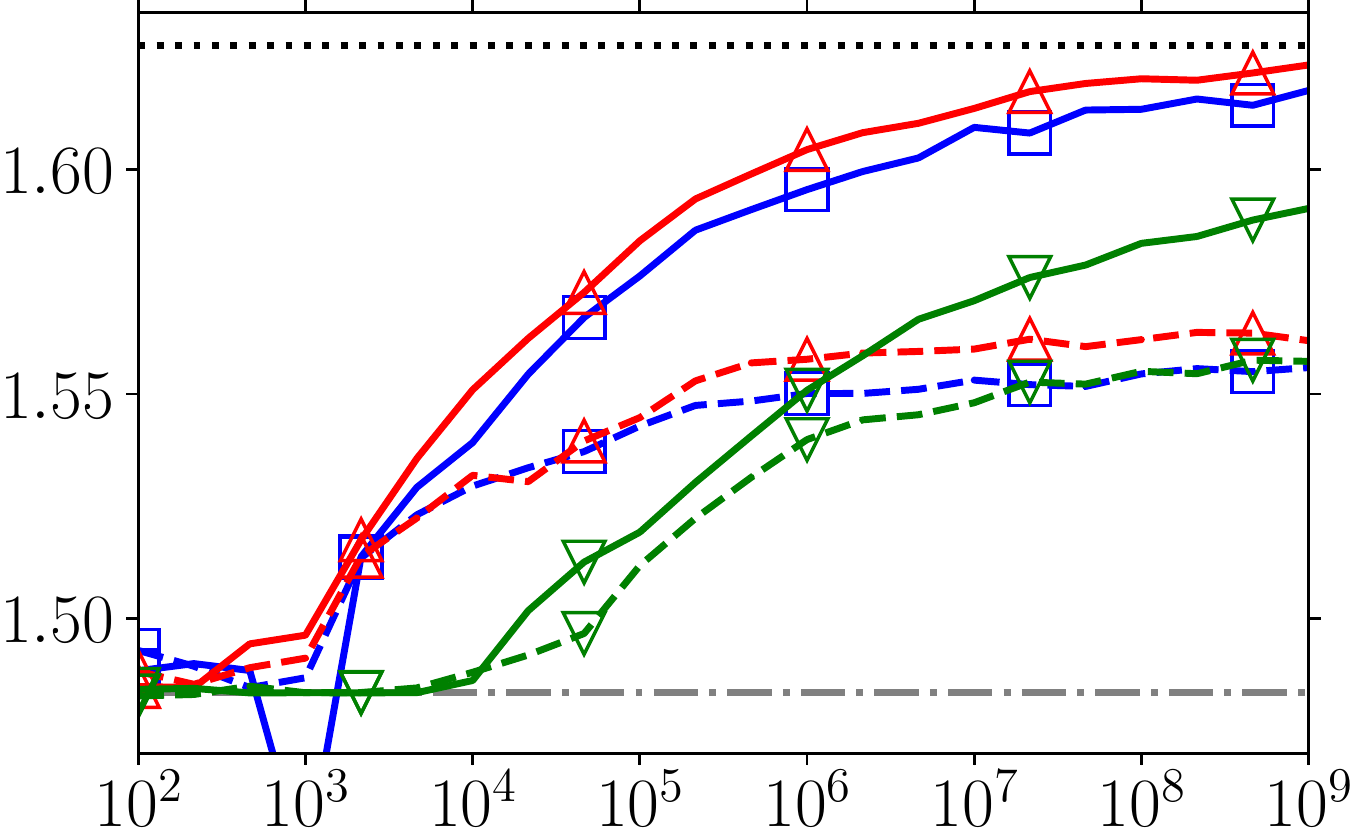}\hspace{1.28mm} &
\includegraphics[scale=0.34]{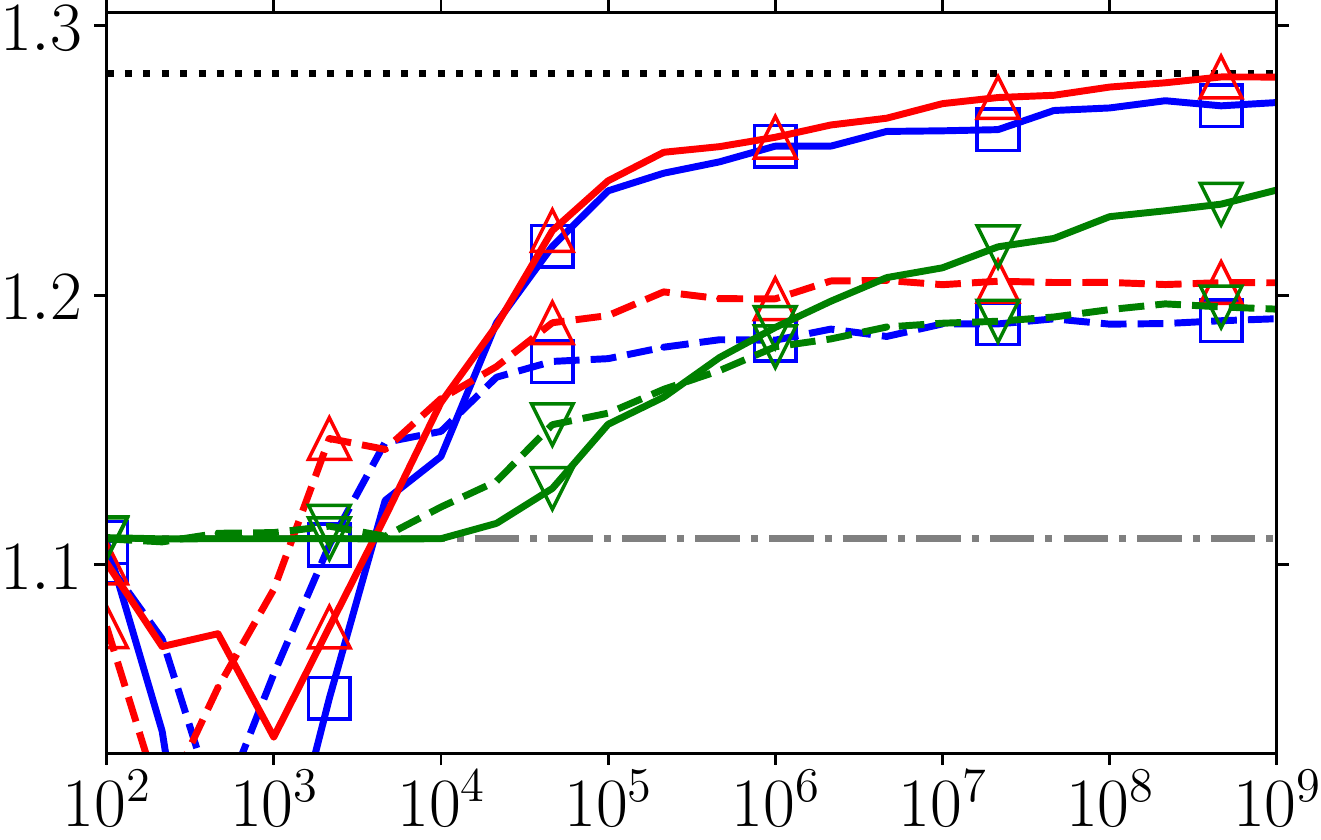}\hspace{1.28mm} &
\includegraphics[scale=0.34]{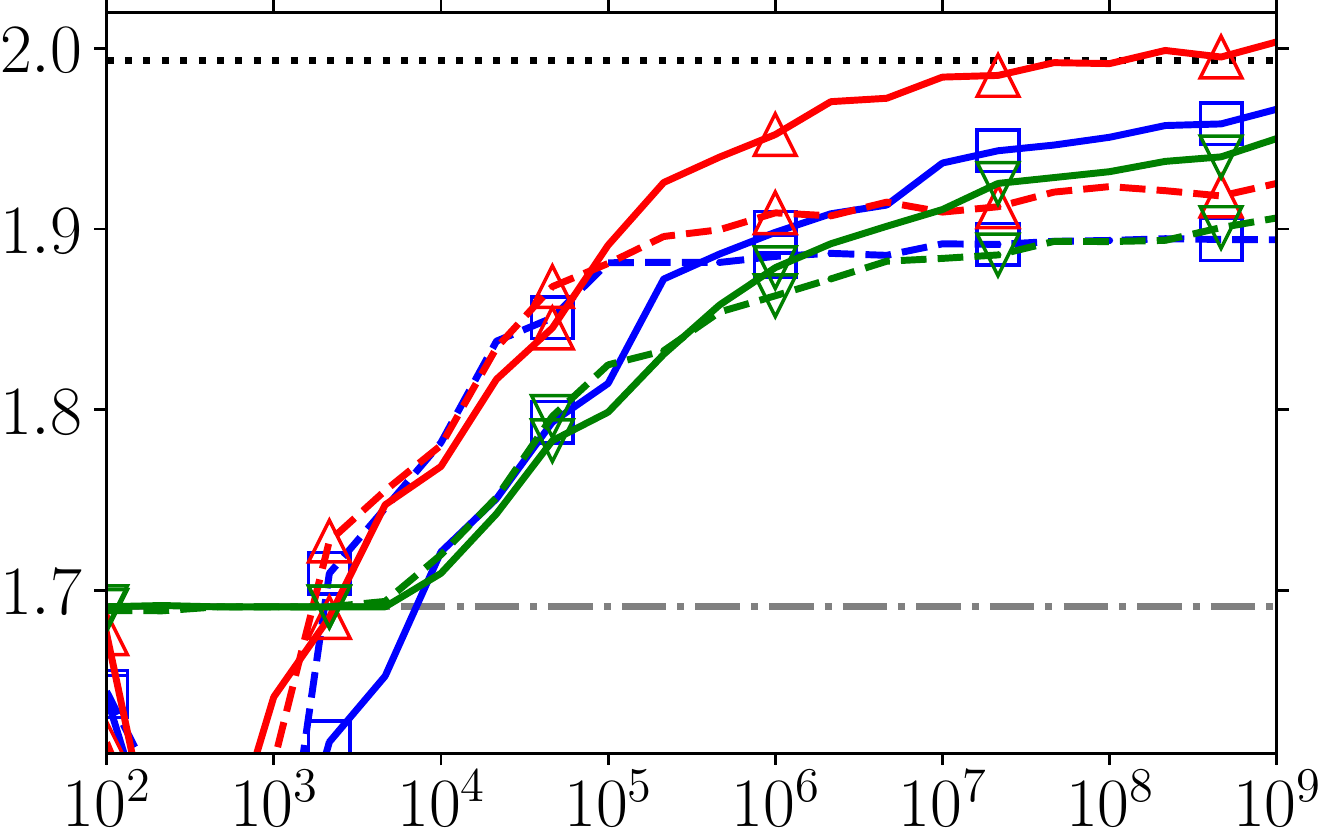}\hspace{1.28mm}
\\
 \multicolumn{1}{c}{\footnotesize Number of Displayed Queries ($N$)}
 &
 \multicolumn{1}{c}{\footnotesize Number of Displayed Queries ($N$)}
 &
 \multicolumn{1}{c}{\footnotesize Number of Displayed Queries ($N$)}
\\
\multicolumn{3}{c}{
\includegraphics[width=0.98\textwidth]{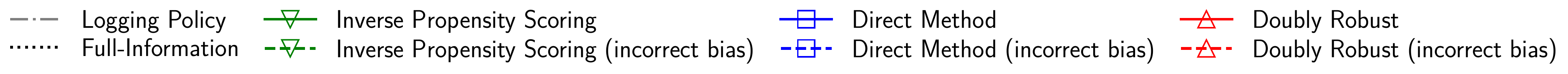}
} 
\end{tabular}
\caption{
Effect of very incorrect bias parameters $\hat{\alpha}$ and $\hat{\beta}$ on the ECP (Eq.~\ref{eq:reward}) of three estimators in the top-5 known-bias setting.
Incorrect bias estimates are the mean of the true values across all positions: $\hat{\alpha}_k = \sum_{i=1}^5 \alpha_i/ 5$ and $\hat{\beta}_k = \sum_{i=1}^5 \beta_i / 5$, as if there is no position-bias effect within the top-5.
Results are means over 20 independent runs; y-axis: policy performance in terms of ECP (Eq.~\ref{eq:reward}) on the held-out test-set; x-axis: $N$ the number of displayed rankings in the simulated training set.
}
\label{fig:biasN}
\vspace{\baselineskip}
\centering
\begin{tabular}{@{}l @{}l @{}l}
 \multicolumn{1}{c}{\hspace{0.15cm} \footnotesize Yahoo! Webscope}
&
 \multicolumn{1}{c}{\hspace{0.15cm} \footnotesize MSLR-WEB30k}
&
 \multicolumn{1}{c}{\hspace{-0.15cm} \footnotesize Istella}
\\
\includegraphics[scale=0.34]{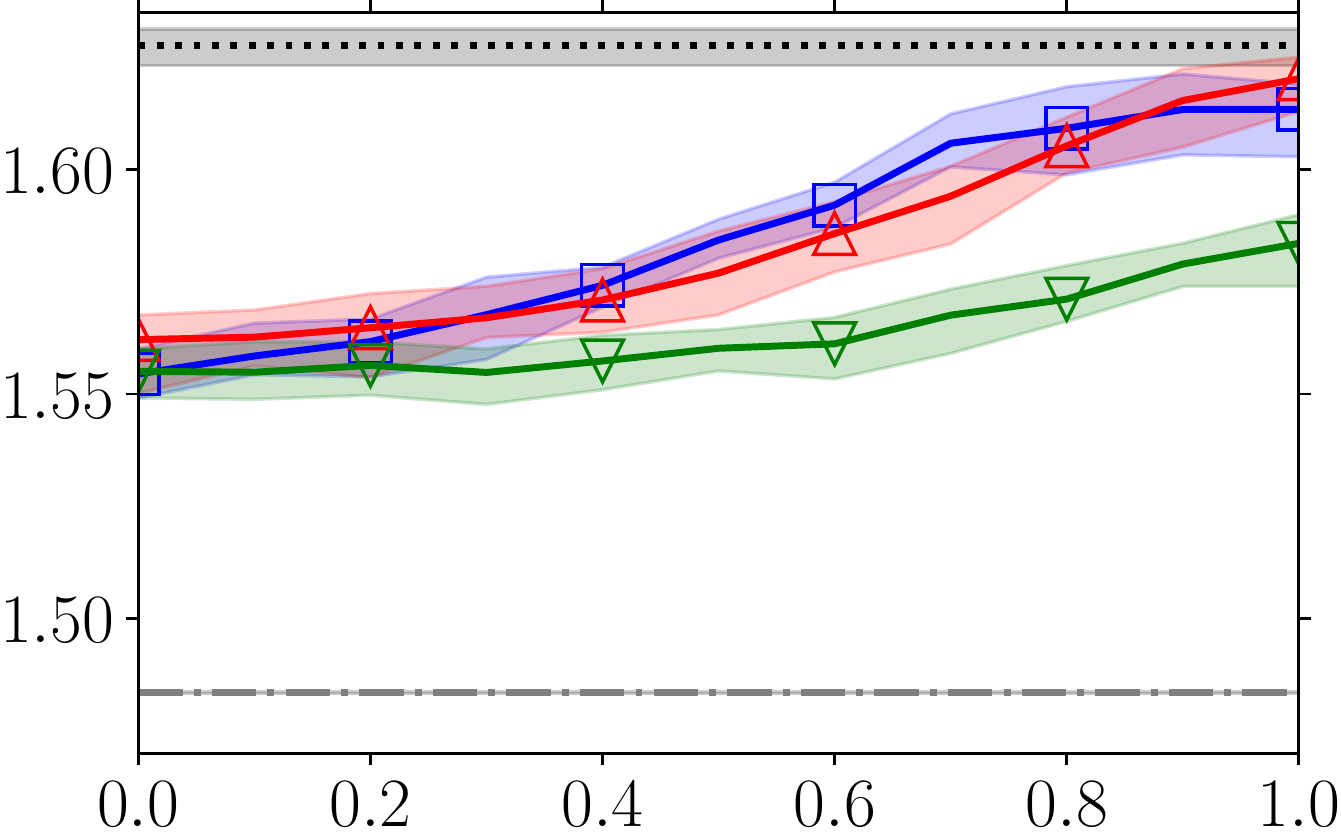}\hspace{1.28mm} &
\includegraphics[scale=0.34]{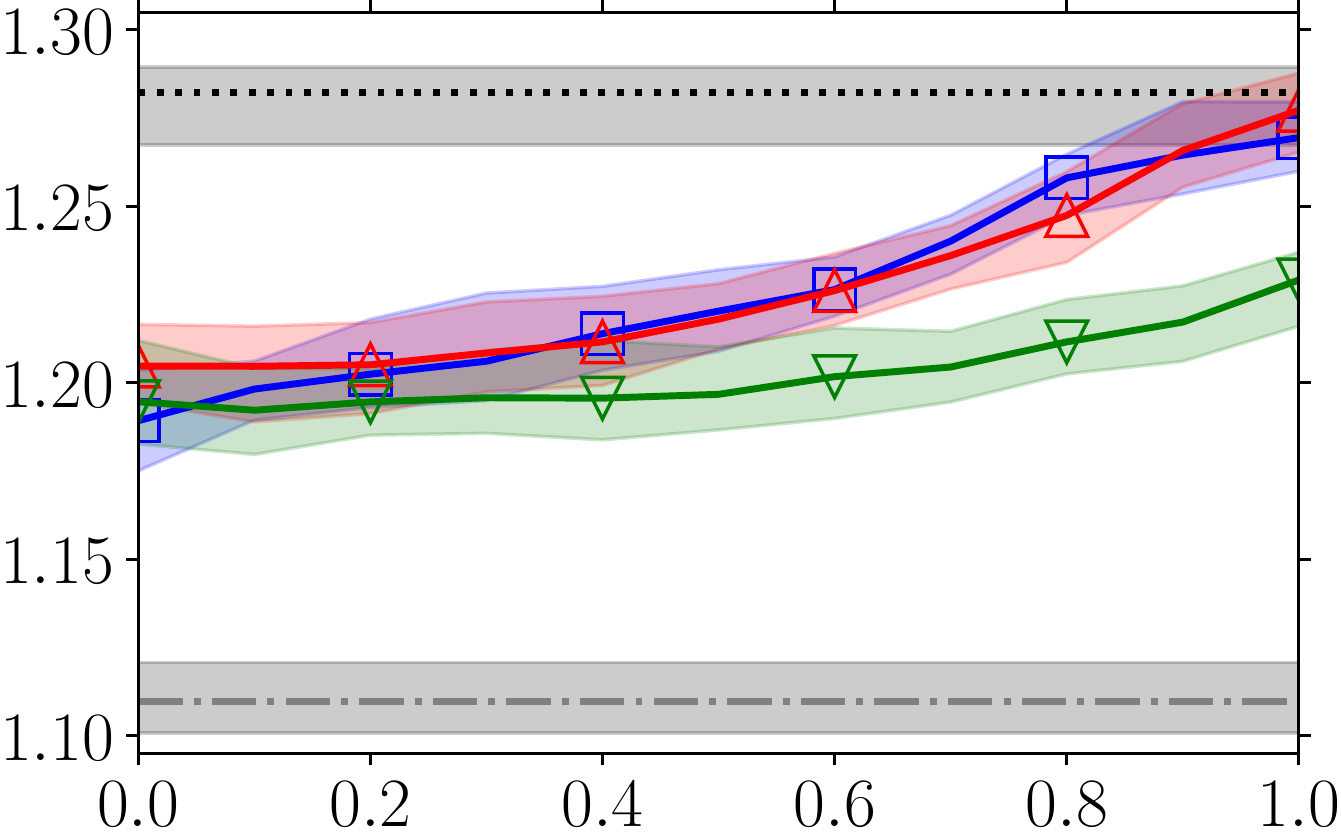}\hspace{1.28mm} &
\includegraphics[scale=0.34]{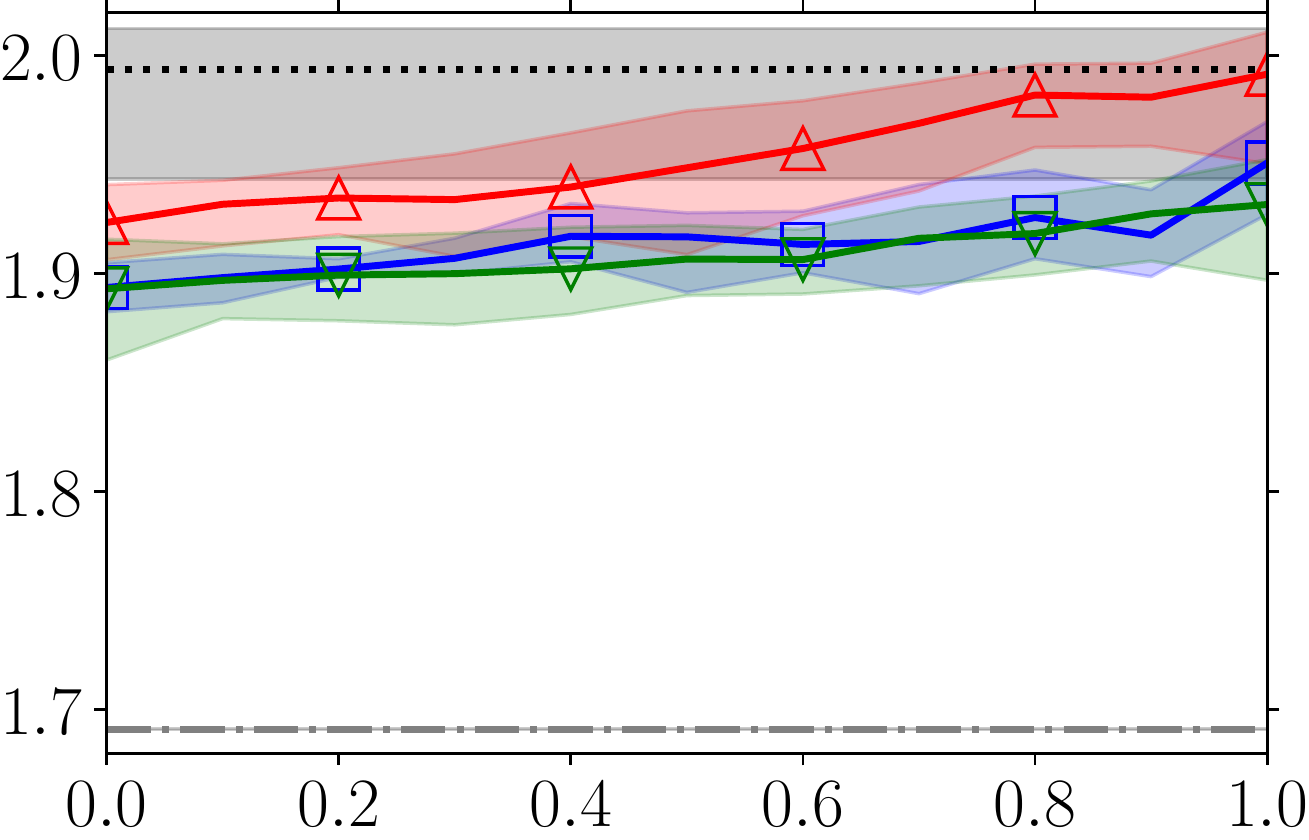}\hspace{1.28mm}
\\
 \multicolumn{1}{c}{\footnotesize Bias-Parameter Interpolation ($z$)}
 &
 \multicolumn{1}{c}{\footnotesize Bias-Parameter Interpolation ($z$)}
 &
 \multicolumn{1}{c}{\footnotesize Bias-Parameter Interpolation ($z$)}
\\
\multicolumn{3}{c}{
\includegraphics[width=0.98\textwidth]{figures/top5clipvary_legend}
} 
\end{tabular}
\caption{
Effect of varying misestimations of the bias parameters $\hat{\alpha}$ and $\hat{\beta}$ on the ECP (Eq.~\ref{eq:reward}) of three estimators in the top-5 known-bias setting when $N=10^8$.
Bias parameters are interpolations between the true values and their mean over positions with the interpolation parameter $z$: $\hat{\alpha}_k = z \cdot \alpha_k + (1-z) \sum_{i=1}^5 \alpha_i/ 5$ and $\hat{\beta}_k = z \cdot \beta_k + (1-z)  \sum_{i=1}^5 \beta_i / 5$.
Results are means over 20 independent runs, shaded areas indicate the 90\% confidence intervals; y-axis: policy performance in terms of ECP (Eq.~\ref{eq:reward}) on the held-out test-set; x-axis: $z$ the interpolation parameter.
}
\label{fig:biasvary}
\end{figure*}

\subsection{Correcting to Bias Introduced by the Clipping Strategy}
As discussed in Section~\ref{sec:drtheory}, one of the advantages of \ac{DR} estimation is that, in contrast with \ac{IPS}, it can potentially correct for some of the bias introduced by clipping.
To experimentally verify whether these corrections can lead to observable advantages in practice, we ran additional experiments with varying clipping strategies applied to the \ac{IPS}, \ac{DM} and \ac{DR} estimators in the top-5 setting with known bias parameters.

Figure~\ref{fig:clipN} shows the learning curves of these estimators with our standard clipping strategy: $\tau^\text{1x} = 10/\sqrt{N}$ (cf.~Eq.~\ref{eq:estimatedvalues}), a strategy with 1000 times less clipping: $\tau^\text{0.001x} = 10^{-2}/\sqrt{N}$, and a heavy clipping strategy: $\tau^\text{1000x} = 10^4/\sqrt{N}$.
In addition, the effect of individual threshold values are visualized in Figure~\ref{fig:cliptau}, where ECP with $N=10^8$ is displayed for values of the clipping threshold $\tau$ ranging from $10^{-6}$ to $1$.

Clearly, \ac{IPS} is the most sensitive to the clipping threshold as its ECP drops dramatically when heavy clipping is applied.
In contrast, while there is a noticeable effect from varying $\tau$ on the \ac{DM} and \ac{DR} estimators, the differences between light, standard and heavy clipping are relatively small on the Yahoo!\ and MSLR datasets.
On the Istella dataset, there is a larger decrease in ECP for the \ac{DM} and \ac{DR} with heavy clipping, but it is still much smaller than that of \ac{IPS}.
Importantly, we see that, regardless of what clipping is applied, \ac{DR} always has a higher ECP than \ac{DM} and \ac{IPS}.
This indicates that the performance advantage of \ac{DR} over \ac{DM} remains stable w.r.t.\ the clipping strategy, where the differences with \ac{IPS} become especially large under heavy clipping.

Therefore, we conclude that the \ac{DM} and \ac{DR} are less sensitive to propensity clipping than \ac{IPS} and can better correct for the bias introduced by clipping strategies.
Where the performance of \ac{IPS} considerably varies for different clipping strategies, \ac{DM} and \ac{DR} are only affected by very heavy clipping.
We can thus infer that the use of regression by \ac{DM} and \ac{DR} makes them more robust to propensity clipping.
Moreover, the advantage of \ac{DR} over both \ac{DM} and \ac{IPS} is consistent across all our tested clipping strategies, indicating it is the optimal choice regardless of what clipping strategy is applied.

\subsection{Robustness to Incorrect Bias Specification}

The main results presented in Figure~\ref{fig:results} and Table~\ref{tab:results} reveal that there is very little difference in performance between the top-5 setting where the bias parameters are known and where they have to be estimated.
While this shows that good performance is maintained when bias has to be estimated, it is unclear whether this also means that the estimators are robust to misspecified bias, since it is possible the estimated bias parameters are actually quite accurate.
Furthermore, most of our theoretical results assume that bias is correctly estimated, it is thus valuable to empirically verify whether the advantages of the \ac{DR} remain when its bias parameters are incorrect.

To better understand how robust the \ac{DR} estimator is to bias misspecification, the ECP of the \ac{DR}, \ac{DM} and \ac{IPS} estimators were measured in the top-5 setting with intentionally misspecified bias parameters.
For the incorrect bias parameters, we choose the mean values across positions: $\hat{\alpha}_k =  \sum_{i=1}^5 \alpha_i/5$ and $\hat{\beta}_k = \sum_{i=1}^5 \beta_i / 5$.
These mean values represent a naive approach that ignores the effect of the position on the examination and trust of users, i.e.\ it assumes that any document that is displayed in the top-5 is treated equally by the user, regardless of its exact position.

Figure~\ref{fig:biasN} displays the learning curves with these incorrect bias parameters.
Clearly, the ECP reached with all three estimators drops dramatically when the bias is heavily misspecified.
While \ac{IPS} and \ac{DM} converge on similar performance, \ac{DR} provides noticeably higher ECP when $N\geq10^5$ on all three datasets.
This strongly indicates that \ac{DR} is more robust to heavily misspecified bias than \ac{DM} and \ac{IPS}.

We further investigate how the degree of misspecification affects the estimators, by measuring ECP in the top-5 setting when $N=10^8$ and bias is interpolated between the true values and the mean with the parameter $z \in [0,1]$:
$\hat{\alpha}_k = z \cdot \alpha_k + (1-z) \sum_{i=1}^5 \alpha_i/ 5$ and $\hat{\beta}_k = z \cdot \beta_k + (1-z)  \sum_{i=1}^5 \beta_i / 5$.
The results are displayed in Figure~\ref{fig:biasvary}.

In line with our previous observations, Figure~\ref{fig:biasvary} reveals \ac{IPS} to have the lowest ECP, regardless of bias misspecification.
Interestingly, the differences between \ac{DR} and \ac{DM} vary: on Istella, \ac{DR} considerably outperforms \ac{DM}, but on Yahoo!\ and MSLR, the difference is only clear when $z<0.1$ and $z>0.9$.
When the interpolation is more in between the extreme values, \ac{DM} and \ac{DR} have comparable ECP where sometimes \ac{DM} has slightly higher performance.
As a result, we cannot conclude whether \ac{DR} better deals with bias misspecification than \ac{DM}.
Nevertheless, the differences between \ac{DR} and \ac{DM} are relatively small, thus the choice does not seem very consequential.
Conversely, our results clearly indicate that \ac{IPS} provides worse ECP than \ac{DR} and \ac{DM} whether bias is misspecified or not.

In summary, our results show that \ac{DR} estimation is much more robust to bias misspecification than \ac{IPS}.
Moreover, it appears to outperform \ac{DM} under heavy or light misspecification, but results are mixed when the misspecification is moderate.
Overall, our results indicate that the advantages of \ac{DR} over \ac{IPS} and \ac{DM} are mostly still applicable when bias is incorrectly estimated or misspecified.

\section{Conclusion}
\label{sec:conclusion}

This paper has introduced the first unbiased \ac{DR} estimator that is specifically designed to correct for position-bias in click feedback.
Our estimator differs from existing \ac{DR} estimators by using the expected correlation between clicks and preference per rank, instead of the unobservable examination variable or corrections solely based on action probabilities.
Additionally, we also proposed a novel \ac{DM} estimator and a novel cross-entropy loss estimator.
In terms of theory, this work has contributed the most robust estimator for \ac{LTR} yet:
our \ac{DR} estimator is the only method that corrects for position-bias, trust-bias and item-selection bias and has less strict unbiasedness criteria than the prevalent \ac{IPS} approach.
Moreover, our experimental results show that it can provide enormous increases in data-efficiency compared to \ac{IPS} and better overall performance w.r.t.\  other existing state-of-the-art approaches.
Therefore, both our theoretical and empirical results indicate that our \ac{DR} estimator is the most reliable and effective way to correct for position-bias.
Consequently, we think there is large potential in replacing \ac{IPS} with \ac{DR} as the new basis for the unbiased \ac{LTR} field.

Future work hopefully finds similar gains in related tasks, e.g.\ exposure-based ranking fairness~\citep{singh2018fairness} or ranking display advertisements~\citep{lagree2016multiple}.
Overall, we expect the improvements in efficiency and robustness to make unbiased \ac{LTR} even more attractive for real-world applications.

\subsection*{Code, Resources and Data}
To facilitate the reproducibility of the reported results, this work only made use of publicly available data and our experimental implementation is publicly available at \url{https://github.com/HarrieO/2022-doubly-robust-LTR}.
Additionally, a video presentation with accompanying slides is available at \url{https://harrieo.github.io//publication/2023-doubly-robust}.

\subsection*{Acknowledgments}
We thank the reviewers of previous versions of this work for their valuable comments and suggestions.
This research was partially supported by the Google Research Scholar Program.
All content represents the opinion of the author, which is not necessarily shared or endorsed by their respective employers and/or sponsors.

\appendix

\section*{Appendices}

\section{BIAS and Variance of IPS}
\label{appendix:proofipsbias}

\begin{theorem}
\label{theorem:ipsbias}
The \ac{IPS} estimator (Eq.~\ref{eq:ips}) has the following bias:
\begin{equation}
\mathds{E}_{c,y \sim \pi_0}\mleft[ \hat{\mathcal{R}}_\text{\normalfont IPS}(\pi) \mright] - \mathcal{R}(\pi)
=
\sum_{d \in D} \frac{\hat{\omega}_{d}}{\hat{\rho}_{d}}\mleft(\mleft(\rho_{d} - \hat{\rho}_{d}\frac{\omega_{d}}{\hat{\omega}_{d}} \mright) R_d + \mathds{E}_{y \sim \pi_0}\mleft[ \beta_{k(d)} - \hat{\beta}_{k(d)} \mright]\mright).
\end{equation}

\end{theorem}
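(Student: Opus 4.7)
The plan is to compute $\mathds{E}_{c,y \sim \pi_0}[\widehat{\mathcal{R}}_\text{IPS}(\pi)]$ directly from the definition in Eq.~\ref{eq:ips}, using the affine click model of Eq.~\ref{eq:clickprob} to replace click variables by their expectations, and then subtract $\mathcal{R}(\pi) = \sum_d \omega_d R_d$ and refactor to match the claimed form. The calculation is essentially mechanical; the only non-trivial step is an algebraic rearrangement at the end.

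First, I would observe that the factor $\hat{\omega}_d/\hat{\rho}_d$ in the IPS estimator is a deterministic function of $d$ alone (it depends on $\hat{\pi}_0$, $\hat{\alpha}$, $\hat{\beta}$, and $\pi$, but not on the sampled rankings), so by linearity of expectation it pulls outside the averaging over the $N$ logged rankings, and the $N$ independent draws reduce to a single expectation over $y \sim \pi_0$ and the resulting click $c(d)$. Next, by the tower property and Eq.~\ref{eq:clickprob}, $\mathds{E}_{c,y \sim \pi_0}[c(d)] = \mathds{E}_{y \sim \pi_0}[\alpha_{k(d)} R_d + \beta_{k(d)}] = \rho_d R_d + \mathds{E}_{y \sim \pi_0}[\beta_{k(d)}]$, where I invoke the definition of $\rho_d$ in Eq.~\ref{eq:truerho}. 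The trust-bias correction contributes $-\mathds{E}_{y \sim \pi_0}[\hat{\beta}_{k(d)}]$, giving
\begin{equation*}
\mathds{E}_{c,y \sim \pi_0}[\widehat{\mathcal{R}}_\text{IPS}(\pi)] = \sum_{d \in D} \frac{\hat{\omega}_d}{\hat{\rho}_d}\Bigl(\rho_d R_d + \mathds{E}_{y \sim \pi_0}[\beta_{k(d)} - \hat{\beta}_{k(d)}]\Bigr).
\end{equation*}

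Finally, I would subtract $\mathcal{R}(\pi) = \sum_d \omega_d R_d$ and rewrite $-\omega_d R_d = -\tfrac{\hat{\omega}_d}{\hat{\rho}_d}\bigl(\hat{\rho}_d \tfrac{\omega_d}{\hat{\omega}_d}\bigr) R_d$ so that the common factor $\hat{\omega}_d/\hat{\rho}_d$ can be pulled out across both the relevance and trust-bias contributions. Combining these yields exactly the claimed expression, with the $R_d$ coefficient $(\rho_d - \hat{\rho}_d \tfrac{\omega_d}{\hat{\omega}_d})$ capturing the combined error from $\hat{\alpha}$, $\hat{\beta}$, and $\hat{\pi}_0$, and the residual term $\mathds{E}_{y \sim \pi_0}[\beta_{k(d)} - \hat{\beta}_{k(d)}]$ isolating the error from $\hat{\beta}$ alone. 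The main obstacle is not analytic but notational: ensuring the factorization is carried out so that the final form groups errors by their source (as highlighted in Section~\ref{sec:background:ipsltr}), which is precisely what makes the unbiasedness condition in Eq.~\ref{eq:ips:biascondition} immediate as a corollary — both summands vanish when $\hat{\alpha}=\alpha$, $\hat{\beta}=\beta$, and $\hat{\pi}_0 = \pi_0$ with $\rho_d \geq \tau$ so that the clipping in $\hat{\rho}_d$ is inactive.
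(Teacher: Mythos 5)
Your proposal is correct and follows essentially the same route as the paper's own proof: compute $\mathds{E}_{c,y\sim\pi_0}[\widehat{\mathcal{R}}_\text{IPS}(\pi)] = \sum_{d}\frac{\hat{\omega}_d}{\hat{\rho}_d}\big(\rho_d R_d + \mathds{E}_{y\sim\pi_0}[\beta_{k(d)}-\hat{\beta}_{k(d)}]\big)$ via the affine click model and the definition of $\rho_d$, then absorb $\mathcal{R}(\pi)=\sum_d \omega_d R_d$ into the sum by writing $\omega_d R_d = \frac{\hat{\omega}_d}{\hat{\rho}_d}\big(\hat{\rho}_d\frac{\omega_d}{\hat{\omega}_d}\big)R_d$. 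Your intermediate expression matches the first line of the paper's derivation exactly, and the final refactoring is the same algebraic step.
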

\begin{proof}
Using Eq.~\ref{eq:clickprob}, \ref{eq:reward}, \ref{eq:truerho} and \ref{eq:ips} we get the following derivation:
\begin{align}
\mathds{E}_{c,y \sim \pi_0}\mleft[ \hat{\mathcal{R}}_\text{\normalfont IPS}(\pi) \mright]
&= \sum_{d \in D} \frac{\hat{\omega}_{d}}{\hat{\rho}_{d}} \mleft(\rho_{d}R_d + \mathds{E}_{y \sim \pi_0}\mleft[ \beta_{k(d)} - \hat{\beta}_{k(d)} \mright]\mright)
 \\
&=
 \mathcal{R}(\pi) + 
\sum_{d \in D} \frac{\hat{\omega}_{d}}{\hat{\rho}_{d}}\mleft(\mleft(\rho_{d} - \hat{\rho}_{d}\frac{\omega_{d}}{\hat{\omega}_{d}} \mright) R_d + \mathds{E}_{y \sim \pi_0}\mleft[ \beta_{k(d)} - \hat{\beta}_{k(d)} \mright]\mright).
\qedhere
\end{align}
\end{proof}

\begin{lemma}
\label{lemma:omega}
By the definitions of $\omega$ (Eq.~\ref{eq:trueomega}) and $\hat{\omega}$ (Eq.~\ref{eq:estimatedvalues}):
\begin{equation}
\big(\hat{\alpha} = \alpha \land \hat{\beta} = \beta \big) \longrightarrow \mleft(\forall d \in D, \; \hat{\omega}_{d} = \omega_d \mright).
\end{equation}
\end{lemma}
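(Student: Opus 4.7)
The plan is to proceed by direct substitution, since the statement is essentially a rewriting of definitions under the hypothesis that the bias parameters are correctly estimated. First I would recall that the hypothesis $\hat{\alpha} = \alpha \land \hat{\beta} = \beta$ is just shorthand (as spelled out earlier in the paper) for the pointwise equalities $\hat{\alpha}_k = \alpha_k$ and $\hat{\beta}_k = \beta_k$ for every $k \in \{1,2,\ldots,K\}$. Then I would fix an arbitrary $d \in D$ and start from the definition of $\hat{\omega}_d$ given in Eq.~\ref{eq:estimatedvalues}, namely
\[
\hat{\omega}_d = \sum_{k=1}^K \pi(k \mid d)(\hat{\alpha}_k + \hat{\beta}_k).
\]

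Next I would replace $\hat{\alpha}_k$ by $\alpha_k$ and $\hat{\beta}_k$ by $\beta_k$ inside the sum, which is valid term-by-term by the hypothesis. The result is $\sum_{k=1}^K \pi(k \mid d)(\alpha_k + \beta_k)$, which is exactly the right-hand side of Eq.~\ref{eq:trueomega}, i.e.\ $\omega_d$. Since $d$ was arbitrary, this yields the conclusion $\forall d \in D, \; \hat{\omega}_d = \omega_d$. The main (and only) thing to be careful about is that both $\omega_d$ and $\hat{\omega}_d$ are averages over the \emph{target} policy $\pi$ (not $\pi_0$), so no assumption about $\hat{\pi}_0$ enters; this is worth stating explicitly to contrast with the propensity $\rho_d$ versus $\hat{\rho}_d$, whose equality would additionally require $\hat{\pi}_0 = \pi_0$. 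There is no substantive obstacle: the lemma is a one-line substitution argument and is invoked elsewhere mainly to let later proofs replace $\hat{\omega}_d/\hat{\omega}_d$-type ratios with $1$ whenever the bias parameters are assumed accurate.
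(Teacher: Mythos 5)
Your proof is correct and matches the paper's approach exactly: the paper states this lemma without an explicit proof, treating it as immediate from the definitions in Eq.~\ref{eq:trueomega} and Eq.~\ref{eq:estimatedvalues}, and your term-by-term substitution is precisely the argument being left implicit. Your remark that both $\omega_d$ and $\hat{\omega}_d$ are expectations under the \emph{target} policy $\pi$ (so no condition on $\hat{\pi}_0$ is needed, unlike for $\hat{\rho}_d$ in Lemma~\ref{lemma:rho}) is also accurate and captures the intended contrast.
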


\begin{lemma}
\label{lemma:rho}
By the definitions of $\rho$ (Eq.~\ref{eq:truerho}) and $\hat{\rho}$ (Eq.~\ref{eq:estimatedvalues}):
\begin{equation}
\mleft(\hat{\alpha} = \alpha \land \mleft(\forall d \in D,\; \hat{\pi}_0(d) = \pi_0(d) \land \rho_d \geq \tau\mright)\mright) \longrightarrow (\forall d \in D, \; \hat{\rho}_{d} = \rho_d).
\end{equation}
\end{lemma}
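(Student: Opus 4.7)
The plan is to substitute the two assumptions directly into the definition of $\hat{\rho}_d$ (Eq.~\ref{eq:estimatedvalues}) and match it term by term against the definition of $\rho_d$ (Eq.~\ref{eq:truerho}); the clipping condition then reduces the outer $\max$ to the identity. This is a routine definitional check rather than a substantive argument, analogous to Lemma~\ref{lemma:omega}.

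Concretely, I would fix an arbitrary $d \in D$ and first use $\hat{\alpha} = \alpha$ together with $\hat{\pi}_0(k \mid d) = \pi_0(k \mid d)$ for every $k$ to rewrite
\[
\sum_{k=1}^K \hat{\pi}_0(k \mid d)\hat{\alpha}_k = \sum_{k=1}^K \pi_0(k \mid d)\alpha_k = \rho_d.
\]
Then the definition $\hat{\rho}_d = \max\!\left(\sum_{k=1}^K \hat{\pi}_0(k \mid d)\hat{\alpha}_k,\;\tau\right)$ simplifies to $\hat{\rho}_d = \max(\rho_d,\tau)$. Invoking the hypothesis $\rho_d \geq \tau$ collapses the $\max$ to $\rho_d$, yielding $\hat{\rho}_d = \rho_d$. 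Since $d$ was arbitrary, the universally quantified conclusion follows.

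There is no real obstacle here: the only subtle point is remembering that the clipping operator in Eq.~\ref{eq:estimatedvalues} is the reason we need the $\rho_d \geq \tau$ hypothesis — without it the two quantities can differ even when $\hat{\alpha}$ and $\hat{\pi}_0$ are perfectly specified. The proof can therefore be stated in two or three lines, mirroring the structure of Lemma~\ref{lemma:omega} but with the extra step of discharging the clipping.
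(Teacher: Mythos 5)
Your proof is correct and matches the paper's approach exactly: the paper states Lemma~\ref{lemma:rho} without an explicit proof, treating it as an immediate consequence of the definitions in Eq.~\ref{eq:truerho} and Eq.~\ref{eq:estimatedvalues}, which is precisely the substitution-plus-clipping argument you spell out. Your identification of $\rho_d \geq \tau$ as the hypothesis needed to collapse the $\max$ is the only nontrivial point, and you handle it correctly.
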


\begin{lemma}
\label{eq:proof:betaips}
Trivially, if the $\hat{\beta}$ bias parameters are correct then:
\begin{equation}
\hat{\beta} = \beta \longrightarrow
\mleft(\forall d \in D, \;  \mathds{E}_{y \sim \pi_0}\mleft[\hat{\beta}_{k(d)} \mright] = \mathds{E}_{y \sim \pi_0}\mleft[\beta_{k(d)} \mright]  \mright).
\end{equation}
\end{lemma}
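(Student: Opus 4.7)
The plan is to unfold the shorthand $\hat{\beta} = \beta$ using the definition introduced in Section~\ref{sec:problemdefinition}, namely
\begin{equation*}
\hat{\beta} = \beta \longleftrightarrow \big( \forall k \in \{1,2,\ldots, K\}, \; \hat{\beta}_k = \beta_k \big).
\end{equation*}
Given this pointwise equality of the bias parameters across all ranks, for every realization of $y \sim \pi_0$ and every $d \in D$ the position $k(d)$ lies in $\{1,2,\ldots,K\}$, so the random variables $\hat{\beta}_{k(d)}$ and $\beta_{k(d)}$ are equal as functions of $y$.

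From this pointwise equality, applying $\mathds{E}_{y \sim \pi_0}[\,\cdot\,]$ to both sides and invoking the monotonicity (or simply well-definedness) of expectation yields $\mathds{E}_{y \sim \pi_0}[\hat{\beta}_{k(d)}] = \mathds{E}_{y \sim \pi_0}[\beta_{k(d)}]$ for every $d \in D$. No properties of $\hat{\pi}_0$, $\alpha$, $\hat{\alpha}$, clipping, or the click-generation process are required, since the assumption already forces equality of the integrands before any expectation is taken.

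There is essentially no obstacle here: the statement is labelled as trivial and the argument is a one-line substitution. The main purpose of the plan is to record the reduction so that subsequent results — most directly Theorem~\ref{theorem:ipsbias} together with Lemmas~\ref{lemma:omega} and~\ref{lemma:rho}, which combine to give the unbiasedness condition in Eq.~\ref{eq:ips:biascondition} — can cite this fact cleanly rather than re-derive the substitution $\hat{\beta}_{k(d)} \mapsto \beta_{k(d)}$ inline.
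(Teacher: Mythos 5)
Your proof is correct and matches the paper exactly: the paper states this lemma without any proof body (marking it ``Trivially''), and your argument — unfolding the shorthand $\hat{\beta} = \beta$ into pointwise equality $\hat{\beta}_k = \beta_k$ over all ranks, noting that $k(d) \in \{1,\ldots,K\}$ for every realization of $y$, and taking expectations of identical random variables — is precisely the intended one-line substitution.
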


\begin{corollary}
\label{theorem:ipssimplebias}
When $\hat{\alpha}$ and $\hat{\beta}$ are correct \ac{IPS} has the bias:
\begin{equation}
( \hat{\alpha} = \alpha \land \hat{\beta} = \beta ) \longrightarrow
\mathbb{E}_{c,y \sim \pi_0}\mleft[ \widehat{\mathcal{R}}_\text{IPS}(\pi) \mright] - \mathcal{R}(\pi)
= 
\sum_{d \in D} \frac{{\omega}_{d}}{\hat{\rho}_{d}}\mleft(\rho_{d} - \hat{\rho}_{d} \mright) R_d.
\end{equation}
\end{corollary}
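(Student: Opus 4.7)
The plan is to derive the corollary as an immediate specialization of Theorem~\ref{theorem:ipsbias} under the additional hypothesis $(\hat{\alpha} = \alpha \land \hat{\beta} = \beta)$. Starting from the general bias expression
\[
\mathds{E}_{c,y \sim \pi_0}\mleft[ \hat{\mathcal{R}}_\text{IPS}(\pi) \mright] - \mathcal{R}(\pi)
=
\sum_{d \in D} \frac{\hat{\omega}_{d}}{\hat{\rho}_{d}}\mleft(\mleft(\rho_{d} - \hat{\rho}_{d}\frac{\omega_{d}}{\hat{\omega}_{d}} \mright) R_d + \mathds{E}_{y \sim \pi_0}\mleft[ \beta_{k(d)} - \hat{\beta}_{k(d)} \mright]\mright),
\]
I would simplify the two distinct sources of error term by term using the two already-established lemmas.

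First, by Lemma~\ref{lemma:omega}, the assumption $\hat{\alpha} = \alpha \land \hat{\beta} = \beta$ gives $\hat{\omega}_d = \omega_d$ for every $d \in D$. This has two consequences inside the sum: the leading factor $\hat{\omega}_d$ can be replaced by $\omega_d$, and the ratio $\omega_d / \hat{\omega}_d$ simplifies to $1$, so the bracket involving $R_d$ collapses to $(\rho_d - \hat{\rho}_d) R_d$. Second, by Lemma~\ref{eq:proof:betaips}, the assumption $\hat{\beta} = \beta$ yields $\mathds{E}_{y \sim \pi_0}[\beta_{k(d)} - \hat{\beta}_{k(d)}] = 0$, so the entire trust-bias correction error term vanishes from the summand.

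Substituting both simplifications directly into the bias formula from Theorem~\ref{theorem:ipsbias} produces
\[
\mathds{E}_{c,y \sim \pi_0}\mleft[ \hat{\mathcal{R}}_\text{IPS}(\pi) \mright] - \mathcal{R}(\pi)
=
\sum_{d \in D} \frac{\omega_{d}}{\hat{\rho}_{d}}\mleft(\rho_{d} - \hat{\rho}_{d}\mright) R_d,
\]
which is exactly the claimed identity. There is really no obstacle here: the result is a purely algebraic specialization, and the only thing to be careful about is keeping the factor $\hat{\rho}_d$ in the denominator unchanged (it is \emph{not} replaced by $\rho_d$, since accuracy of $\hat{\rho}$ would additionally require Lemma~\ref{lemma:rho}, whose hypotheses on $\hat{\pi}_0$ and clipping are \emph{not} assumed in this corollary). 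Thus the proof is essentially a one-line application of Lemmas~\ref{lemma:omega} and~\ref{eq:proof:betaips} to Theorem~\ref{theorem:ipsbias}.
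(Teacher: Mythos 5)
Your proof is correct and matches the paper's own argument exactly: the paper likewise derives this corollary by applying Lemma~\ref{lemma:omega} and Lemma~\ref{eq:proof:betaips} to the general bias expression of Theorem~\ref{theorem:ipsbias}. Your added observation that $\hat{\rho}_d$ must remain in the denominator (since Lemma~\ref{lemma:rho} is not available under these weaker hypotheses) is a correct and worthwhile clarification of the same route.
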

\begin{proof}
Follows from Theorem~\ref{theorem:ipsbias} and Lemmas~\ref{lemma:omega} and~\ref{eq:proof:betaips}.
\end{proof}

\begin{theorem}
\label{eq:proof:ipsbiastheorem}
The \ac{IPS} estimator (Eq.~\ref{eq:ips})  is unbiased when $\hat{\alpha}$, $\hat{\beta}$ and $\hat{\pi}_0$ are correctly estimated and clipping has no effect:
\begin{equation}
\big(\hat{\alpha} = \alpha \land \hat{\beta} = \beta \land
\mleft(\forall d \in D, \;  \hat{\pi}_0(d) = \pi_0(d) \land \rho_d \geq \tau  \mright)\big)
\longrightarrow
\mathds{E}_{c,y \sim \pi_0}\mleft[ \hat{\mathcal{R}}_\text{\normalfont IPS}(\pi) \mright] = \mathcal{R}(\pi).
\end{equation}
\end{theorem}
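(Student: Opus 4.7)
The plan is to chain together results that have already been established in the appendix, so the proof is essentially a one-step reduction. First I would invoke Corollary~\ref{theorem:ipssimplebias}, which already absorbs the two hypotheses $\hat{\alpha} = \alpha$ and $\hat{\beta} = \beta$ and collapses the bias to the single sum
\begin{equation*}
\mathbb{E}_{c,y \sim \pi_0}\bigl[ \widehat{\mathcal{R}}_\text{IPS}(\pi) \bigr] - \mathcal{R}(\pi)
= \sum_{d \in D} \frac{\omega_{d}}{\hat{\rho}_{d}}\bigl(\rho_{d} - \hat{\rho}_{d} \bigr) R_d.
\end{equation*}
Hence everything reduces to showing that each factor $\rho_d - \hat{\rho}_d$ vanishes under the remaining hypotheses.

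Next I would apply Lemma~\ref{lemma:rho}, whose hypotheses are exactly the residual assumptions of the theorem: $\hat{\alpha} = \alpha$ (already assumed), together with $\hat{\pi}_0(d) = \pi_0(d)$ and $\rho_d \geq \tau$ for every $d \in D$. The lemma then delivers $\hat{\rho}_d = \rho_d$ for all $d$, so the $(\rho_d - \hat{\rho}_d)$ factor in every summand is zero. Substituting into the displayed bias expression above yields $\mathbb{E}_{c,y \sim \pi_0}[\widehat{\mathcal{R}}_\text{IPS}(\pi)] - \mathcal{R}(\pi) = 0$, which is exactly the claim.

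There is essentially no obstacle here: the theorem is a bookkeeping corollary of Theorem~\ref{theorem:ipsbias} (via Corollary~\ref{theorem:ipssimplebias}) combined with Lemma~\ref{lemma:rho}. The only thing worth double-checking is that the clipping condition $\rho_d \geq \tau$ (stated in terms of the true propensity) is the right hypothesis to feed into Lemma~\ref{lemma:rho}, since the clipping in the definition of $\hat{\rho}_d$ in Eq.~\ref{eq:estimatedvalues} is written as $\max(\cdot, \tau)$; but under $\hat{\alpha} = \alpha$ and $\hat{\pi}_0 = \pi_0$ the unclipped estimate equals $\rho_d$, so $\rho_d \geq \tau$ guarantees the clip is inactive and $\hat{\rho}_d = \rho_d$. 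No intermediate manipulation or calculation is required beyond this citation chain.
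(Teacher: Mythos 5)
Your proposal is correct and is exactly the paper's own proof: the appendix proves this theorem by applying Lemma~\ref{lemma:rho} to Corollary~\ref{theorem:ipssimplebias}, which is precisely your citation chain. Your extra check that $\rho_d \geq \tau$ (stated for the true propensity) deactivates the clipping in $\hat{\rho}_d$ is a sound reading of Lemma~\ref{lemma:rho} and matches its intended use.
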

\begin{proof}
Follows from applying Lemma~\ref{lemma:rho} to Corollary~\ref{theorem:ipssimplebias}.
\end{proof}

\begin{theorem}
The \ac{IPS} estimator (Eq.~\ref{eq:ips}) has the variance:
\begin{equation}
\mathds{V}\big[ \widehat{\mathcal{R}}_\text{IPS}(\pi) \big]
 = \frac{1}{N}\sum_{d \in D} \frac{\hat{\omega}_{d}^2}{\hat{\rho}_{d}^2}
\big(
\mathds{V}\big[ c(d) \big] + \mathds{V}\big[ \hat{\beta}_{k(d)} \big] - 2  \mathds{C}\text{ov}\big[c(d), \hat{\beta}_{k(d)}\big]
\big).
\end{equation}
\end{theorem}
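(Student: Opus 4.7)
The plan is to reduce the variance computation to a single-sample calculation via i.i.d.\ averaging, and then expand the variance of each per-item contribution using the elementary identity $\mathds{V}[X-Y]=\mathds{V}[X]+\mathds{V}[Y]-2\mathds{C}\text{ov}[X,Y]$.

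First, I would observe that $\widehat{\mathcal{R}}_\text{IPS}(\pi)$ is the empirical mean of $N$ i.i.d.\ summands, one per logged query (each logged ranking $y_i$ and its click vector $c_i$ being drawn independently from the same distribution induced by $\pi_0$ and the click model of Eq.~\ref{eq:clickprob}). Therefore $\mathds{V}[\widehat{\mathcal{R}}_\text{IPS}(\pi)] = \tfrac{1}{N}\,\mathds{V}[S]$, where $S = \sum_{d \in D}\frac{\hat{\omega}_d}{\hat{\rho}_d}\bigl(c(d)-\hat{\beta}_{k(d)}\bigr)$ corresponds to a single logged ranking $y\sim \pi_0$.

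Second, I would pull the deterministic constants $\hat{\omega}_d/\hat{\rho}_d$ out of the variance, since by their definitions in Eq.~\ref{eq:estimatedvalues} they depend only on $\pi$, $\hat{\pi}_0$, $\hat{\alpha}$, $\hat{\beta}$ and $\tau$, none of which are random with respect to the sampling of $y$ or the clicks. For each item $d$, the remaining stochastic quantity is $c(d) - \hat{\beta}_{k(d)}$, whose variance I would expand directly as $\mathds{V}[c(d)] + \mathds{V}[\hat{\beta}_{k(d)}] - 2\,\mathds{C}\text{ov}[c(d), \hat{\beta}_{k(d)}]$, yielding precisely the per-item summand in the claim. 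Dividing by $N$ then delivers the stated expression.

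The step that requires the most care is the passage from $\mathds{V}[S]$ to the sum $\sum_d \mathds{V}\bigl[\tfrac{\hat{\omega}_d}{\hat{\rho}_d}(c(d)-\hat{\beta}_{k(d)})\bigr]$: in general the variance of a sum would also include cross terms of the form $\mathds{C}\text{ov}\bigl[c(d)-\hat{\beta}_{k(d)},\,c(d')-\hat{\beta}_{k(d')}\bigr]$ for $d\neq d'$, because two distinct items share the same sampled ranking $y$ and thus their positions (and hence their $\hat{\beta}_{k(\cdot)}$ corrections and click probabilities) are dependent. So the main obstacle is justifying that the stated equality is the intended per-item decomposition of the variance (a standard convention in the IPS literature, as in~\citet{joachims2017unbiased,strehl2010logged}), or equivalently that it isolates the diagonal contribution which dominates the variance behaviour relevant to the clipping analysis in Eq.~\ref{eq:ipsvariance}. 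Once that convention is fixed, the remainder of the proof is a direct application of linearity of variance for independent samples and the bilinear expansion of $\mathds{V}[X-Y]$.
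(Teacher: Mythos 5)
Your proof takes essentially the same route as the paper's, whose entire proof is the single line ``Follows from Eq.~\ref{eq:clickprob} and \ref{eq:ips}'': the i.i.d.\ reduction that yields the $1/N$ factor, pulling the deterministic weights $\hat{\omega}_d/\hat{\rho}_d$ out of the variance, and the bilinear expansion of $\mathds{V}\big[c(d)-\hat{\beta}_{k(d)}\big]$ are exactly the steps that one-liner leaves implicit. The obstacle you flag is genuine and is not resolved by the paper either: under a stochastic logging policy the positions of distinct items are dependent (two items cannot occupy the same rank), so the cross-item covariances $\mathds{C}\text{ov}\big[c(d)-\hat{\beta}_{k(d)},\, c(d')-\hat{\beta}_{k(d')}\big]$ do not vanish in general, and the stated equality is strictly the diagonal (per-item) contribution to the variance. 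Since the paper offers no argument for dropping those cross terms, your derivation is as complete as the paper's own, and your explicit identification of the neglected terms is a point the paper glosses over.
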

\begin{proof}
Follows from Eq.~\ref{eq:clickprob} and \ref{eq:ips}.
\end{proof}

\section{Bias of \ac{CV} Estimator}
\label{appendix:cv}

\begin{lemma}
\label{lemma:cvexpectedvalue}
The \ac{CV} estimator (Eq.~\ref{eq:cv}) has the following expected value:
\begin{equation}
\mathds{E}_{c,y \sim\pi_0}\mleft[ \widehat{\mathcal{R}}_\text{CV}(\pi) \mright] 
= \sum_{d \in D}
\frac{\hat{\omega}_{d}}{\hat{\rho}_{d}}
\mathds{E}_{y \sim\pi_0}\mleft[\hat{\alpha}_{k(d)}\mright]
 \hat{R}_d
.
 \end{equation}
\end{lemma}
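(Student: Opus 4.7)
The plan is to prove this lemma by direct computation, exploiting the fact that the \ac{CV} estimator does not depend on the observed clicks $c$ at all; it only depends on the sampled rankings $y_i$ through the positions $k_i(d)$. So the expectation over $c$ is vacuous and can be dropped immediately, leaving only an expectation over $y \sim \pi_0$.

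First, I would observe that the factors $\hat{\omega}_d / \hat{\rho}_d$ and $\hat{R}_d$ are deterministic given the estimated bias parameters, logging policy, and regression model; none of them depend on the sampled ranking $y_i$. Hence by linearity of expectation these factors may be pulled outside the expectation. Next, since the rankings $y_1,\ldots,y_N$ in the logged data are assumed to be i.i.d.\ samples from $\pi_0$, each random variable $\hat{\alpha}_{k_i(d)}$ has the same distribution, and in particular $\mathds{E}_{y_i \sim \pi_0}[\hat{\alpha}_{k_i(d)}] = \mathds{E}_{y \sim \pi_0}[\hat{\alpha}_{k(d)}]$ for every $i$. Thus the average $\frac{1}{N}\sum_{i=1}^N \mathds{E}_{y_i \sim \pi_0}[\hat{\alpha}_{k_i(d)}]$ collapses to a single expectation, giving the claimed identity.

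Concretely, the chain of equalities would be
\begin{align*}
\mathds{E}_{c,y \sim \pi_0}\mleft[ \widehat{\mathcal{R}}_\text{CV}(\pi) \mright]
&= \mathds{E}_{y \sim \pi_0}\mleft[ \frac{1}{N}\sum_{i=1}^N \sum_{d \in D} \frac{\hat{\omega}_d}{\hat{\rho}_d} \hat{\alpha}_{k_i(d)} \hat{R}_d \mright] \\
&= \sum_{d \in D} \frac{\hat{\omega}_d}{\hat{\rho}_d} \hat{R}_d \cdot \frac{1}{N}\sum_{i=1}^N \mathds{E}_{y_i \sim \pi_0}\mleft[ \hat{\alpha}_{k_i(d)} \mright] \\
&= \sum_{d \in D} \frac{\hat{\omega}_d}{\hat{\rho}_d} \mathds{E}_{y \sim \pi_0}\mleft[ \hat{\alpha}_{k(d)} \mright] \hat{R}_d,
\end{align*}
where the first step drops the $c$ expectation because the integrand has no click dependence, the second uses linearity and pulls out deterministic factors, and the third uses the i.i.d.\ assumption on the logged rankings.

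There is no real obstacle here; the result is essentially a bookkeeping exercise. The only thing worth flagging is to make the i.i.d.\ assumption on the logged rankings explicit, since that is what licenses replacing the empirical average of expectations with a single population expectation. Once that is stated, the proof is complete in a few lines and can be set off with a simple \texttt{\textbackslash begin\{proof\}\ldots\textbackslash end\{proof\}} block.
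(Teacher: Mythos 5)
Your proof is correct and is simply the fully spelled-out version of what the paper does: the paper's own proof is the one-liner ``Follows directly from Eq.~\ref{eq:cv},'' and your chain of equalities (dropping the vacuous click expectation, pulling out the deterministic factors $\hat{\omega}_d/\hat{\rho}_d$ and $\hat{R}_d$, and collapsing the average over i.i.d.\ logged rankings into a single expectation over $y \sim \pi_0$) is exactly the computation that remark leaves implicit. Your explicit flagging of the i.i.d.\ assumption on logged rankings is a reasonable addition, but there is no substantive difference in approach.
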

\begin{proof} Follows directly from Eq.~\ref{eq:cv}.
\end{proof}

\begin{theorem}
\label{theorem:cvunbiasreq}
The \ac{CV} estimator (Eq.~\ref{eq:cv}) is an unbiased estimate of \ac{DM} (Eq.~\ref{eq:regression})
if the $\hat{\alpha}$ and $\hat{\beta}$ bias parameters are correctly estimated
and per item either $\hat{\pi}_0(d)$ is correct and clipping has no effect:
\begin{equation}
\big(
\forall d \in D, \; \hat{\pi}_0(d) = \pi_0(d) \land \hat{\rho}_d \geq \tau
\big)
\longrightarrow
\mathds{E}\mleft[
\widehat{\mathcal{R}}_\text{CV}(\pi)
\mright]
 = \widehat{\mathcal{R}}_\text{DM}(\pi)
 .
\end{equation}
\end{theorem}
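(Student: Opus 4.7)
The plan is to chain together Lemma~\ref{lemma:cvexpectedvalue} with the definition of $\hat{\rho}_d$ from Eq.~\ref{eq:estimatedvalues} and then simply cancel the $\hat{\rho}_d$ factors against the expectation over $\hat{\alpha}_{k(d)}$. The key observation is that, by definition, $\hat{\rho}_d = \max\!\bigl(\sum_{k=1}^K \hat{\pi}_0(k\mid d)\hat{\alpha}_k,\,\tau\bigr)$, so when clipping is inactive, $\hat{\rho}_d$ equals precisely the inner summation. Under the hypothesis $\hat{\pi}_0 = \pi_0$, that inner summation is exactly $\mathds{E}_{y \sim \pi_0}[\hat{\alpha}_{k(d)}]$.

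Concretely, I would proceed as follows. First, invoke Lemma~\ref{lemma:cvexpectedvalue} to obtain
\[
\mathds{E}_{c,y\sim\pi_0}\mleft[\widehat{\mathcal{R}}_\text{CV}(\pi)\mright]
= \sum_{d \in D} \frac{\hat{\omega}_d}{\hat{\rho}_d}\,\mathds{E}_{y\sim\pi_0}\mleft[\hat{\alpha}_{k(d)}\mright]\hat{R}_d.
\]
Second, rewrite the expectation using $\hat{\pi}_0 = \pi_0$ as
\[
\mathds{E}_{y\sim\pi_0}\mleft[\hat{\alpha}_{k(d)}\mright]
= \sum_{k=1}^K \pi_0(k\mid d)\,\hat{\alpha}_k
= \sum_{k=1}^K \hat{\pi}_0(k\mid d)\,\hat{\alpha}_k.
\]
Third, apply Eq.~\ref{eq:estimatedvalues}: because the clipping condition guarantees that the $\max$ in the definition of $\hat{\rho}_d$ is attained by its first argument, we may replace the right-hand side above by $\hat{\rho}_d$. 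Substituting back cancels $\hat{\rho}_d$ with the denominator and yields $\sum_{d \in D} \hat{\omega}_d \hat{R}_d$, which is precisely $\widehat{\mathcal{R}}_\text{DM}(\pi)$ by Eq.~\ref{eq:regression}.

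The only subtlety worth flagging is the interpretation of the clipping condition $\hat{\rho}_d \geq \tau$: since $\hat{\rho}_d$ is by construction a max with $\tau$, this inequality is vacuous as a strict inequality and is really meant as ``the first argument of the max dominates $\tau$,'' i.e.\ $\sum_{k}\hat{\pi}_0(k\mid d)\hat{\alpha}_k \geq \tau$, so that $\hat{\rho}_d$ equals the pre-clipped expectation. I would state this explicitly at the start of the proof to justify the key substitution. Beyond that, the argument is a direct algebraic cancellation and does not require any probabilistic machinery beyond Lemma~\ref{lemma:cvexpectedvalue}; there is no real obstacle, and the result follows in a few lines.
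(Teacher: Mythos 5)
Your proof is correct and takes essentially the same route as the paper's own proof: both start from Lemma~\ref{lemma:cvexpectedvalue} and then use the definition of $\hat{\rho}_d$ in Eq.~\ref{eq:estimatedvalues} to conclude that $\hat{\rho}_d = \mathds{E}_{y\sim\pi_0}\big[\hat{\alpha}_{k(d)}\big]$ under the stated conditions, after which the cancellation against the denominator immediately yields $\widehat{\mathcal{R}}_\text{DM}(\pi)$. Your explicit remark that the hypothesis $\hat{\rho}_d \geq \tau$ must be read as ``the unclipped sum $\sum_k \hat{\pi}_0(k \smid d)\hat{\alpha}_k$ dominates $\tau$'' (since $\hat{\rho}_d \geq \tau$ holds vacuously by construction of the max) makes precise a point the paper's proof leaves implicit.
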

\begin{proof}
From Lemma~\ref{lemma:cvexpectedvalue} it clearly follows that the expected value of \ac{CV} is equal to \ac{DM} (Eq.~\ref{eq:regression}) when $\hat{\rho}_{d} =
\mathds{E}_{y \sim\pi_0}\mleft[\hat{\alpha}_{k(d)}\mright]$.
The definition of $\hat{\rho}$ (Eq.~\ref{eq:estimatedvalues}) shows that this is the case when $\hat{\pi}_0(d)$ is correct and clipping has no effect:
\begin{equation}
\big(\forall d \in D,\; \hat{\pi}_0(d) = \pi_0(d) \land \hat{\rho}_d \geq \tau\big) \longrightarrow \mleft(\forall d \in D, \; \hat{\rho}_{d} = \mathds{E}_{y \sim\pi_0}\mleft[\hat{\alpha}_{k(d)}\mright]\mright).
\label{eq:cvproofstep}
\end{equation}
Applying Eq.~\ref{eq:cvproofstep} to Lemma~\ref{lemma:cvexpectedvalue} thus proves Theorem~\ref{theorem:cvunbiasreq}.
\end{proof}

\section{Bias and Variance of \ac{DR} Estimator}
\label{appendix:drbias}
\label{appendix:drvariance}

\begin{theorem}
\label{theorem:drlongbias}
The \ac{DR} estimator (Eq.~\ref{eq:dr}) has the following bias:
\begin{equation}
\begin{split}
&
\mathds{E}_{c,y \sim\pi_0}\mleft[ \hat{\mathcal{R}}_\text{\normalfont DR}(\pi) \mright] - \mathcal{R}(\pi)
\\ &\hspace{1.7cm}
= 
 \sum_{d \in D} \frac{\hat{\omega}_{d}}{\hat{\rho}_{d}}
\bigg(
\mleft(\rho_{d} - \frac{\hat{\rho}_{d}}{\hat{\omega}_d}\omega_d \mright)R_d 
+ \mleft(\hat{\rho}_{d} - \mathds{E}_{y\sim\pi_0}\mleft[\hat{\alpha}_{k(d)}\mright]\mright)\hat{R}_d
+ \mathds{E}_{y\sim\pi_0}\mleft[ \beta_{k(d)} - \hat{\beta}_{k(d)} \mright]
 \bigg).
 \end{split}
 \end{equation}
\end{theorem}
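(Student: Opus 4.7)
\medskip

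\noindent\textbf{Proof proposal.} The plan is to exploit the additive decomposition $\widehat{\mathcal{R}}_\text{DR}(\pi) = \widehat{\mathcal{R}}_\text{DM}(\pi) + \widehat{\mathcal{R}}_\text{IPS}(\pi) - \widehat{\mathcal{R}}_\text{CV}(\pi)$ given in Eq.~\ref{eq:dr} and simply substitute the expected values of the three constituent estimators that have already been derived earlier in the excerpt. Concretely, by linearity of expectation,
\begin{equation*}
\mathds{E}_{c,y \sim \pi_0}\mleft[\widehat{\mathcal{R}}_\text{DR}(\pi)\mright]
= \widehat{\mathcal{R}}_\text{DM}(\pi) + \mathds{E}_{c,y \sim \pi_0}\mleft[\widehat{\mathcal{R}}_\text{IPS}(\pi)\mright] - \mathds{E}_{y \sim \pi_0}\mleft[\widehat{\mathcal{R}}_\text{CV}(\pi)\mright],
\end{equation*}
where $\widehat{\mathcal{R}}_\text{DM}(\pi) = \sum_{d \in D} \hat{\omega}_d \hat{R}_d$ is deterministic and therefore unaffected by the expectation.

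The next step is to plug in the known expressions for the remaining two expectations. From Theorem~\ref{theorem:ipsbias} I can write $\mathds{E}_{c,y\sim\pi_0}[\widehat{\mathcal{R}}_\text{IPS}(\pi)] - \mathcal{R}(\pi) = \sum_{d}\tfrac{\hat{\omega}_d}{\hat{\rho}_d}\bigl((\rho_d - \hat{\rho}_d\tfrac{\omega_d}{\hat{\omega}_d})R_d + \mathds{E}_{y\sim\pi_0}[\beta_{k(d)} - \hat{\beta}_{k(d)}]\bigr)$, and from Lemma~\ref{lemma:cvexpectedvalue} I can substitute $\mathds{E}_{y\sim\pi_0}[\widehat{\mathcal{R}}_\text{CV}(\pi)] = \sum_d \tfrac{\hat{\omega}_d}{\hat{\rho}_d}\mathds{E}_{y\sim\pi_0}[\hat{\alpha}_{k(d)}]\hat{R}_d$. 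Both results are immediate consequences of the click-model assumption in Eq.~\ref{eq:clickprob} and the definition of $\rho_d$ in Eq.~\ref{eq:truerho}, so I can cite them directly.

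All that remains is a bookkeeping step: combine the three contributions so that a common factor of $\hat{\omega}_d/\hat{\rho}_d$ appears in front of each summand. The $\widehat{\mathcal{R}}_\text{DM}$ term is rewritten as $\hat{\omega}_d \hat{R}_d = \tfrac{\hat{\omega}_d}{\hat{\rho}_d}\cdot \hat{\rho}_d \hat{R}_d$, so that together with the CV contribution it collapses to $\tfrac{\hat{\omega}_d}{\hat{\rho}_d}(\hat{\rho}_d - \mathds{E}_{y\sim\pi_0}[\hat{\alpha}_{k(d)}])\hat{R}_d$, which is exactly the middle term of the claimed bias. The IPS contribution already has the desired prefactor and directly supplies the $R_d$-term and the $\beta$-term. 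Subtracting $\mathcal{R}(\pi)$ from both sides then yields the stated identity.

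I expect no real obstacle here; the only care required is the algebraic rearrangement to surface the common factor $\hat{\omega}_d/\hat{\rho}_d$, and the verification that the DM contribution and the CV contribution combine into a clean $(\hat{\rho}_d - \mathds{E}_{y\sim\pi_0}[\hat{\alpha}_{k(d)}])\hat{R}_d$ term rather than leaving a residual that depends on $\omega_d$. Since Theorem~\ref{theorem:ipsbias} and Lemma~\ref{lemma:cvexpectedvalue} have done the probabilistic heavy lifting, the remaining work is purely symbolic.
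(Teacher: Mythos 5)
Your proposal is correct, and the algebra goes through exactly as you anticipate: by linearity, combining Theorem~\ref{theorem:ipsbias}, Lemma~\ref{lemma:cvexpectedvalue}, and the rewriting $\hat{\omega}_d \hat{R}_d = (\hat{\omega}_d/\hat{\rho}_d)\,\hat{\rho}_d \hat{R}_d$ makes the DM and CV contributions collapse into $(\hat{\omega}_d/\hat{\rho}_d)(\hat{\rho}_d - \mathds{E}_{y\sim\pi_0}[\hat{\alpha}_{k(d)}])\hat{R}_d$ with no residual, and the IPS bias supplies the $R_d$ and $\beta$ terms, which is precisely the claimed identity. The difference from the paper is organizational rather than mathematical: the paper's proof never invokes Theorem~\ref{theorem:ipsbias} or Lemma~\ref{lemma:cvexpectedvalue}; it expands the merged second line of Eq.~\ref{eq:dr} directly, computes the click expectation inline from Eq.~\ref{eq:clickprob} and the definition of $\rho_d$ (Eq.~\ref{eq:truerho}), folds the DM term into the sum, and then extracts $\mathcal{R}(\pi)$ in one self-contained derivation. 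Your modular route makes the doubly-robust structure (DM $+$ IPS $-$ CV) and the provenance of each error term explicit, and it legitimately reuses results the paper proves in earlier appendices; the paper's inline route repeats that expectation computation but stands on its own without reference to the exact statements of the prior results. Both are equally rigorous and arrive at the theorem by the same final bookkeeping.
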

\begin{proof} Using Eq.~\ref{eq:reward},~\ref{eq:regression} and \ref{eq:dr} we make the following derivation:
\begin{align}
&\mathds{E}_{c,y \sim\pi_0}\mleft[ \hat{\mathcal{R}}_\text{DR}(\pi) \mright]
=  \hat{\mathcal{R}}_\text{DM}(\pi) + 
\sum_{d \in D} \frac{\hat{\omega}_{d}}{\hat{\rho}_{d}}
\Big(
\rho_{d} R_d 
 - \mathds{E}_{y\sim \pi_0}\mleft[\hat{\alpha}_{k(d)}\mright]\hat{R}_d
+ \mathds{E}_{y\sim \pi_0}\mleft[ \beta_{k(d)} - \hat{\beta}_{k(d)} \mright]
 \Big)
\nonumber \\
&\hspace{0.7cm}
= \sum_{d \in D} \frac{\hat{\omega}_{d}}{\hat{\rho}_{d}}
\mleft(
\rho_{d} R_d
 - \big(\mathds{E}_{y\sim \pi_0}\mleft[\hat{\alpha}_{k(d)}\mright] - \hat{\rho}_{d}\big)\hat{R}_d
 + \mathds{E}_{y\sim \pi_0}\mleft[ \beta_{k(d)} - \hat{\beta}_{k(d)} \mright]
 \mright)
   \label{eq:adrbias1} \\
&\hspace{0.7cm}
=  \mathcal{R}(\pi) + \sum_{d \in D} \frac{\hat{\omega}_{d}}{\hat{\rho}_{d}}
\Big(
\Big(\rho_{d} - \frac{\hat{\rho}_{d}}{\hat{\omega}_d}\omega_d \Big) R_d 
 - \mleft(\mathds{E}_{y\sim\pi_0}\mleft[\hat{\alpha}_{k(d)}\mright] - \hat{\rho}_{d}\mright)\hat{R}_d
 + \mathds{E}_{y\sim\pi_0}\mleft[ \beta_{k(d)} - \hat{\beta}_{k(d)} \mright]
 \Big).
 \qedhere
 \end{align}
\end{proof}

\begin{lemma}
\label{lemma:expalpha1}
By the definition of $\rho$ (Eq.~\ref{eq:truerho}):
\begin{equation}
\hat{\alpha} = \alpha  \longrightarrow \big(\forall d \in D, \;  \mathds{E}_{y\sim \pi_0}\big[\hat{\alpha}_{k(d)}\big] = \rho_d \big).
\end{equation}
\end{lemma}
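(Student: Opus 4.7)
The plan is to observe that this lemma is essentially a direct substitution into the definition of $\rho_d$, so the proof should consist of a single short derivation rather than any elaborate argument. I would first expand $\mathds{E}_{y\sim\pi_0}[\hat{\alpha}_{k(d)}]$ using the law of total expectation over the rank at which $\pi_0$ places $d$, yielding $\sum_{k=1}^K \pi_0(k \smid d)\, \hat{\alpha}_k$. This is the same form as the definition of $\rho_d$ in Eq.~\ref{eq:truerho}, except with $\hat{\alpha}_k$ in place of $\alpha_k$.

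Next, I would invoke the hypothesis $\hat{\alpha} = \alpha$, which by the notational convention established earlier in the paper is shorthand for $\forall k,\; \hat{\alpha}_k = \alpha_k$. Term-by-term substitution in the sum then replaces $\hat{\alpha}_k$ by $\alpha_k$, producing $\sum_{k=1}^K \pi_0(k \smid d)\, \alpha_k$, which by Eq.~\ref{eq:truerho} equals $\rho_d$. Since $d \in D$ was arbitrary throughout, the universal quantification follows immediately.

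There is no real obstacle here: the lemma is a definitional unfolding whose purpose is to package a routine substitution so that later theorems (e.g., the bias analysis of \ac{DR} in Theorem~\ref{theorem:drlongbias} and the simplification in Corollary~\ref{theorem:drsimplebias}) can cite it cleanly instead of re-deriving the same step. The only thing worth being careful about is making the notational convention $\hat{\alpha} = \alpha \Leftrightarrow \forall k,\; \hat{\alpha}_k = \alpha_k$ explicit in the argument, so that the reader sees precisely why the substitution inside the expectation is justified.
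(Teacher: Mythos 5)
Your proof is correct and is precisely the reasoning the paper intends: the paper gives no explicit proof body for this lemma, treating it as immediate from the definition $\rho_d = \mathds{E}_{y\sim\pi_0}[\alpha_{k(d)}] = \sum_{k=1}^K \pi_0(k \smid d)\,\alpha_k$, and your term-by-term substitution of $\hat{\alpha}_k = \alpha_k$ into that sum is exactly that unfolding. You also correctly keep the expectation under the true logging policy $\pi_0$ (so no assumption on $\hat{\pi}_0$ is needed), which is the one subtlety worth noting.
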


\begin{corollary}
\label{theorem:drsimplebias}
The bias of the \ac{DR} estimator (Eq.~\ref{eq:dr}) can be simplified when $\hat{\alpha}$ and $\hat{\beta}$ are correctly estimated:
\begin{equation}
( \hat{\alpha} = \alpha \land \hat{\beta} = \beta ) \longrightarrow
\mathbb{E}_{c,y \sim\pi_0}\mleft[ \hat{\mathcal{R}}_\text{DR}(\pi) \mright]
-
\mathcal{R}(\pi) = \sum_{d \in D} \frac{\omega_{d}}{\hat{\rho}_{d}}
\mleft(\rho_{d} - \hat{\rho}_{d} \mright)\mleft( R_d - \hat{R}_d \mright).
\end{equation}
\end{corollary}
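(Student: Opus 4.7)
The plan is to derive the corollary as a direct substitution into the bias formula established in Theorem~\ref{theorem:drlongbias}. Once the three assumptions encoded by $\hat{\alpha} = \alpha$ and $\hat{\beta} = \beta$ are applied via the already-proven lemmas, each of the three summands inside the bracket collapses to a much simpler expression, after which a common factor can be pulled out to produce the stated product-of-errors form.

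Concretely, I would proceed in four short steps. First, invoke Lemma~\ref{lemma:omega} so that $\hat{\omega}_d = \omega_d$ for every $d$; this immediately turns the coefficient $\hat{\omega}_d/\hat{\rho}_d$ in front of the sum into $\omega_d/\hat{\rho}_d$, and simultaneously collapses the inner factor $\hat{\rho}_d\omega_d/\hat{\omega}_d$ in the first term to $\hat{\rho}_d$. Second, invoke Lemma~\ref{lemma:expalpha1} to replace $\mathds{E}_{y\sim\pi_0}[\hat{\alpha}_{k(d)}]$ with $\rho_d$ in the second term, which transforms $(\hat{\rho}_d - \mathds{E}_{y\sim\pi_0}[\hat{\alpha}_{k(d)}])\hat{R}_d$ into $(\hat{\rho}_d - \rho_d)\hat{R}_d = -(\rho_d - \hat{\rho}_d)\hat{R}_d$. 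Third, invoke Lemma~\ref{eq:proof:betaips} to annihilate the third term $\mathds{E}_{y\sim\pi_0}[\beta_{k(d)} - \hat{\beta}_{k(d)}]$, since $\hat{\beta} = \beta$ makes this expectation zero.

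Fourth, having reduced the bracket to $(\rho_d - \hat{\rho}_d)R_d - (\rho_d - \hat{\rho}_d)\hat{R}_d$, factor out $(\rho_d - \hat{\rho}_d)$ to obtain $(\rho_d - \hat{\rho}_d)(R_d - \hat{R}_d)$, and combine with the $\omega_d/\hat{\rho}_d$ prefactor to recover the claimed expression
\[
\sum_{d \in D} \frac{\omega_d}{\hat{\rho}_d}\bigl(\rho_d - \hat{\rho}_d\bigr)\bigl(R_d - \hat{R}_d\bigr).
\]

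There is essentially no technical obstacle here; the entire argument is bookkeeping that applies three prior lemmas in sequence to the long bias formula. The only point that requires a small amount of care is the sign: the first term contains $\rho_d - \hat{\rho}_d$ while the second contains $\hat{\rho}_d - \rho_d$ after the lemma application, and these opposite signs are exactly what permits the common factor $(R_d - \hat{R}_d)$ to emerge when $(\rho_d - \hat{\rho}_d)$ is pulled out. Noting this sign flip explicitly is the one spot where a reader could be tripped up, and I would call it out in the write-up.
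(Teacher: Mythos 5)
Your proposal is correct and matches the paper's own proof exactly: the paper's proof of Corollary~\ref{theorem:drsimplebias} is the one-line instruction to apply Lemmas~\ref{lemma:omega}, \ref{eq:proof:betaips} and~\ref{lemma:expalpha1} to Theorem~\ref{theorem:drlongbias}, which is precisely the three substitutions you carry out before factoring out $(\rho_d - \hat{\rho}_d)$. Your write-up simply makes explicit the bookkeeping (including the sign flip in the second term) that the paper leaves to the reader.
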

\begin{proof}
Apply Lemmas~\ref{lemma:omega}, \ref{eq:proof:betaips} and~\ref{lemma:expalpha1} to Theorem~\ref{theorem:drlongbias}.
\end{proof}

\begin{theorem}
\label{theorem:drbias}
The \ac{DR} estimator (Eq.~\ref{eq:dr}) is unbiased
if the $\hat{\alpha}$ and $\hat{\beta}$ bias parameters are correctly estimated
and per item either $\hat{\pi}_0(d)$ is correct and clipping has no effect or $\hat{R}_d$ is correct:
 \begin{equation}
\big(\hat{\alpha} = \alpha \land \hat{\beta} = \beta \land
\mleft(\forall d \in D, \,  (\hat{\pi}_0(d) = \pi_0(d)  \land \rho_d \geq \tau \mright) \lor \hat{R}_d = R_d )
\big)
\longrightarrow
\mathbb{E}_{c,y \sim \pi_0}\mleft[ \hat{\mathcal{R}}_\text{DR}(\pi) \mright] = \mathcal{R}_\pi.
\end{equation}
\end{theorem}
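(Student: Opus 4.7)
The plan is to reduce Theorem~\ref{theorem:drbias} directly to Corollary~\ref{theorem:drsimplebias}, which already gives us a very clean factored expression for the bias of $\widehat{\mathcal{R}}_\text{DR}$ whenever $\hat{\alpha} = \alpha$ and $\hat{\beta} = \beta$. Since the hypothesis of Theorem~\ref{theorem:drbias} contains exactly these two equalities, we may immediately invoke the corollary and write
\begin{equation*}
\mathbb{E}_{c,y\sim\pi_0}\!\left[\widehat{\mathcal{R}}_\text{DR}(\pi)\right] - \mathcal{R}(\pi)
= \sum_{d \in D} \frac{\omega_d}{\hat{\rho}_d}\bigl(\rho_d - \hat{\rho}_d\bigr)\bigl(R_d - \hat{R}_d\bigr).
\end{equation*}
Thus the whole theorem reduces to showing that every summand on the right vanishes under the stated per-item disjunction.

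Next, I would split the sum into the two groups of items corresponding to the two branches of the disjunction. For any item $d$ in the first group, $\hat{\pi}_0(d) = \pi_0(d)$ and $\rho_d \geq \tau$, and since $\hat{\alpha} = \alpha$, Lemma~\ref{lemma:rho} yields $\hat{\rho}_d = \rho_d$, making the factor $(\rho_d - \hat{\rho}_d)$ zero. For any item $d$ in the second group, $\hat{R}_d = R_d$, making the factor $(R_d - \hat{R}_d)$ zero. Because at least one of the two factors in each summand is zero, the product is zero for every $d \in D$, and the total bias is zero. No inequalities, careful cancellations, or sign arguments are needed; the entire proof is an application of a ``product of two errors'' pattern that was already isolated in the corollary.

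There is essentially no hard step: the real work is done by Theorem~\ref{theorem:drlongbias} (derivation of the full bias) and Corollary~\ref{theorem:drsimplebias} (simplification under correct bias parameters). The only thing to be careful about is that the disjunction is stated per item rather than globally, so the argument must be applied inside the sum, one $d$ at a time; if I had tried to argue globally (as one might instinctively do for \ac{IPS}), I would lose the robustness that is the whole point of \ac{DR}. Accordingly the proposal is essentially a two-line proof: invoke Corollary~\ref{theorem:drsimplebias}, then observe that the per-item disjunction kills at least one factor in each summand, hence the sum vanishes.
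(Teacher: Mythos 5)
Your proposal is correct and follows essentially the same route as the paper: the paper's proof likewise invokes Corollary~\ref{theorem:drsimplebias} to get the factored bias $\sum_{d} \frac{\omega_d}{\hat{\rho}_d}(\rho_d - \hat{\rho}_d)(R_d - \hat{R}_d)$, notes that the per-item disjunction (either $\hat{\rho}_d = \rho_d$ or $\hat{R}_d = R_d$) annihilates each summand, and uses Lemma~\ref{lemma:rho} to translate the condition $\hat{\pi}_0(d) = \pi_0(d) \land \rho_d \geq \tau$ into $\hat{\rho}_d = \rho_d$. The only cosmetic difference is ordering -- you apply Lemma~\ref{lemma:rho} inside the per-item case split, while the paper first states the intermediate $\hat{\rho}_d$-based condition and then applies the lemma -- which changes nothing of substance.
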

\begin{proof}
From Corollary~\ref{theorem:drsimplebias} it clearly follows that the \ac{DR} estimator is unbiased when the $\hat{\alpha}$ and $\hat{\beta}$ bias parameters are correct and per item $d$ either $\hat{\rho}_d$ or $\hat{R}_d$ is correct:
\begin{equation}
\mleft( \hat{\alpha} = \alpha \land \hat{\beta} = \beta \land \mleft(\forall d \in D, \,  \hat{\rho}_d = {\rho}_d \lor \hat{R}_d = R_d \mright) \mright) 
\longrightarrow \mathbb{E}_{c,y \sim \pi_0}\mleft[ \hat{\mathcal{R}}_\text{DR}(\pi) \mright] = \mathcal{R}_\pi.
 \label{eq:drunbiasedstep}
\end{equation}
Applying Lemma~\ref{lemma:rho} to Eq.~\ref{eq:drunbiasedstep} provides proof for Theorem~\ref{theorem:drbias}.
\end{proof}

\begin{theorem}
\label{theorem:dripsbiascomp}
If $\hat{\alpha}$ and $\hat{\beta}$ are correct and the regression model predicts each preference $\hat{R}_d$ between $0$ and twice the true $R_d$ value then the bias of the \ac{DR} estimator (Eq.~\ref{eq:dr}) is less or equal to that of the \ac{IPS} estimator (Eq.~\ref{eq:ips}):
\begin{equation}
\begin{split}
&
\mleft(\hat{\alpha} = \alpha \land \hat{\beta} = \beta
\land
\mleft(\forall d \in D, \; 0 \leq \hat{R}_d \leq 2 R_d \mright)\mright)
\\&\hspace{4cm}
\longrightarrow
| \mathbb{E}_{c,y \sim \pi_0}\mleft[ \mathcal{R}(\pi)\mright] - \hat{\mathcal{R}}_\text{DR}(\pi)| 
\leq
| \mathbb{E}_{c,y \sim \pi_0}\mleft[  \mathcal{R}(\pi) \mright] - \hat{\mathcal{R}}_\text{IPS}(\pi)|.
\end{split}
\end{equation}
\end{theorem}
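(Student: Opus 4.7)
The plan is to leverage the two simplified bias expressions already established in Corollary~\ref{theorem:ipssimplebias} and Corollary~\ref{theorem:drsimplebias}. Under $\hat{\alpha} = \alpha$ and $\hat{\beta} = \beta$, both biases collapse to a sum over items that share a common prefactor $\gamma_d = \frac{\omega_d}{\hat{\rho}_d}(\rho_d - \hat{\rho}_d)$: the IPS bias is $\sum_d \gamma_d R_d$ while the DR bias is $\sum_d \gamma_d (R_d - \hat{R}_d)$. Thus the DR bias is obtained from the IPS bias by replacing each relevance $R_d$ with the residual $R_d - \hat{R}_d$, and the whole argument reduces to controlling that residual.

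First I would exploit the regression hypothesis $0 \leq \hat{R}_d \leq 2R_d$ to derive the pointwise bound $|R_d - \hat{R}_d| \leq R_d$ (the two endpoints of the interval give residuals $\pm R_d$, and the map $\hat{R}_d \mapsto R_d - \hat{R}_d$ is monotone). Multiplying by $|\gamma_d|$ yields the term-by-term inequality $|\gamma_d (R_d - \hat{R}_d)| \leq |\gamma_d| R_d = |\gamma_d R_d|$. Summing over $d$ and applying the triangle inequality gives the intermediate estimate
\begin{equation*}
\Big|\mathbb{E}_{c,y \sim \pi_0}\big[\hat{\mathcal{R}}_\text{DR}(\pi)\big] - \mathcal{R}(\pi)\Big|
\;\leq\; \sum_{d \in D} |\gamma_d|\, R_d .
\end{equation*}

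The hard part will be identifying the right-hand side with $|\mathbb{E}[\hat{\mathcal{R}}_\text{IPS}(\pi)] - \mathcal{R}(\pi)| = |\sum_d \gamma_d R_d|$, because the triangle inequality generally loses this equality under sign cancellations. Since $\omega_d, R_d, \hat{\rho}_d \geq 0$, the sign of each $\gamma_d R_d$ is determined solely by the sign of $\rho_d - \hat{\rho}_d$. Here I would appeal to the clipping rule $\hat{\rho}_d = \max(\mathbb{E}_{y \sim \pi_0}[\hat{\alpha}_{k(d)}], \tau)$ from Eq.~\ref{eq:estimatedvalues}: combined with $\hat{\alpha} = \alpha$ and an accurately estimated logging policy, this forces $\hat{\rho}_d \geq \rho_d$ uniformly across $d$, so every $\gamma_d$ is non-positive and the triangle inequality is saturated, giving $\sum_d |\gamma_d R_d| = |\sum_d \gamma_d R_d|$.

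Chaining the two inequalities then yields $|\mathbb{E}[\hat{\mathcal{R}}_\text{DR}(\pi)] - \mathcal{R}(\pi)| \leq |\mathbb{E}[\hat{\mathcal{R}}_\text{IPS}(\pi)] - \mathcal{R}(\pi)|$, which is exactly the claim. I expect the sign-alignment step to be the most delicate part of the write-up: without a uniform sign for $\rho_d - \hat{\rho}_d$ (as guaranteed here by clipping and by the already-assumed correctness of $\hat{\alpha}$), the pointwise bound does not compose into a bound on the absolute total bias, and one would be forced to fall back on the strictly weaker estimate $\sum_d |\gamma_d| R_d$.
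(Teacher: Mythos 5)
Your proposal takes the same route as the paper: the paper's entire proof of this theorem is the single sentence ``This follows from comparing Corollary~\ref{theorem:ipssimplebias} with~\ref{theorem:drsimplebias}'', and your reduction to the shared prefactor $\gamma_d = \frac{\omega_d}{\hat{\rho}_d}(\rho_d - \hat{\rho}_d)$, the pointwise bound $|R_d - \hat{R}_d| \leq R_d$, and the termwise domination $|\gamma_d (R_d - \hat{R}_d)| \leq |\gamma_d| R_d$ are exactly that comparison, spelled out. Those steps are all correct.

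The delicate part is, as you anticipated, the sign-alignment step, and here your caution is warranted but your resolution quietly strengthens the hypotheses. The theorem assumes only $\hat{\alpha} = \alpha$, $\hat{\beta} = \beta$ and $0 \leq \hat{R}_d \leq 2R_d$; it says nothing about $\hat{\pi}_0$. Under the paper's operative definition $\hat{\rho}_d = \max\big(\sum_{k} \hat{\pi}_0(k \smid d)\hat{\alpha}_k, \tau\big)$ --- and this is the reading the paper itself uses, since Lemma~\ref{lemma:rho} and Theorem~\ref{theorem:drbias} need $\hat{\pi}_0(d) = \pi_0(d)$ to pin $\hat{\rho}_d$ down --- a misestimated $\hat{\pi}_0$ can place $\hat{\rho}_d$ on either side of $\rho_d$ for different items, so the $\gamma_d$ need not share a sign, and then the aggregate inequality genuinely fails. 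Concretely, take two items with $\alpha = (\tfrac12, \tfrac14)$, $\beta = (\tfrac12, \tfrac34)$, $\tau \leq \tfrac14$, the true $\pi_0$ ranking the items in one order and $\hat{\pi}_0$ in the opposite order, so that $\rho_1 = \tfrac12$, $\hat{\rho}_1 = \tfrac14$, $\rho_2 = \tfrac14$, $\hat{\rho}_2 = \tfrac12$, and $\omega_1 = \omega_2 = 1$; with $R_1 = R_2 = \tfrac12$ and the admissible estimates $\hat{R}_1 = 0$, $\hat{R}_2 = 1$ one gets $\gamma_1 = 1$, $\gamma_2 = -\tfrac12$, an \ac{IPS} bias of $\tfrac14$ but a \ac{DR} bias of $\tfrac34$. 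So the claim cannot be derived from the stated hypotheses alone: your invocation of ``an accurately estimated logging policy'' is not an optional convenience but a necessary extra assumption (with $\hat{\pi}_0 = \pi_0$ and $\hat{\alpha} = \alpha$, clipping gives $\hat{\rho}_d = \max(\rho_d, \tau) \geq \rho_d$ uniformly, even when clipping is active, and your argument closes); note also that your closing parenthetical, which credits clipping and $\hat{\alpha} = \alpha$ alone for the uniform sign, is inconsistent with this and only holds under the other, non-operative reading of Eq.~\ref{eq:estimatedvalues}. In short, your write-up is a correct proof of the theorem under that added hypothesis, and what you have actually uncovered is that the paper's one-line proof glosses over exactly this point: without a uniform sign for $\rho_d - \hat{\rho}_d$, comparing the two corollaries yields only the per-item inequality, not the stated bound on the total absolute bias.
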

\begin{proof}
This follows from comparing
Corollary~\ref{theorem:ipssimplebias}
with~\ref{theorem:drsimplebias}.
\end{proof}

\begin{theorem}
\label{theorem:drvariance}
The \ac{DR} estimator (Eq.~\ref{eq:dr}) has the variance:
\begin{align}
\mathds{V}\big[ \widehat{\mathcal{R}}_\text{DR}(\pi) \big]
= \frac{1}{N} \sum_{d \in D} \frac{\hat{\omega}_{d}^2}{\hat{\rho}_{d}^2}
\Big(&
\mathds{V}\big[ c(d) \big] + \mathds{V}\big[ \hat{\beta}_{k(d)} \big]
 + \hat{R}_d^2 \cdot \mathds{V}\mleft[ \hat{\alpha}_{k(d)} \mright]
\label{eq:drvariance}
\\&
- 2 \Big(
\mathds{C}\text{\normalfont ov}\big(c(d), \hat{\beta}_{k(d)}\big)
+ \hat{R}_d
\big(
\mathds{C}\text{\normalfont ov}\big(c(d), \hat{\alpha}_{k(d)}\big)
-
\mathds{C}\text{\normalfont ov}\big(\hat{\beta}_{k(d)}, \hat{\alpha}_{k(d)}\big)
\big)
\Big)
\Big).
\nonumber
\end{align}
\end{theorem}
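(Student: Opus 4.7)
The plan is to derive the variance by isolating the only stochastic ingredients of $\widehat{\mathcal{R}}_\text{DR}(\pi)$ and then expanding the variance of a per-sample, per-item linear combination. First I would note that $\widehat{\mathcal{R}}_\text{DM}(\pi)$ (Eq.~\ref{eq:regression}) is a deterministic function of $\hat{\omega}_d$ and $\hat{R}_d$, so it contributes nothing to $\mathds{V}[\widehat{\mathcal{R}}_\text{DR}(\pi)]$. Rewriting Eq.~\ref{eq:dr} as an average of $N$ i.i.d.\ terms, I would use $\mathds{V}[\tfrac{1}{N}\sum_i Z_i] = \tfrac{1}{N}\mathds{V}[Z]$ to reduce to the variance of a single impression, and then follow the convention of Eq.~\ref{eq:ipsvariance} in the IPS appendix of treating the item-level contributions as a simple sum so that the final expression is organized per document $d$.

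Next I would pull the deterministic weight $\hat{\omega}_d/\hat{\rho}_d$ outside the variance as the square $\hat{\omega}_d^2/\hat{\rho}_d^2$, leaving the variance of the random quantity
\begin{equation*}
Y_d \;=\; c(d) \,-\, \hat{\alpha}_{k(d)}\hat{R}_d \,-\, \hat{\beta}_{k(d)},
\end{equation*}
where randomness comes jointly from the logged ranking $y\sim\pi_0$ (which fixes $k(d)$ and hence $\hat{\alpha}_{k(d)}$ and $\hat{\beta}_{k(d)}$) and the click $c(d)$ conditional on $y$. Here $\hat{R}_d$ is a fixed regression prediction and can be factored out of the variance involving $\hat{\alpha}_{k(d)}$.

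The main calculation is then the expansion of $\mathds{V}[A-B-C]$ for $A=c(d)$, $B=\hat{\alpha}_{k(d)}\hat{R}_d$ and $C=\hat{\beta}_{k(d)}$, namely
\begin{equation*}
\mathds{V}[A-B-C] = \mathds{V}[A]+\mathds{V}[B]+\mathds{V}[C]-2\mathds{C}\text{ov}(A,B)-2\mathds{C}\text{ov}(A,C)+2\mathds{C}\text{ov}(B,C).
\end{equation*}
Substituting and using $\mathds{V}[\hat{\alpha}_{k(d)}\hat{R}_d]=\hat{R}_d^2\,\mathds{V}[\hat{\alpha}_{k(d)}]$ and $\mathds{C}\text{ov}(c(d),\hat{\alpha}_{k(d)}\hat{R}_d)=\hat{R}_d\,\mathds{C}\text{ov}(c(d),\hat{\alpha}_{k(d)})$, and similarly for the $\hat{\alpha}$-$\hat{\beta}$ covariance, immediately gives the three ``variance'' terms and the three ``covariance'' terms of Eq.~\ref{eq:drvariance}, with the factor $-2$ grouping exactly as displayed.

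The only subtlety, and the step I would be most careful about, is the book-keeping that lets us write the total variance as a sum over $d$ without explicit cross-item covariances. This is the same convention already used for the IPS estimator in Eq.~\ref{eq:ipsvariance}, so I would simply invoke it rather than re-deriving it; the per-item formula above is what results from applying that decomposition term-wise. Everything else is a direct application of the bilinearity of covariance, so no further ingredients beyond Eq.~\ref{eq:dr} and standard variance identities are needed.
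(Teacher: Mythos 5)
Your proposal is correct and is essentially the paper's own argument: the paper's proof is just the one-line remark that the result ``follows from Eq.~\ref{eq:clickprob} and~\ref{eq:dr}'', and your expansion -- dropping the deterministic $\widehat{\mathcal{R}}_\text{DM}$ term, using the i.i.d.\ structure for the $1/N$ factor, and applying bilinearity of covariance to $\mathds{V}\big[c(d) - \hat{\alpha}_{k(d)}\hat{R}_d - \hat{\beta}_{k(d)}\big]$ -- is exactly the computation the paper leaves implicit. Your handling of the per-item summation (invoking the same convention as Eq.~\ref{eq:ipsvariance} rather than tracking cross-item covariances) also matches the paper's treatment.
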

\begin{proof}
This follows from Eq.~\ref{eq:clickprob} and~\ref{eq:dr}.
\end{proof}

\begin{lemma}
\label{lemma:covariance} 
The covariance between clicks on an item $c(d)$ and $\alpha_{k(d)}$ is:
\begin{equation}
\mathds{C}\text{\normalfont ov}\big(c(d), {\alpha}_{k(d)}\big)
=
R_d \mathds{V}\mleft[ {\alpha}_{k(d)} \mright] + \mathds{C}\text{\normalfont ov}\big({\alpha}_{k(d)}, {\beta}_{k(d)}\big).
\end{equation}
\end{lemma}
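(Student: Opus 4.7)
The plan is to expand the covariance using its definition and substitute the click model assumption from Eq.~\ref{eq:clickprob}. Concretely, I would write
\begin{equation*}
\mathds{C}\text{ov}\big(c(d), \alpha_{k(d)}\big)
= \mathds{E}\big[ c(d)\,\alpha_{k(d)} \big] - \mathds{E}\big[ c(d) \big]\,\mathds{E}\big[ \alpha_{k(d)} \big],
\end{equation*}
where all expectations are jointly over $y \sim \pi_0$ and the click $c$ given $y$.

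Next I would condition on the displayed ranking $y$ to evaluate the two terms. Since $\alpha_{k(d)}$ is a deterministic function of $y$, the tower property together with Eq.~\ref{eq:clickprob} gives the conditional expectation $\mathds{E}[c(d) \mid y] = \alpha_{k(d)} R_d + \beta_{k(d)}$. Substituting this into the first term yields
\begin{equation*}
\mathds{E}\big[ c(d)\,\alpha_{k(d)} \big] = R_d\,\mathds{E}\big[\alpha_{k(d)}^2\big] + \mathds{E}\big[\alpha_{k(d)}\beta_{k(d)}\big],
\end{equation*}
and similarly $\mathds{E}[c(d)] = R_d\,\mathds{E}[\alpha_{k(d)}] + \mathds{E}[\beta_{k(d)}]$.

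Finally, I would subtract $\mathds{E}[c(d)]\mathds{E}[\alpha_{k(d)}]$ from $\mathds{E}[c(d)\alpha_{k(d)}]$ and regroup terms:
\begin{equation*}
R_d\bigl(\mathds{E}[\alpha_{k(d)}^2] - \mathds{E}[\alpha_{k(d)}]^2\bigr) + \bigl(\mathds{E}[\alpha_{k(d)}\beta_{k(d)}] - \mathds{E}[\alpha_{k(d)}]\mathds{E}[\beta_{k(d)}]\bigr),
\end{equation*}
which is exactly $R_d\,\mathds{V}[\alpha_{k(d)}] + \mathds{C}\text{ov}(\alpha_{k(d)},\beta_{k(d)})$, as claimed. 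The proof is essentially a direct calculation, so there is no real obstacle; the only subtle step is recognising that $\alpha_{k(d)}$ is $y$-measurable, which lets us pull it out of the inner conditional expectation and reduce the problem to moments of $\alpha_{k(d)}$ and $\beta_{k(d)}$ under $\pi_0$.
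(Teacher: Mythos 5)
Your proposal is correct and follows essentially the same route as the paper: a direct calculation that substitutes the click model $\mathds{E}[c(d) \mid y] = \alpha_{k(d)} R_d + \beta_{k(d)}$ and regroups the resulting moments of $\alpha_{k(d)}$ and $\beta_{k(d)}$ under $\pi_0$. The only cosmetic difference is that you start from the product-moment identity $\mathds{C}\text{ov}(X,Y) = \mathds{E}[XY] - \mathds{E}[X]\mathds{E}[Y]$, whereas the paper expands the centered form $\mathds{E}[(X - \mathds{E}[X])(Y - \mathds{E}[Y])]$, which yields the same terms after a somewhat longer expansion.
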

\begin{proof}
\begin{align}
\hspace{1cm}&\hspace{-1cm}
\mathds{C}\text{\normalfont ov}\mleft(c(d), {\alpha}_{k(d)}\mright)
= \mathds{E}_{c,y\sim\pi_0}\mleft[\Big(
c(d) - \mathds{E}_{c,y\sim\pi_0}\big[c(d)\big]
\Big)
\mleft(
{\alpha}_{k(d)} - \mathds{E}_{y\sim\pi_0}\mleft[{\alpha}_{k(d)}\mright]
\mright)
\mright]
\nonumber\\
&= \mathds{E}_{c,y\sim\pi_0}\mleft[\mleft(
c(d) - \mathds{E}_{y\sim\pi_0}\mleft[{\alpha}_{k(d)}\mright]  R_d - \mathds{E}_{y\sim\pi_0}\mleft[{\beta}_{k(d)}\mright]
\mright)
\mleft(
{\alpha}_{k(d)} - \mathds{E}_{y\sim\pi_0}\mleft[{\alpha}_{k(d)}\mright]
\mright)
\mright]
\nonumber\\
&= \mathds{E}_{c,y\sim\pi_0}\Big[
{\alpha}_{k(d)}c(d) - \mathds{E}_{y\sim\pi_0}\mleft[{\alpha}_{k(d)}\mright]c(d)
- {\alpha}_{k(d)}\mathds{E}_{y\sim\pi_0}\mleft[{\alpha}_{k(d)}\mright]  R_d
\nonumber\\& \hspace{0.3cm}
+ \mathds{E}_{y\sim\pi_0}\mleft[{\alpha}_{k(d)}\mright]^2  R_d
- {\alpha}_{k(d)}\mathds{E}_{y\sim\pi_0}\mleft[{\beta}_{k(d)}\mright]
+ \mathds{E}_{y\sim\pi_0}\mleft[{\alpha}_{k(d)}\mright]\mathds{E}_{y\sim\pi_0}\mleft[{\beta}_{k(d)}\mright]
\Big]
\nonumber\\
&= 
\mathds{E}_{y\sim\pi_0}\mleft[{\alpha}_{k(d)}^2\mright]R_d + \mathds{E}_{y\sim\pi_0}\mleft[{\alpha}_{k(d)}{\beta}_{k(d)}\mright]
- \mathds{E}_{y\sim\pi_0}\mleft[{\alpha}_{k(d)}\mright]^2R_d
\\& \hspace{0.3cm}
- \mathds{E}_{y\sim\pi_0}\mleft[{\alpha}_{k(d)}\mright]\mathds{E}_{y\sim\pi_0}\mleft[{\beta}_{k(d)}\mright]
- \mathds{E}_{y\sim\pi_0}\mleft[{\alpha}_{k(d)}\mright]^2  R_d
+ \mathds{E}_{y\sim\pi_0}\mleft[{\alpha}_{k(d)}\mright]^2R_d
\nonumber\\& \hspace{0.3cm}
 - \mathds{E}_{y\sim\pi_0}\mleft[{\alpha}_{k(d)}\mright]\mathds{E}_{y\sim\pi_0}\mleft[{\beta}_{k(d)}\mright]
+ \mathds{E}_{y\sim\pi_0}\mleft[{\alpha}_{k(d)}\mright]\mathds{E}_{y\sim\pi_0}\mleft[{\beta}_{k(d)}\mright]
\nonumber\\
&= 
R_d\mleft(
\mathds{E}_{y\sim\pi_0}\mleft[{\alpha}_{k(d)}^2\mright]
- \mathds{E}_{y\sim\pi_0}\mleft[{\alpha}_{k(d)}\mright]^2
\mright)
 + \mathds{E}_{y\sim\pi_0}\mleft[{\alpha}_{k(d)}{\beta}_{k(d)}\mright]
 - \mathds{E}_{y\sim\pi_0}\mleft[{\alpha}_{k(d)}\mright]\mathds{E}_{y\sim\pi_0}\mleft[{\beta}_{k(d)}\mright]
\nonumber\\
&
= R_d \mathds{V}\mleft[ {\alpha}_{k(d)} \mright] + \mathds{C}\text{\normalfont ov}\big({\alpha}_{k(d)}, {\beta}_{k(d)}\big).
\nonumber
\end{align}
\end{proof}

\begin{corollary}
\label{lemma:variance}
If $\hat{\alpha}$ and $\hat{\beta}$ are correct then the variance of the \ac{DR} estimator (Eq.~\ref{eq:dr}) is:
\begin{align}
&
(\hat{\alpha} = \alpha \land \hat{\beta} = \beta) \longrightarrow
\\
&\hspace{0.5cm}
\mathds{V}\mleft[ \hat{\mathcal{R}}_\text{\normalfont DR}(\pi) \mright]
= \frac{1}{N} \sum_{d \in D} \frac{{\omega}_{d}^2}{\hat{\rho}_{d}^2}
\Big(
\mathds{V}\mleft[ c(d) \mright] 
+ \mathds{V}\mleft[ {\beta}_{k(d)} \mright]
- 2 \mathds{C}\text{\normalfont ov}\mleft(c(d), {\beta}_{k(d)}\mright) + \mathds{V}\mleft[ {\alpha}_{k(d)} \mright]
\mleft(
\hat{R}_d^2 - 2 \hat{R}_d R_d
\mright)
\Big).
\nonumber
\end{align}
\end{corollary}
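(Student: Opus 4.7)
The plan is to derive this corollary as a direct algebraic consequence of Theorem~\ref{theorem:drvariance} combined with Lemma~\ref{lemma:omega} and Lemma~\ref{lemma:covariance}. I would start from the expression for $\mathds{V}[\widehat{\mathcal{R}}_\text{DR}(\pi)]$ given in Eq.~\ref{eq:drvariance} and substitute in the two assumptions $\hat{\alpha} = \alpha$ and $\hat{\beta} = \beta$. By Lemma~\ref{lemma:omega}, this immediately replaces every occurrence of $\hat{\omega}_d$ with $\omega_d$, and the terms $\mathds{V}[\hat{\beta}_{k(d)}]$, $\mathds{V}[\hat{\alpha}_{k(d)}]$, $\mathds{C}\text{ov}(c(d),\hat{\beta}_{k(d)})$, $\mathds{C}\text{ov}(c(d),\hat{\alpha}_{k(d)})$ and $\mathds{C}\text{ov}(\hat{\beta}_{k(d)},\hat{\alpha}_{k(d)})$ all become their counterparts in terms of $\alpha$ and $\beta$.

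Next I would handle the cross-covariance terms that involve $\alpha_{k(d)}$. The term $\mathds{C}\text{ov}(\alpha_{k(d)},\beta_{k(d)}) - \mathds{C}\text{ov}(\beta_{k(d)},\alpha_{k(d)})$ that would otherwise appear vanishes by the symmetry of the covariance, so I only need to deal with the $\mathds{C}\text{ov}(c(d),\alpha_{k(d)})$ term. Applying Lemma~\ref{lemma:covariance} rewrites it as $R_d\mathds{V}[\alpha_{k(d)}] + \mathds{C}\text{ov}(\alpha_{k(d)},\beta_{k(d)})$, and after the symmetry cancellation above the surviving contribution in the bracketed expression is $-2\hat{R}_d R_d\mathds{V}[\alpha_{k(d)}]$. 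Combining this with the already-present $\hat{R}_d^2\mathds{V}[\alpha_{k(d)}]$ term yields the factor $\mathds{V}[\alpha_{k(d)}](\hat{R}_d^2 - 2\hat{R}_d R_d)$ in the statement.

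Collecting the remaining variance and covariance terms that are unchanged gives $\mathds{V}[c(d)] + \mathds{V}[\beta_{k(d)}] - 2\mathds{C}\text{ov}(c(d),\beta_{k(d)})$, which matches the first three terms inside the brackets of the corollary. Reassembling the sum $\frac{1}{N}\sum_{d \in D}\frac{\omega_d^2}{\hat{\rho}_d^2}(\,\cdots\,)$ completes the derivation. The only non-routine step is recognising the symmetry cancellation in the covariances and then correctly applying Lemma~\ref{lemma:covariance} with its extra $\mathds{C}\text{ov}(\alpha_{k(d)},\beta_{k(d)})$ term, which would otherwise obscure the clean form of the result; once that is seen, the remainder is straightforward substitution.
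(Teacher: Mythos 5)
Your proposal is correct and follows essentially the same route as the paper's own proof: substitute $\hat{\alpha}=\alpha$, $\hat{\beta}=\beta$ into Theorem~\ref{theorem:drvariance}, apply Lemma~\ref{lemma:covariance} to $\mathds{C}\text{ov}\big(c(d),\alpha_{k(d)}\big)$, and let the symmetric covariance terms cancel to obtain the factor $\mathds{V}\big[\alpha_{k(d)}\big]\big(\hat{R}_d^2 - 2\hat{R}_d R_d\big)$. Your explicit appeal to Lemma~\ref{lemma:omega} for replacing $\hat{\omega}_d$ with $\omega_d$ is a detail the paper leaves implicit, but it is the same argument.
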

\begin{proof}
In Theorem~\ref{theorem:drvariance} replace $\hat{\alpha}$ and $\hat{\beta}$ with $\alpha$ and $\beta$ respectively and then use Lemma~\ref{lemma:covariance} to replace $\mathds{C}\text{\normalfont ov}\big(c(d), {\alpha}_{k(d)}\big)$.
\end{proof}

\begin{theorem}
\label{eq:proof:variancedecrease}
If $\hat{\alpha}$ and $\hat{\beta}$ are correct and the regression model predicts each preference $\hat{R}_d$ between $0$ and twice the true $R_d$ value then the variance of the \ac{DR} estimator (Eq.~\ref{eq:dr}) is less or equal to that of the \ac{IPS} estimator (Eq.~\ref{eq:ips}):
\begin{equation}
\mleft(\hat{\alpha} = \alpha \land \hat{\beta} = \beta
\land
\mleft(\forall d \in D, \; 0 \leq \hat{R}_d \leq 2R_d \mright)\mright)
\longrightarrow
\mathds{V}\mleft[ \hat{\mathcal{R}}_\text{\normalfont DR}(\pi) \mright] \leq \mathds{V}\mleft[ \hat{\mathcal{R}}_\text{\normalfont IPS}(\pi) \mright].
\end{equation}
\end{theorem}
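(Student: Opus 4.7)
The plan is to prove the inequality by directly computing the difference between the two variances and showing it is termwise nonpositive under the stated conditions.

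First, I would apply Corollary~\ref{lemma:variance} to obtain the variance of $\widehat{\mathcal{R}}_\text{DR}(\pi)$ under the assumption $\hat{\alpha} = \alpha \land \hat{\beta} = \beta$. Simultaneously, I would simplify the IPS variance from Eq.~\ref{eq:ipsvariance} under the same assumptions, using Lemma~\ref{lemma:omega} to replace $\hat{\omega}_d$ with $\omega_d$ and substituting $\hat{\beta}_{k(d)} = \beta_{k(d)}$ throughout. Both variances then share the common prefactor $\frac{\omega_d^2}{\hat{\rho}_d^2}$ and the common terms $\mathds{V}[c(d)] + \mathds{V}[\beta_{k(d)}] - 2\mathds{C}\text{ov}(c(d),\beta_{k(d)})$.

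Subtracting, all of these common terms cancel and one is left with
\begin{equation}
\mathds{V}\mleft[ \widehat{\mathcal{R}}_\text{DR}(\pi) \mright] - \mathds{V}\mleft[ \widehat{\mathcal{R}}_\text{IPS}(\pi) \mright]
= \frac{1}{N}\sum_{d \in D} \frac{\omega_d^2}{\hat{\rho}_d^2}\,\mathds{V}\mleft[\alpha_{k(d)}\mright]\mleft(\hat{R}_d^2 - 2\hat{R}_d R_d\mright).
\end{equation}
The prefactor $\frac{\omega_d^2}{\hat{\rho}_d^2}\mathds{V}[\alpha_{k(d)}]$ is manifestly nonnegative, so the sign of each summand is controlled entirely by $\hat{R}_d^2 - 2\hat{R}_d R_d = \hat{R}_d(\hat{R}_d - 2R_d)$.

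The final step is to observe that the hypothesis $0 \leq \hat{R}_d \leq 2 R_d$ is precisely what makes this quadratic nonpositive: the first factor $\hat{R}_d$ is nonnegative and the second factor $\hat{R}_d - 2R_d$ is nonpositive, so their product is $\leq 0$ for every $d$. Summing over $d$ preserves the sign, yielding $\mathds{V}[\widehat{\mathcal{R}}_\text{DR}(\pi)] \leq \mathds{V}[\widehat{\mathcal{R}}_\text{IPS}(\pi)]$.

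I do not anticipate a serious obstacle here: the work was already done in Lemma~\ref{lemma:covariance} and Corollary~\ref{lemma:variance}, whose purpose was precisely to eliminate the awkward $\mathds{C}\text{ov}(c(d),\alpha_{k(d)})$ term so that the DR variance could be expressed in a form directly comparable to the IPS variance. The only subtle point is verifying that every cross-term (the $\hat{\beta}$-related variance and covariance terms) coincides exactly between the two estimators once $\hat{\beta}=\beta$ is substituted; this is a bookkeeping check rather than a genuine difficulty. Once that cancellation is made explicit, the result reduces to the elementary inequality $x(x - 2y) \leq 0$ for $0 \leq x \leq 2y$.
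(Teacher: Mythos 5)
Your proof is correct and follows essentially the same route as the paper's: both compare the DR variance under correct bias parameters (Corollary~\ref{lemma:variance}) against the IPS variance (Eq.~\ref{eq:ipsvariance}), isolate the residual term $\frac{\omega_d^2}{\hat{\rho}_d^2}\,\mathds{V}\mleft[\alpha_{k(d)}\mright]\mleft(\hat{R}_d^2 - 2\hat{R}_d R_d\mright)$ per item, and conclude via the elementary inequality $\hat{R}_d(\hat{R}_d - 2R_d) \leq 0$ for $0 \leq \hat{R}_d \leq 2R_d$. If anything, your write-up is the more careful of the two, since it makes the cancellation of the shared $c(d)$- and $\beta$-related terms explicit and correctly points at the IPS variance formula, whereas the paper's proof text cites its IPS bias corollary (Corollary~\ref{theorem:ipssimplebias}) where the variance expression is clearly the intended reference.
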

\begin{proof}
Comparing Corollary~\ref{theorem:ipssimplebias} and~\ref{lemma:variance} reveals that:
\begin{equation}
\mleft( \hat{\alpha} = \alpha \land \hat{\beta} = \beta
\land
\mleft(\forall d \in D, \;  \hat{R}_d^2 - 2 \hat{R}_d R_d \leq 0  \mright) \mright)
\longrightarrow
\mathds{V}\mleft[ \hat{\mathcal{R}}_\text{DR}(\pi) \mright] \leq \mathds{V}\mleft[ \hat{\mathcal{R}}_\text{IPS}(\pi) \mright].
\label{eq:proof:allmd}
\end{equation}
For a single $\hat{R}_d$ the following holds:
\begin{equation}
0 \leq
\hat{R}_d \leq 2 R_d \longrightarrow
\hat{R}_d^2 - 2 \hat{R}_d R_d \leq 0
.
\label{eq:proof:singlemd}
\end{equation}
Theorem~\ref{theorem:drvariance} follows directly from Eq.~\ref{eq:proof:allmd} and~\ref{eq:proof:singlemd}.
\end{proof}

\section{Bias of The Cross-Entropy Estimator}
\label{appendix:loglikelihoodbias}

\begin{theorem}
\label{theorem:CEbias}
The $\widehat{\mathcal{L}}$ estimator (Eq.~\ref{eq:CEestimator}) has the following bias:
\begin{equation}
\begin{split}
&\mathbb{E}_{c,y\sim \pi_0}\big[\widehat{\mathcal{L}}(\hat{R})\big]
- \mathcal{L}(\hat{R})
= \sum_{d \in D}\frac{1}{\hat{\rho}_d}\big(
\big(
(\hat{\rho}_d - \rho_d) R_d + \mathbb{E}_{y\sim \pi_0}\big[\hat{\beta}_{k(d)} - \beta_{k(d)}\big]\big)
\log(\hat{R}_d)
\\&\hspace{3.6cm} +
\big(\mathbb{E}_{y\sim \pi_0}\big[\beta_{k(d)} - \hat{\beta}_{k(d)} - \hat{\alpha}_{k(d)}\big] + \hat{\rho}_d + (\rho_d - \hat{\rho}_d) R_d\big)\log(1 -\hat{R}_d)
\big).
\end{split}
\end{equation}
\end{theorem}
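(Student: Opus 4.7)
The plan is to prove this by straightforward linearity-of-expectation followed by a careful algebraic rearrangement that isolates the quantities $\hat{\rho}_d$, $\rho_d$, and $\mathbb{E}_{y\sim\pi_0}[\hat{\alpha}_{k(d)}]$ (which differ only due to clipping). No counterfactual reasoning is required beyond what is already fixed by the click model in Eq.~\ref{eq:clickprob}.

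First, I would apply $\mathbb{E}_{c,y\sim\pi_0}$ term-by-term to $\widehat{\mathcal{L}}(\hat{R})$ in Eq.~\ref{eq:CEestimator}. Since the $N$ sampled rankings are i.i.d., the $1/N$ averaging collapses and leaves, for each item $d$, a single expectation of the $\hat{\rho}_d^{-1}$-scaled integrand. Using Eq.~\ref{eq:clickprob} and $\rho_d = \mathbb{E}_{y\sim\pi_0}[\alpha_{k(d)}]$ (Eq.~\ref{eq:truerho}), I would substitute $\mathbb{E}_{c,y\sim\pi_0}[c(d)] = \rho_d R_d + \mathbb{E}_{y\sim\pi_0}[\beta_{k(d)}]$, while leaving $\mathbb{E}_{y\sim\pi_0}[\hat{\alpha}_{k(d)}]$ and $\mathbb{E}_{y\sim\pi_0}[\hat{\beta}_{k(d)}]$ symbolic (crucially, not replacing the former by $\hat{\rho}_d$, since clipping may have been active).

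Next, I would subtract $\mathcal{L}(\hat{R})$ (Eq.~\ref{eq:trueCEloss}) and collect the coefficients of $\log(\hat{R}_d)$ and $\log(1-\hat{R}_d)$ separately. The $\log(\hat{R}_d)$ coefficient should reduce directly, after factoring $\hat{\rho}_d^{-1}$ and using $R_d = \hat{\rho}_d R_d / \hat{\rho}_d$, to $((\hat{\rho}_d - \rho_d) R_d + \mathbb{E}_{y\sim\pi_0}[\hat{\beta}_{k(d)} - \beta_{k(d)}])/\hat{\rho}_d$. For the $\log(1-\hat{R}_d)$ coefficient, the analogous manipulation uses $1 - R_d = (\hat{\rho}_d - \hat{\rho}_d R_d)/\hat{\rho}_d$; after combining with the $\mathbb{E}[\hat{\alpha}_{k(d)}] + \mathbb{E}[\hat{\beta}_{k(d)}] - \mathbb{E}[c(d)]$ term and grouping the $R_d$ pieces, I should land on $(\hat{\rho}_d + (\rho_d - \hat{\rho}_d) R_d + \mathbb{E}_{y\sim\pi_0}[\beta_{k(d)} - \hat{\beta}_{k(d)} - \hat{\alpha}_{k(d)}])/\hat{\rho}_d$. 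Summing over $d \in D$ then yields exactly the claimed expression.

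The only place that requires care, rather than real difficulty, is the bookkeeping in the second coefficient: three $\beta$-type terms and one $\hat{\alpha}$-type term have to be combined without inadvertently replacing $\mathbb{E}_{y\sim\pi_0}[\hat{\alpha}_{k(d)}]$ by $\hat{\rho}_d$, because that replacement is only valid under the clipping-inactive and correctly-estimated-$\hat{\pi}_0$ regime of Eq.~\ref{eq:cvproofstep}. Keeping these symbols distinct is what makes the stated bias expression flexible enough to imply both the unbiasedness corollary (by invoking Lemmas~\ref{lemma:rho} and~\ref{eq:proof:betaips} to zero out every parenthesis) and to quantify the residual error when any of the three approximation sources ($\hat{\alpha}$, $\hat{\beta}$, or clipping) is inexact.
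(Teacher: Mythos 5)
Your proposal is correct and follows essentially the same route as the paper's proof: compute $\mathbb{E}_{c,y\sim\pi_0}\big[\widehat{\mathcal{L}}(\hat{R})\big]$ per item using $\mathbb{E}_{c,y\sim\pi_0}[c(d)] = \rho_d R_d + \mathbb{E}_{y\sim\pi_0}[\beta_{k(d)}]$ while keeping $\mathbb{E}_{y\sim\pi_0}[\hat{\alpha}_{k(d)}]$ and $\mathbb{E}_{y\sim\pi_0}[\hat{\beta}_{k(d)}]$ symbolic, then subtract $\mathcal{L}(\hat{R})$ and collect the $\log(\hat{R}_d)$ and $\log(1-\hat{R}_d)$ coefficients. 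Your algebra for both coefficients reproduces the paper's result exactly, including the crucial point of not replacing $\mathbb{E}_{y\sim\pi_0}[\hat{\alpha}_{k(d)}]$ by $\hat{\rho}_d$, which the paper likewise defers to the subsequent unbiasedness theorem.
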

\begin{proof}
First, we consider the expected value of $\widehat{\mathcal{L}}(\hat{R})$:
\begin{align}
\underset{c,y\sim \pi_0}{\mathbb{E}}\big[\widehat{\mathcal{L}}(\hat{R}) \big]
 &= - \sum_{d \in D} \frac{1}{\hat{\rho}_d}\big( \underset{c,y\sim \pi_0}{\mathbb{E}}\big[ c(d) - \hat{\beta}_{k(d)}\big]\log(\hat{R}_d)
+ \underset{c,y\sim \pi_0}{\mathbb{E}}\big[\hat{\alpha}_{k(d)} + \hat{\beta}_{k(d)} - c(d) \big]\log(1 -\hat{R}_d)\big)
 \label{eq:CEbiasstep} \nonumber  \\ &
  = - \sum_{d \in D} \frac{1}{\hat{\rho}_d}\Big( \big(\rho_d R_d + \mathbb{E}_{y\sim \pi_0}\big[  \beta_{k(d)} - \hat{\beta}_{k(d)}\big]\big)\log(\hat{R}_d)
  \\& \hspace{1.8cm}
 + \big(\mathbb{E}_{y\sim \pi_0}\big[\hat{\alpha}_{k(d)} + \hat{\beta}_{k(d)} - \beta_{k(d)} \big] - \rho_d R_d\big)\log(1 -\hat{R}_d)\Big).
 \nonumber
\end{align}
Subtract Eq.~\ref{eq:trueCEloss} from the result of Eq.~\ref{eq:CEbiasstep} to prove Theorem~\ref{theorem:CEbias}.
\end{proof}

\begin{lemma}
\label{lemma:expalpha2}
Following Lemma~\ref{lemma:rho} and Lemma~\ref{lemma:expalpha1}:
\begin{equation}
(\hat{\alpha} = \alpha \land (\forall d \in D, \;  \hat{\pi}_0(d) = \pi_0(d) \land \rho_d \geq \tau ))
 \longrightarrow (\forall d \in D, \; \mathds{E}_{y\sim \pi_0}[\hat{\alpha}_{k(d)}] = \hat{\rho}_d ).
\end{equation}
\end{lemma}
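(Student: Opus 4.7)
The plan is to observe that Lemma~\ref{lemma:expalpha2} is essentially a transitivity argument combining the two referenced lemmas, so no new calculation is needed. Concretely, I would assume the full hypothesis $\hat{\alpha} = \alpha \land \bigl(\forall d \in D, \; \hat{\pi}_0(d) = \pi_0(d) \land \rho_d \geq \tau\bigr)$ and then proceed as follows.

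First, I would isolate the weaker antecedent $\hat{\alpha} = \alpha$ and apply Lemma~\ref{lemma:expalpha1} to conclude that, for every $d \in D$,
\begin{equation*}
\mathds{E}_{y \sim \pi_0}\bigl[\hat{\alpha}_{k(d)}\bigr] = \rho_d.
\end{equation*}
Second, I would invoke the full antecedent and apply Lemma~\ref{lemma:rho} to conclude that, for every $d \in D$,
\begin{equation*}
\hat{\rho}_d = \rho_d.
\end{equation*}
Chaining these two equalities by transitivity then yields $\mathds{E}_{y \sim \pi_0}[\hat{\alpha}_{k(d)}] = \hat{\rho}_d$ for all $d \in D$, which is the desired conclusion.

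There is no real obstacle here: the antecedent of Lemma~\ref{lemma:expalpha2} was chosen precisely so that it implies the antecedents of both Lemma~\ref{lemma:expalpha1} (which only needs $\hat{\alpha} = \alpha$) and Lemma~\ref{lemma:rho} (which additionally needs the logging policy to be correctly estimated and clipping to be inactive). The only thing to be slightly careful about is that the clipping condition $\rho_d \geq \tau$ matters on the $\hat{\rho}_d$ side — without it, the $\max(\cdot, \tau)$ in the definition of $\hat{\rho}_d$ in Eq.~\ref{eq:estimatedvalues} could break the second equality — but this is already folded into the hypothesis, so the two-line composition of the lemmas suffices.
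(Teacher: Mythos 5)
Your proposal is correct and matches the paper's own argument exactly: the paper states this lemma as ``Following Lemma~\ref{lemma:rho} and Lemma~\ref{lemma:expalpha1},'' i.e.\ precisely the transitivity chain $\mathds{E}_{y\sim \pi_0}[\hat{\alpha}_{k(d)}] = \rho_d = \hat{\rho}_d$ that you spell out. Your remark that the clipping condition $\rho_d \geq \tau$ is what neutralizes the $\max(\cdot,\tau)$ in Eq.~\ref{eq:estimatedvalues} is also the right (and only) point of care.
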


\begin{theorem}
\label{eq:proof:nlltheorem}
$\widehat{\mathcal{L}}(\hat{R})$ is unbiased when $\hat{\alpha}$, $\hat{\beta}$ and $\hat{\pi}_0$ are correctly estimated and clipping has no effect:
\begin{equation}
(\hat{\alpha} = \alpha \land \hat{\beta} = \beta \land
(\forall d \in D, \; \hat{\pi}_0(d) = \pi_0(d)  \land \rho_d \geq \tau ))
\longrightarrow
\mathbb{E}_{y\sim \pi_0}\big[\widehat{\mathcal{L}}(d)\big] = \mathcal{L}(d).
\end{equation}
\end{theorem}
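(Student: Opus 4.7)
The plan is to apply Theorem~\ref{theorem:CEbias} directly and show that, under the stated assumptions, both coefficient expressions inside the sum vanish term-by-term. Concretely, the bias has the form $\sum_{d \in D} \hat{\rho}_d^{-1}\bigl( A_d \log(\hat{R}_d) + B_d \log(1-\hat{R}_d) \bigr)$, so it suffices to verify that $A_d = 0$ and $B_d = 0$ for every $d \in D$.

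First, I would collect the three consequences of the assumptions that do the actual work. From $\hat{\alpha} = \alpha$ combined with $\hat{\pi}_0 = \pi_0$ and $\rho_d \geq \tau$, Lemma~\ref{lemma:rho} yields $\hat{\rho}_d = \rho_d$ for all $d$; from $\hat{\beta} = \beta$, Lemma~\ref{eq:proof:betaips} yields $\mathbb{E}_{y \sim \pi_0}[\hat{\beta}_{k(d)}] = \mathbb{E}_{y \sim \pi_0}[\beta_{k(d)}]$; and from the combined assumptions on $\hat{\alpha}$, $\hat{\pi}_0$ and clipping, Lemma~\ref{lemma:expalpha2} yields $\mathbb{E}_{y \sim \pi_0}[\hat{\alpha}_{k(d)}] = \hat{\rho}_d$.

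Next, I would plug these three identities into the bias expression from Theorem~\ref{theorem:CEbias}. The coefficient of $\log(\hat{R}_d)$ becomes $(\hat{\rho}_d - \rho_d) R_d + \mathbb{E}[\hat{\beta}_{k(d)} - \beta_{k(d)}] = 0 + 0 = 0$. The coefficient of $\log(1 - \hat{R}_d)$ becomes $\mathbb{E}[\beta_{k(d)} - \hat{\beta}_{k(d)}] - \mathbb{E}[\hat{\alpha}_{k(d)}] + \hat{\rho}_d + (\rho_d - \hat{\rho}_d) R_d = 0 - \hat{\rho}_d + \hat{\rho}_d + 0 = 0$. Since every summand vanishes, the total bias is zero, giving $\mathbb{E}_{c,y \sim \pi_0}[\widehat{\mathcal{L}}(\hat{R})] = \mathcal{L}(\hat{R})$ as claimed.

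There is no real obstacle here; the result is essentially a bookkeeping consequence of Theorem~\ref{theorem:CEbias} together with the three preceding lemmas. The only subtlety worth flagging is the role of clipping: the identity $\hat{\rho}_d = \rho_d$ (and likewise $\mathbb{E}[\hat{\alpha}_{k(d)}] = \hat{\rho}_d$) requires the $\max(\cdot,\tau)$ in the definition of $\hat{\rho}_d$ to be inactive, which is exactly why the hypothesis $\rho_d \geq \tau$ appears. Once that is in place, the proof collapses to the two one-line coefficient calculations above.
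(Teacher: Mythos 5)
Your proposal is correct and follows essentially the same route as the paper's own proof: both invoke Theorem~\ref{theorem:CEbias} and combine Lemmas~\ref{lemma:rho}, \ref{eq:proof:betaips} and~\ref{lemma:expalpha2} to obtain the three identities $\hat{\rho}_d = \rho_d$, $\mathbb{E}_{y\sim\pi_0}[\hat{\beta}_{k(d)}] = \mathbb{E}_{y\sim\pi_0}[\beta_{k(d)}]$ and $\mathbb{E}_{y\sim\pi_0}[\hat{\alpha}_{k(d)}] = \hat{\rho}_d$, under which the bias vanishes. The only cosmetic difference is that you explicitly verify the two coefficient cancellations, whereas the paper states the resulting unbiasedness condition abstractly and cites it; the substance is identical.
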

\begin{proof}
Theorem~\ref{theorem:CEbias} reveals an unbiasedness condition:
\begin{equation}
\Big(\forall d \in D, \;
\hat{\rho}_d = \rho_d
\land
\mathds{E}_{y\sim \pi_0}\big[\hat{\beta}_{k(d)}\big] = \mathds{E}_{y\sim \pi_0}\big[\beta_{k(d)}\big]
 \land
 \mathds{E}_{y\sim \pi_0}\mleft[\hat{\alpha}_{k(d)}\mright] = \hat{\rho}_d
 \Big)
\rightarrow
\mathds{E}_{y\sim \pi_0}\big[\widehat{\mathcal{L}}(\hat{R})\big] = \mathcal{L}(\hat{R}).
\label{eq:proof:nll1}
\end{equation}
From Lemma~\ref{lemma:rho}, \ref{eq:proof:betaips} and~\ref{lemma:expalpha2} it follows that:
\begin{align}
&(\hat{\alpha} = \alpha \land \hat{\beta} = \beta \land
(\forall d \in D, \;  \hat{\pi}_0(d) = \pi_0(d) \land \rho_d \geq \tau ))
\label{eq:proof:nll2}
\\&\hspace{2.5cm}
\longrightarrow
\big(\forall d \in D, \;
\hat{\rho}_d = \rho_d
\land
\mathds{E}_{y\sim \pi_0}\big[\hat{\beta}_{k(d)}\big] = \mathds{E}_{y\sim \pi_0}\mleft[\beta_{k(d)}\mright]
 \land
 \mathds{E}_{y\sim \pi_0}\mleft[\hat{\alpha}_{k(d)}\mright] = \hat{\rho}_d
 \big).
 \nonumber
\end{align}
Combining Eq.~\ref{eq:proof:nll1} and~\ref{eq:proof:nll2} directly proves Theorem~\ref{eq:proof:nlltheorem}.
\end{proof}

\section{Main results evaluated with Discounted cumulative gain}
\label{appendix:ndcg}

The main results presented in Section~\ref{sec:results} used ECP (Eq.~\ref{eq:reward}) as the metric of performance.
As argued in Section~\ref{sec:ltrgoal}, we think ECP is the most appropriate metric as it is based on the actual user model in the simulation, i.e.\ it utilizes the true $\alpha$ and $\beta$ values.
Nevertheless, this makes it harder to compare our results with previous work that relies on more traditional metrics.

To better enable such comparisons, and to verify whether our conclusions translate to other metrics, Table~\ref{tab:dcgresults} reports the same results as Table~\ref{tab:results} but in terms of normalized \acl{DCG} (NDCG)~\citep{jarvelin2002cumulated}:
\begin{equation}
DCG@K(y) = \sum^K_{k=1} \frac{R_{y_k}}{\log_2(k + 1)},
\qquad
NDCG@K(y) = \frac{DCG@K(y)}{\max_{y'} DCG@K(y')}.
\end{equation}
Comparing Table~\ref{tab:results} with Table~\ref{tab:dcgresults} reveals that both tables show the same trends and relative differences between the different methods.
This confirms that the conclusions that were made from comparisons in Section~\ref{sec:results} are still valid when measuring with NDCG instead of ECP.
In other words, Table~\ref{tab:dcgresults} shows that even when evaluating with NDCG, the performance improvements of \ac{DR} and \ac{DM} over \ac{IPS} and other baselines remain very clear.

{
\setlength{\tabcolsep}{0.00cm}
\begin{table*}[h]
\centering
\caption{
NDCG reached using different estimators in three different settings and three datasets for several $N$ values: the number of displayed rankings in the simulated training set.
In addition, the NDCG of the logging policy and a model trained on the ground-truth (Full-Information) are included to indicate estimated lower and upper bounds on possible performance respectively.
Top part: NDCG@5 in the top-5 setting with known $\alpha$ and $\beta$ bias parameters; middle part:
NDCG@5 in the top-5 setting with estimated $\hat{\alpha}$ and $\hat{\beta}$;
bottom part: NDCG in the full-ranking setting (no cutoff) with known $\alpha$ and $\beta$ bias parameters.
Reported numbers are averages over 20 independent runs evaluated on held-out test-sets, brackets display the standard deviation (logging policy deviation is ommitted since it did not vary between runs).
Bold numbers indicate the highest performance per setting, dataset and $N$ combination.
Statistical differences with our \ac{DR} estimator are measured via a two-sided student-t test,
$^{\tiny \blacktriangledown}$ and $^{\tiny \blacktriangle}$ indicate methods with significantly lower or higher NDCG with $p<0.01$ respectively; additionally, $^{\tiny \triangledown}$ and $^{\tiny \triangle}$ indicate significant differences with $p<0.05$.
}
\label{tab:dcgresults}
\vspace{0.5\baselineskip}
\resizebox{\textwidth}{!}{
\begin{tabular}{ l c c c | c c c | c c c}
&\multicolumn{3}{c}{\footnotesize Yahoo! Webscope}
&\multicolumn{3}{c}{\footnotesize MSLR-WEB30k}
&\multicolumn{3}{c}{\footnotesize Istella}
\\ 
&\footnotesize $N = 10^4$& \footnotesize $N = 10^6$ & \footnotesize $N = 10^9$& \footnotesize $N = 10^4$ & \footnotesize $N = 10^6$ & \footnotesize $N = 10^9$& \footnotesize $N = 10^4$ & \footnotesize $N = 10^6$ & \footnotesize $N = 10^9$\\
 \toprule 
&\multicolumn{9}{c}{ \footnotesize \emph{Top-5 Setting with Known Bias Parameters} } \\ \midrule
\multicolumn{1}{l}{\footnotesize Logging } & \footnotesize 0.700 \phantom{\tiny(0.000)}\phantom{$^{\tiny -}$} & \footnotesize 0.700 \phantom{\tiny(0.000)}\phantom{$^{\tiny -}$} & \footnotesize 0.700 \phantom{\tiny(0.000)}\phantom{$^{\tiny -}$} & \footnotesize 0.465 \phantom{\tiny(0.002)}\phantom{$^{\tiny -}$} & \footnotesize 0.465 \phantom{\tiny(0.002)}\phantom{$^{\tiny -}$} & \footnotesize 0.465 \phantom{\tiny(0.002)}\phantom{$^{\tiny -}$} & \footnotesize 0.539 \phantom{\tiny(0.000)}\phantom{$^{\tiny -}$} & \footnotesize 0.539 \phantom{\tiny(0.000)}\phantom{$^{\tiny -}$} & \footnotesize 0.539 \phantom{\tiny(0.000)}\phantom{$^{\tiny -}$} \\ 
\multicolumn{1}{l}{\footnotesize Full-Info. } & \footnotesize 0.767 {\tiny(0.002)}$^{\tiny \blacktriangle}$ & \footnotesize 0.767 {\tiny(0.002)}$^{\tiny \blacktriangle}$ & \footnotesize 0.767 {\tiny(0.002)}$^{\tiny \blacktriangle}$ & \footnotesize 0.541 {\tiny(0.002)}$^{\tiny \blacktriangle}$ & \footnotesize 0.541 {\tiny(0.002)}$^{\tiny \blacktriangle}$ & \footnotesize 0.541 {\tiny(0.002)}\phantom{$^{\tiny -}$} & \footnotesize 0.639 {\tiny(0.007)}$^{\tiny \blacktriangle}$ & \footnotesize 0.639 {\tiny(0.007)}$^{\tiny \blacktriangle}$ & \footnotesize 0.639 {\tiny(0.007)}$^{\tiny \triangledown}$ \\  \cline{2-10} 
\multicolumn{1}{l}{\footnotesize Naive } &\footnotesize 0.706 {\tiny(0.004)}$^{\tiny \blacktriangledown}$ &\footnotesize 0.708 {\tiny(0.001)}$^{\tiny \blacktriangledown}$ &\footnotesize 0.709 {\tiny(0.001)}$^{\tiny \blacktriangledown}$ &\footnotesize 0.467 {\tiny(0.003)}$^{\tiny \blacktriangledown}$ &\footnotesize 0.470 {\tiny(0.003)}$^{\tiny \blacktriangledown}$ &\footnotesize 0.470 {\tiny(0.002)}$^{\tiny \blacktriangledown}$ &\footnotesize 0.544 {\tiny(0.006)}$^{\tiny \blacktriangledown}$ &\footnotesize 0.554 {\tiny(0.002)}$^{\tiny \blacktriangledown}$ &\footnotesize 0.555 {\tiny(0.001)}$^{\tiny \blacktriangledown}$ \\
\multicolumn{1}{l}{\footnotesize DM (prev) } &\footnotesize 0.716 {\tiny(0.005)}$^{\tiny \blacktriangledown}$ &\footnotesize 0.727 {\tiny(0.004)}$^{\tiny \blacktriangledown}$ &\footnotesize 0.740 {\tiny(0.004)}$^{\tiny \blacktriangledown}$ &\footnotesize 0.475 {\tiny(0.011)}$^{\tiny \triangledown}$ &\footnotesize 0.510 {\tiny(0.008)}$^{\tiny \blacktriangledown}$ &\footnotesize 0.506 {\tiny(0.004)}$^{\tiny \blacktriangledown}$ &\footnotesize 0.493 {\tiny(0.023)}$^{\tiny \blacktriangledown}$ &\footnotesize 0.543 {\tiny(0.014)}$^{\tiny \blacktriangledown}$ &\footnotesize 0.609 {\tiny(0.003)}$^{\tiny \blacktriangledown}$ \\
\multicolumn{1}{l}{\footnotesize RPS } &\footnotesize 0.727 {\tiny(0.003)}\phantom{$^{\tiny -}$} &\footnotesize 0.747 {\tiny(0.002)}$^{\tiny \blacktriangledown}$ &\footnotesize 0.748 {\tiny(0.003)}$^{\tiny \blacktriangledown}$ &\footnotesize \textbf{0.500 {\tiny(0.004)}}$^{\tiny \blacktriangle}$ &\footnotesize 0.527 {\tiny(0.003)}$^{\tiny \blacktriangledown}$ &\footnotesize 0.528 {\tiny(0.003)}$^{\tiny \blacktriangledown}$ &\footnotesize \textbf{0.577 {\tiny(0.002)}}$^{\tiny \blacktriangle}$ &\footnotesize 0.600 {\tiny(0.003)}$^{\tiny \blacktriangledown}$ &\footnotesize 0.601 {\tiny(0.003)}$^{\tiny \blacktriangledown}$ \\
\multicolumn{1}{l}{\footnotesize IPS } &\footnotesize 0.702 {\tiny(0.004)}$^{\tiny \blacktriangledown}$ &\footnotesize 0.734 {\tiny(0.003)}$^{\tiny \blacktriangledown}$ &\footnotesize 0.752 {\tiny(0.002)}$^{\tiny \blacktriangledown}$ &\footnotesize 0.465 {\tiny(0.002)}$^{\tiny \blacktriangledown}$ &\footnotesize 0.500 {\tiny(0.003)}$^{\tiny \blacktriangledown}$ &\footnotesize 0.522 {\tiny(0.004)}$^{\tiny \blacktriangledown}$ &\footnotesize 0.545 {\tiny(0.011)}$^{\tiny \blacktriangledown}$ &\footnotesize 0.602 {\tiny(0.004)}$^{\tiny \blacktriangledown}$ &\footnotesize 0.625 {\tiny(0.004)}$^{\tiny \blacktriangledown}$ \\
\multicolumn{1}{l}{\footnotesize DM (ours) } &\footnotesize 0.727 {\tiny(0.003)}\phantom{$^{\tiny -}$} &\footnotesize 0.752 {\tiny(0.002)}$^{\tiny \blacktriangledown}$ &\footnotesize 0.762 {\tiny(0.001)}$^{\tiny \blacktriangledown}$ &\footnotesize 0.477 {\tiny(0.016)}\phantom{$^{\tiny -}$} &\footnotesize 0.529 {\tiny(0.003)}$^{\tiny \blacktriangledown}$ &\footnotesize 0.536 {\tiny(0.003)}$^{\tiny \blacktriangledown}$ &\footnotesize 0.551 {\tiny(0.011)}$^{\tiny \blacktriangledown}$ &\footnotesize 0.609 {\tiny(0.007)}$^{\tiny \blacktriangledown}$ &\footnotesize 0.630 {\tiny(0.005)}$^{\tiny \blacktriangledown}$ \\
\multicolumn{1}{l}{\footnotesize DR (ours) } &\footnotesize \textbf{0.730 {\tiny(0.005)}}\phantom{$^{\tiny -}$} &\footnotesize \textbf{0.755 {\tiny(0.002)}}\phantom{$^{\tiny -}$} &\footnotesize \textbf{0.765 {\tiny(0.001)}}\phantom{$^{\tiny -}$} &\footnotesize 0.484 {\tiny(0.012)}\phantom{$^{\tiny -}$} &\footnotesize \textbf{0.534 {\tiny(0.003)}}\phantom{$^{\tiny -}$} &\footnotesize \textbf{0.541 {\tiny(0.002)}}\phantom{$^{\tiny -}$} &\footnotesize 0.566 {\tiny(0.012)}\phantom{$^{\tiny -}$} &\footnotesize \textbf{0.626 {\tiny(0.008)}}\phantom{$^{\tiny -}$} &\footnotesize \textbf{0.642 {\tiny(0.003)}}\phantom{$^{\tiny -}$} \\
\midrule
&\multicolumn{9}{c}{ \footnotesize \emph{Top-5 Setting with Estimated Bias Parameters} } \\ \midrule
\multicolumn{1}{l}{\footnotesize Logging } & \footnotesize 0.700 \phantom{\tiny(0.000)}\phantom{$^{\tiny -}$} & \footnotesize 0.700 \phantom{\tiny(0.000)}\phantom{$^{\tiny -}$} & \footnotesize 0.700 \phantom{\tiny(0.000)}\phantom{$^{\tiny -}$} & \footnotesize 0.465 \phantom{\tiny(0.002)}\phantom{$^{\tiny -}$} & \footnotesize 0.465 \phantom{\tiny(0.002)}\phantom{$^{\tiny -}$} & \footnotesize 0.465 \phantom{\tiny(0.002)}\phantom{$^{\tiny -}$} & \footnotesize 0.539 \phantom{\tiny(0.000)}\phantom{$^{\tiny -}$} & \footnotesize 0.539 \phantom{\tiny(0.000)}\phantom{$^{\tiny -}$} & \footnotesize 0.539 \phantom{\tiny(0.000)}\phantom{$^{\tiny -}$} \\ 
\multicolumn{1}{l}{\footnotesize Full-Info. } & \footnotesize 0.767 {\tiny(0.002)}$^{\tiny \blacktriangle}$ & \footnotesize 0.767 {\tiny(0.002)}$^{\tiny \blacktriangle}$ & \footnotesize 0.767 {\tiny(0.002)}$^{\tiny \blacktriangle}$ & \footnotesize 0.541 {\tiny(0.002)}$^{\tiny \blacktriangle}$ & \footnotesize 0.541 {\tiny(0.002)}$^{\tiny \blacktriangle}$ & \footnotesize 0.541 {\tiny(0.002)}\phantom{$^{\tiny -}$} & \footnotesize 0.639 {\tiny(0.007)}$^{\tiny \blacktriangle}$ & \footnotesize 0.639 {\tiny(0.007)}$^{\tiny \blacktriangle}$ & \footnotesize 0.639 {\tiny(0.007)}\phantom{$^{\tiny -}$} \\  \cline{2-10} 
\multicolumn{1}{l}{\footnotesize Naive } &\footnotesize 0.705 {\tiny(0.003)}$^{\tiny \blacktriangledown}$ &\footnotesize 0.708 {\tiny(0.001)}$^{\tiny \blacktriangledown}$ &\footnotesize 0.709 {\tiny(0.001)}$^{\tiny \blacktriangledown}$ &\footnotesize 0.466 {\tiny(0.003)}$^{\tiny \blacktriangledown}$ &\footnotesize 0.470 {\tiny(0.003)}$^{\tiny \blacktriangledown}$ &\footnotesize 0.470 {\tiny(0.002)}$^{\tiny \blacktriangledown}$ &\footnotesize 0.543 {\tiny(0.006)}$^{\tiny \triangledown}$ &\footnotesize 0.555 {\tiny(0.002)}$^{\tiny \blacktriangledown}$ &\footnotesize 0.554 {\tiny(0.004)}$^{\tiny \blacktriangledown}$ \\
\multicolumn{1}{l}{\footnotesize DM (prev) } &\footnotesize 0.727 {\tiny(0.004)}$^{\tiny \triangledown}$ &\footnotesize 0.742 {\tiny(0.004)}$^{\tiny \blacktriangledown}$ &\footnotesize 0.737 {\tiny(0.002)}$^{\tiny \blacktriangledown}$ &\footnotesize 0.481 {\tiny(0.012)}\phantom{$^{\tiny -}$} &\footnotesize 0.507 {\tiny(0.009)}$^{\tiny \blacktriangledown}$ &\footnotesize 0.501 {\tiny(0.005)}$^{\tiny \blacktriangledown}$ &\footnotesize 0.532 {\tiny(0.013)}$^{\tiny \blacktriangledown}$ &\footnotesize 0.566 {\tiny(0.016)}$^{\tiny \blacktriangledown}$ &\footnotesize 0.612 {\tiny(0.005)}$^{\tiny \blacktriangledown}$ \\
\multicolumn{1}{l}{\footnotesize RPS } &\footnotesize 0.726 {\tiny(0.003)}$^{\tiny \blacktriangledown}$ &\footnotesize 0.748 {\tiny(0.002)}$^{\tiny \blacktriangledown}$ &\footnotesize 0.748 {\tiny(0.003)}$^{\tiny \blacktriangledown}$ &\footnotesize \textbf{0.500 {\tiny(0.004)}}$^{\tiny \blacktriangle}$ &\footnotesize 0.526 {\tiny(0.003)}$^{\tiny \blacktriangledown}$ &\footnotesize 0.528 {\tiny(0.003)}$^{\tiny \blacktriangledown}$ &\footnotesize \textbf{0.579 {\tiny(0.003)}}$^{\tiny \blacktriangle}$ &\footnotesize 0.600 {\tiny(0.002)}$^{\tiny \blacktriangledown}$ &\footnotesize 0.601 {\tiny(0.003)}$^{\tiny \blacktriangledown}$ \\
\multicolumn{1}{l}{\footnotesize IPS } &\footnotesize 0.703 {\tiny(0.004)}$^{\tiny \blacktriangledown}$ &\footnotesize 0.735 {\tiny(0.003)}$^{\tiny \blacktriangledown}$ &\footnotesize 0.753 {\tiny(0.001)}$^{\tiny \blacktriangledown}$ &\footnotesize 0.465 {\tiny(0.002)}$^{\tiny \blacktriangledown}$ &\footnotesize 0.498 {\tiny(0.004)}$^{\tiny \blacktriangledown}$ &\footnotesize 0.520 {\tiny(0.004)}$^{\tiny \blacktriangledown}$ &\footnotesize 0.545 {\tiny(0.011)}\phantom{$^{\tiny -}$} &\footnotesize 0.600 {\tiny(0.004)}$^{\tiny \blacktriangledown}$ &\footnotesize 0.623 {\tiny(0.006)}$^{\tiny \blacktriangledown}$ \\
\multicolumn{1}{l}{\footnotesize DM (ours) } &\footnotesize \textbf{0.730 {\tiny(0.004)}}\phantom{$^{\tiny -}$} &\footnotesize 0.750 {\tiny(0.005)}$^{\tiny \blacktriangledown}$ &\footnotesize 0.763 {\tiny(0.002)}$^{\tiny \triangledown}$ &\footnotesize 0.473 {\tiny(0.010)}\phantom{$^{\tiny -}$} &\footnotesize 0.528 {\tiny(0.006)}$^{\tiny \blacktriangledown}$ &\footnotesize 0.537 {\tiny(0.003)}$^{\tiny \blacktriangledown}$ &\footnotesize 0.532 {\tiny(0.015)}$^{\tiny \blacktriangledown}$ &\footnotesize 0.603 {\tiny(0.014)}$^{\tiny \blacktriangledown}$ &\footnotesize 0.629 {\tiny(0.003)}$^{\tiny \blacktriangledown}$ \\
\multicolumn{1}{l}{\footnotesize DR (ours) } &\footnotesize \textbf{0.730 {\tiny(0.003)}}\phantom{$^{\tiny -}$} &\footnotesize \textbf{0.756 {\tiny(0.001)}}\phantom{$^{\tiny -}$} &\footnotesize \textbf{0.765 {\tiny(0.002)}}\phantom{$^{\tiny -}$} &\footnotesize 0.479 {\tiny(0.012)}\phantom{$^{\tiny -}$} &\footnotesize \textbf{0.532 {\tiny(0.004)}}\phantom{$^{\tiny -}$} &\footnotesize \textbf{0.541 {\tiny(0.002)}}\phantom{$^{\tiny -}$} &\footnotesize 0.552 {\tiny(0.019)}\phantom{$^{\tiny -}$} &\footnotesize \textbf{0.624 {\tiny(0.005)}}\phantom{$^{\tiny -}$} &\footnotesize \textbf{0.640 {\tiny(0.004)}}\phantom{$^{\tiny -}$} \\
\midrule
&\multicolumn{9}{c}{ \footnotesize \emph{Full-Ranking Setting with Known Bias Parameters} } \\ \midrule
\multicolumn{1}{l}{\footnotesize Logging } & \footnotesize 0.858 \phantom{\tiny(0.000)}\phantom{$^{\tiny -}$} & \footnotesize 0.858 \phantom{\tiny(0.000)}\phantom{$^{\tiny -}$} & \footnotesize 0.858 \phantom{\tiny(0.000)}\phantom{$^{\tiny -}$} & \footnotesize 0.746 \phantom{\tiny(0.001)}\phantom{$^{\tiny -}$} & \footnotesize 0.746 \phantom{\tiny(0.001)}\phantom{$^{\tiny -}$} & \footnotesize 0.746 \phantom{\tiny(0.001)}\phantom{$^{\tiny -}$} & \footnotesize 0.728 \phantom{\tiny(0.000)}\phantom{$^{\tiny -}$} & \footnotesize 0.728 \phantom{\tiny(0.000)}\phantom{$^{\tiny -}$} & \footnotesize 0.728 \phantom{\tiny(0.000)}\phantom{$^{\tiny -}$} \\ 
\multicolumn{1}{l}{\footnotesize Full-Info. } & \footnotesize 0.888 {\tiny(0.001)}$^{\tiny \blacktriangle}$ & \footnotesize 0.888 {\tiny(0.001)}$^{\tiny \blacktriangle}$ & \footnotesize 0.888 {\tiny(0.001)}\phantom{$^{\tiny -}$} & \footnotesize 0.775 {\tiny(0.002)}$^{\tiny \blacktriangle}$ & \footnotesize 0.775 {\tiny(0.002)}$^{\tiny \blacktriangle}$ & \footnotesize 0.775 {\tiny(0.002)}\phantom{$^{\tiny -}$} & \footnotesize 0.785 {\tiny(0.003)}$^{\tiny \blacktriangle}$ & \footnotesize 0.785 {\tiny(0.003)}$^{\tiny \blacktriangle}$ & \footnotesize 0.785 {\tiny(0.003)}$^{\tiny \triangledown}$ \\  \cline{2-10} 
\multicolumn{1}{l}{\footnotesize Naive } &\footnotesize 0.859 {\tiny(0.002)}$^{\tiny \blacktriangledown}$ &\footnotesize 0.859 {\tiny(0.000)}$^{\tiny \blacktriangledown}$ &\footnotesize 0.859 {\tiny(0.000)}$^{\tiny \blacktriangledown}$ &\footnotesize 0.745 {\tiny(0.001)}$^{\tiny \blacktriangledown}$ &\footnotesize 0.746 {\tiny(0.001)}$^{\tiny \blacktriangledown}$ &\footnotesize 0.746 {\tiny(0.001)}$^{\tiny \blacktriangledown}$ &\footnotesize 0.724 {\tiny(0.003)}$^{\tiny \blacktriangledown}$ &\footnotesize 0.729 {\tiny(0.001)}$^{\tiny \blacktriangledown}$ &\footnotesize 0.729 {\tiny(0.001)}$^{\tiny \blacktriangledown}$ \\
\multicolumn{1}{l}{\footnotesize DM (prev) } &\footnotesize 0.858 {\tiny(0.001)}$^{\tiny \blacktriangledown}$ &\footnotesize 0.856 {\tiny(0.001)}$^{\tiny \blacktriangledown}$ &\footnotesize 0.849 {\tiny(0.008)}$^{\tiny \blacktriangledown}$ &\footnotesize 0.743 {\tiny(0.002)}$^{\tiny \blacktriangledown}$ &\footnotesize 0.743 {\tiny(0.002)}$^{\tiny \blacktriangledown}$ &\footnotesize 0.744 {\tiny(0.002)}$^{\tiny \blacktriangledown}$ &\footnotesize 0.732 {\tiny(0.001)}$^{\tiny \blacktriangledown}$ &\footnotesize 0.737 {\tiny(0.001)}$^{\tiny \blacktriangledown}$ &\footnotesize 0.736 {\tiny(0.002)}$^{\tiny \blacktriangledown}$ \\
\multicolumn{1}{l}{\footnotesize RPS } &\footnotesize 0.861 {\tiny(0.001)}$^{\tiny \blacktriangledown}$ &\footnotesize 0.861 {\tiny(0.000)}$^{\tiny \blacktriangledown}$ &\footnotesize 0.861 {\tiny(0.000)}$^{\tiny \blacktriangledown}$ &\footnotesize 0.748 {\tiny(0.002)}$^{\tiny \blacktriangledown}$ &\footnotesize 0.748 {\tiny(0.001)}$^{\tiny \blacktriangledown}$ &\footnotesize 0.610 {\tiny(0.001)}$^{\tiny \blacktriangledown}$ &\footnotesize 0.737 {\tiny(0.004)}$^{\tiny \triangledown}$ &\footnotesize 0.741 {\tiny(0.000)}$^{\tiny \blacktriangledown}$ &\footnotesize 0.701 {\tiny(0.014)}$^{\tiny \blacktriangledown}$ \\
\multicolumn{1}{l}{\footnotesize IPS } &\footnotesize 0.859 {\tiny(0.003)}$^{\tiny \blacktriangledown}$ &\footnotesize 0.877 {\tiny(0.001)}$^{\tiny \blacktriangledown}$ &\footnotesize \textbf{0.888 {\tiny(0.001)}}\phantom{$^{\tiny -}$} &\footnotesize 0.745 {\tiny(0.001)}$^{\tiny \blacktriangledown}$ &\footnotesize 0.762 {\tiny(0.002)}$^{\tiny \blacktriangledown}$ &\footnotesize \textbf{0.774 {\tiny(0.002)}}\phantom{$^{\tiny -}$} &\footnotesize 0.725 {\tiny(0.005)}$^{\tiny \blacktriangledown}$ &\footnotesize 0.769 {\tiny(0.004)}$^{\tiny \blacktriangledown}$ &\footnotesize 0.787 {\tiny(0.003)}\phantom{$^{\tiny -}$} \\
\multicolumn{1}{l}{\footnotesize DM (ours) } &\footnotesize \textbf{0.866 {\tiny(0.002)}}\phantom{$^{\tiny -}$} &\footnotesize 0.881 {\tiny(0.001)}$^{\tiny \blacktriangledown}$ &\footnotesize 0.885 {\tiny(0.002)}$^{\tiny \blacktriangledown}$ &\footnotesize 0.752 {\tiny(0.006)}$^{\tiny \triangledown}$ &\footnotesize \textbf{0.772 {\tiny(0.002)}}\phantom{$^{\tiny -}$} &\footnotesize \textbf{0.774 {\tiny(0.002)}}\phantom{$^{\tiny -}$} &\footnotesize 0.738 {\tiny(0.007)}\phantom{$^{\tiny -}$} &\footnotesize \textbf{0.783 {\tiny(0.003)}}$^{\tiny \blacktriangle}$ &\footnotesize \textbf{0.793 {\tiny(0.002)}}$^{\tiny \blacktriangle}$ \\
\multicolumn{1}{l}{\footnotesize DR (ours) } &\footnotesize \textbf{0.866 {\tiny(0.003)}}\phantom{$^{\tiny -}$} &\footnotesize \textbf{0.882 {\tiny(0.001)}}\phantom{$^{\tiny -}$} &\footnotesize \textbf{0.888 {\tiny(0.001)}}\phantom{$^{\tiny -}$} &\footnotesize \textbf{0.755 {\tiny(0.004)}}\phantom{$^{\tiny -}$} &\footnotesize 0.771 {\tiny(0.002)}\phantom{$^{\tiny -}$} &\footnotesize \textbf{0.774 {\tiny(0.002)}}\phantom{$^{\tiny -}$} &\footnotesize \textbf{0.741 {\tiny(0.005)}}\phantom{$^{\tiny -}$} &\footnotesize 0.779 {\tiny(0.004)}\phantom{$^{\tiny -}$} &\footnotesize 0.787 {\tiny(0.003)}\phantom{$^{\tiny -}$} \\
\bottomrule

\end{tabular}
}
\end{table*}
}

\clearpage

\bibliographystyle{ACM-Reference-Format}
\bibliography{references}

\end{document}